\documentclass{article} 


\usepackage{amsmath,amsfonts,bm}









\def\eqref#1{equation~\ref{#1}}









\def\1{\bm{1}}










\DeclareMathAlphabet{\mathsfit}{\encodingdefault}{\sfdefault}{m}{sl}
\SetMathAlphabet{\mathsfit}{bold}{\encodingdefault}{\sfdefault}{bx}{n}











\newcommand{\KL}{D_{\mathrm{KL}}}



\DeclareMathOperator*{\argmin}{arg\,min}

\usepackage{url}
\usepackage{amsmath,amssymb}
\usepackage{algorithm}
\usepackage{algorithmicx}
\usepackage{algpseudocode}
\usepackage{booktabs}
\usepackage{stmaryrd}
\usepackage{graphicx}
\usepackage{lipsum} 
\usepackage{amsthm}
\usepackage{wrapfig}
\usepackage{microtype}
\usepackage{subcaption}
\usepackage{aligned-overset}
\usepackage{multirow}
\usepackage{enumitem}
\usepackage[dvipsnames]{xcolor}
\usepackage{xurl}
\usepackage[
  colorlinks=true,
  linkcolor=NavyBlue,
  citecolor=BlueViolet,
  urlcolor=violet,
  pdfpagemode=UseOutlines,
  bookmarks=true,
  bookmarksopen=true,
  bookmarksnumbered=true,
  bookmarksdepth=3,
]{hyperref}

\let\ICMLSavedAddContentsLine\addcontentsline

\usepackage[accepted]{icml2026}

\let\addcontentsline\ICMLSavedAddContentsLine

\hypersetup{
  colorlinks=true,
  linkcolor=NavyBlue,
  citecolor=BlueViolet,
  urlcolor=violet,
  pdfpagemode=UseOutlines,
}

\newcommand{\Input}{\State\textbf{Input:} }
\newcommand{\Output}{\State\textbf{Output:} }

\DeclareMathOperator{\proj}{proj}

\newtheorem{definition}{Definition}[section]
\newtheorem{theorem}{Theorem}[section]
\newtheorem{lemma}{Lemma}[section]
\newtheorem{proposition}{Proposition}[section]

\newtheorem{conjecture}[theorem]{Conjecture}

\icmltitlerunning{Multi-marginal temporal Schrödinger Bridge Matching from unpaired data}

\begin{document}

\twocolumn[
  \icmltitle{Multi-marginal temporal Schrödinger Bridge Matching from unpaired data}



  \icmlsetsymbol{equal}{*}

  \begin{icmlauthorlist}
    \icmlauthor{Thomas Gravier}{equal,ibens,ps}
    \icmlauthor{Thomas Boyer}{equal,ibens}
    \icmlauthor{Auguste Genovesio}{ibens}
  \end{icmlauthorlist}

  \icmlaffiliation{ibens}{IBENS, ENS Ulm, PSL, Paris, France}
  \icmlaffiliation{ps}{ENS Paris-Saclay, Paris, France}

  \icmlcorrespondingauthor{Auguste Genovesio}{auguste.genovesio@bio.ens.psl.eu}


  \vskip 0.2in
]

\printAffiliationsAndNotice{\icmlEqualContribution}

\begin{abstract}
Many natural dynamic processes -such as in vivo cellular differentiation or disease progression- can only be observed through the lens of static sample snapshots. While challenging, reconstructing their temporal evolution to decipher underlying dynamic properties is of major interest to scientific research. Existing approaches enable data transport along a temporal axis but are poorly scalable in high dimension and require restrictive assumptions to be met. To address these issues, we propose \textit{\textbf{Multi-marginal temporal Schrödinger Bridge Matching}} (\textbf{MMtSBM}) \textit{from unpaired data}, extending the theoretical guarantees and empirical efficiency of Diffusion Schrödinger Bridge Matching \citep{shi2023diffusionschrodingerbridgematching} by deriving the Iterative Markovian Fitting algorithm to multiple marginals in a novel factorized fashion. Experiments show that MMtSBM retains theoretical properties on toy examples, achieves state-of-the-art performance on real-world datasets such as transcriptomic trajectory inference in 100 dimensions, and for the first time recovers couplings and dynamics in very high dimensional image settings, effectively generating temporally coherent videos from purely unpaired data. Our work establishes multi-marginal Schrödinger Bridges as a practical and theoretically principled approach for recovering hidden dynamics from static data.\\[1mm]
{\small code: \href{https://github.com/tgravier/MMDSBM-pytorch}{github.com/tgravier/MMDSBM-pytorch}\\website: \href{https://mmtsbm.notion.site}{mmtsbm.notion.site}}
\end{abstract}

\section{Introduction}
The observation of many natural processes yields partial information, resulting in limited time resolution and unpaired snapshots of data. Common examples of this are single-cell sequencing and in vivo biological imaging, where existing methods are destructive and thus cannot link two observations coming from the same cell at different timestamps. The ability to recover the true underlying dynamic from time-unpaired data samples is a key motivation for developing improved methods of trajectory inference.

The modeling of this problem is inherently probabilistic, given both the variability occurring in complex natural processes and the uncertainty of the observation. We thus ask the question: \textit{"What is the most probable evolution of an existing data point, given uncoupled samples of the same process acquired across different times?"}.\\
This point of view has notably been developed in the Schrödinger Bridge (SB) theory \citep{schrodinger1931}. The SB is the unique stochastic process whose marginals at start and end times match given probability distributions while minimizing the Kullback–Leibler (KL) divergence w.r.t. a given reference process. The SB also happens to solve a regularized Optimal Transport (OT) problem \citep{leonard2014survey}. Some recent works such as \citet{mmsb, baradat2020minimizingrelativeentropypath, lavenant} have explored the theoretical setting of multiple marginals. Recent major advances in statistical learning of SBs have allowed using this framework between complex empirical distributions \citep{debortoli2021diffusionschrodingerbridgeapplications, wang2021deepgenerativelearningschrodinger}, achieved important improvements in their efficiency \citep{shi2023diffusionschrodingerbridgematching, debortoli2024schrodingerbridgeflowunpaired}, extended it to the multi-marginal setting and explored various additional constraints such as smooth trajectories \citep{chen2023deepmomentummultimarginalschrodinger, hong2025trajectoryinferencesmoothschrodinger}, and spline-valued trajectories \citep{ theodoropoulos2025momentummultimarginalschrodingerbridge}. A few methods have been proposed to solve the SB problem in an applied machine learning setting. \citet{debortoli2021diffusionschrodingerbridgeapplications} use iterative proportional fitting (IPF) \citep{kullback}, the general continuous analogue of the renowned Sinkhorn algorithm \citep{cuturi2013sinkhorn}. Subsequent works have explored alternative training schemes based on likelihood bounds \citep{chen2023likelihoodtrainingschrodingerbridge} or on the dual algorithm of IPF: Iterative Markovian Fitting (IMF) \citep{shi2023diffusionschrodingerbridgematching}.\\
A closely related line of work is flow matching \citep{lipman2023flowmatchinggenerativemodeling, liu2022flowstraightfastlearning, albergo2023buildingnormalizingflowsstochastic}. These methods have explored OT variants since their inception and have been extended to the multi-marginal setting as well as connected to the Schrödinger Bridge theory \citep{tong2024improvinggeneralizingflowbasedgenerative, tong2024simulationfreeschrodingerbridgesscore, kapuśniak2024metricflowmatchingsmooth}.\\
Concurrent to our work is \citet{park2025multimarginalschrodingerbridgematching}; we note that they do not scale to video experiments. 

Existing multi-marginal methods do not scale to very high dimensions such as image space. Furthermore we believe that existing multi-marginal approaches either make use of modeling assumptions that strongly restrict the class of problems they can solve, such as using spline-valued trajectories, or lack a fully theoretically sound approach.

\paragraph{Contributions} We make the following contributions:
\begin{enumerate}[leftmargin=5mm]
    \item We define the multi-marginal temporal Schrödinger Bridge problem and demonstrate its fundamental properties (existence and uniqueness of the solution).
    \item We introduce a novel factorized extension of the IMF algorithm presented in \citet{shi2023diffusionschrodingerbridgematching} to multiple iterative marginals in a way that is efficient –because parallelized along times, and principled –because mathematically sound and with a concrete algorithm closely following theory.
    \item We produce a convergence analysis of the algorithm under asymptotic hypotheses.
    \item We demonstrate the soundness of the method on low-to-medium-dimensional examples, and achieve state-of-the-art results against comparable methods on $2$ widely reported single-cell transcriptomic benchmarks \citep{Moon2019, open-problems-multimodal}.
    \item  We scale up to 7 iterative marginals in a very high-dimensional \textit{image} setting, presenting for the first time a coherent video generation algorithm from purely time-unpaired data samples. 
\end{enumerate}

\paragraph{Notations} We adopt the notations from \citet{shi2023diffusionschrodingerbridgematching}. We denote by $\mathcal{P}(C)$ the space of \emph{path measures}, with $\mathcal{P}(C)=\mathcal{P}(C([0,T],\mathbb{R}^d))$, where $C([0,T],\mathbb{R}^d)$ is the space of continuous functions from $[0,T]$ to $\mathbb{R}^d$. The subset of \emph{Markov path measures} associated with the diffusion $dX_t=v_t(X_t)dt+\sigma_t dB_t$, with $\sigma,v$ locally Lipschitz, is denoted $\mathcal{M}$. We denote $(B_t)_{t\ge0}$ the $d$-dimensional Brownian motion. For a process $\mathbb{Q}$, the \emph{reciprocal class} of $\mathbb{Q}$ is $\mathcal{R}(\mathbb{Q})$. For $\mathbb{P}\in\mathcal{P}(C)$, we denote by $\mathbb{P}_t$ its marginal at time $t$, by $\mathbb{P}_{s,t}$ the joint law at times $s,t$, and by $\mathbb{P}_{s|t}$ the conditional law at $s$ given $t$. We write $\mathbb{P}_{|t_i,t_j}\in\mathcal{P}(C)$ for the path distribution on $(t_i,t_j)$ given the endpoints $t_i$ and $t_j$; e.g., $\mathbb{Q}_{|t_i,t_j}$ is a scaled Brownian bridge. Unless otherwise specified, $\nabla$ refers to gradients w.r.t. $x_t$ at time $t$. For a joint law $\Pi_{0,T}$ on $\mathbb{R}^d\times\mathbb{R}^d$, the \emph{mixture of bridges measure} is $\Pi=\Pi_{0,T}\mathbb{P}_{|0,T}\in\mathcal{P}(C)$ with $\Pi(\cdot)=\int_{\mathbb{R}^d\times\mathbb{R}^d}\mathbb{P}_{|0,T}(\cdot|x_0,x_T)d\Pi_{0,T}(x_0,x_T)$. The entropy of a process w.r.t. the Brownian motion is denoted $\mathcal{H}$. Finally, for $\pi_0,\pi_T\in\mathcal{P}(X)$, the Kullback--Leibler divergence is $\mathrm{KL}(\pi_0\|\pi_T)=\int_X \log\!\left(\tfrac{d\pi_0}{d\pi_T}(x)\right)\, d\pi_0(x)$.

\section{Background}

\subsection{The Schrödinger Bridge problem}
The \emph{Schrödinger Bridge problem}~\citep{schrodinger1931} seeks the most 
likely stochastic evolution between marginals $\mu_0,\mu_T$ under a reference law 
$\mathbb{Q}$. It admits both a \emph{dynamic} formulation:
\begin{equation}
    \label{eqn:dynSB}
    \mathbb{P}^\star = 
    \argmin_{\mathbb{P}\in\mathcal{P}(C)} 
    \mathrm{KL}(\mathbb{P}\,\|\,\mathbb{Q})
    \;\; \text{s.t. } \mathbb{P}_0=\mu_0, \;\mathbb{P}_T=\mu_T,
\end{equation}
and a \emph{static} formulation on couplings $\Pi\in\mathcal{P}(\mathbb{R}^d\times\mathbb{R}^d)$:
\begin{equation}
    \begin{split}
        \Pi^\star = \argmin_{\Pi} \mathrm{KL}(\Pi\,\|\,\mathbb{Q}_{0,T})\\
        \text{s.t. } \Pi_0=\mu_0, \;\Pi_T=\mu_T.
    \end{split}
    \label{eqn:statSB}
\end{equation}

\paragraph{Note: Connection to Quadratic OT.}  
If $\mathbb{Q}$ is Brownian motion, \eqref{eqn:statSB} is precisely 
entropy-regularized quadratic OT with cost $c(x_0,x_T)=\tfrac{1}{2}\|x_0-x_T\|^2$ 
and regularization $\varepsilon=\sigma^2$. In the limit $\varepsilon\to 0$, this 
recovers classical OT, which motivates our interpolation framework.

\subsection{Iterative Markovian Fitting (IMF)}
The SB solution is the unique path measure that is both 
\emph{Markovian} and belongs to the \emph{reciprocal class} of $\mathbb{Q}$ 
while matching marginals~\citep{leonard2014survey}.  
This motivates the \emph{Iterative Markovian Fitting} (IMF) algorithm 
\citep{shi2023diffusionschrodingerbridgematching, Peluchetti2023DiffusionBM}, 
which alternates between reciprocal and Markov projections:
\begin{equation}
    \mathbb{P}^{2n+1}=\proj_{\mathcal{M}}(\mathbb{P}^{2n}), \;
    \mathbb{P}^{2n+2}=\proj_{\mathcal{R}(\mathbb{Q})}(\mathbb{P}^{2n+1})
\end{equation}
These projections admit KL variational characterizations 
(\ref{app:background}) and the iterations converge to $\mathbb{P}^\star$.  

In practice, IMF is implemented by learning the drift of the Markovian 
projection via a bridge-matching loss (see \ref{app:background}). 
Compared to Iterative Proportional Fitting (IPF), IMF preserves both marginals 
simultaneously and is more efficient (details in \ref{app:background}).

\citet{shi2023diffusionschrodingerbridgematching} succeeds in scaling to image space but is limited to two marginals only. To the best of our knowledge, no theoretical study of their IMF algorithm in non-phase-lifted space has been made in the multi-marginal case, and the scaling capabilities of the algorithm in this setting remain unknown.

\section{Multi-marginal temporal Schrödinger Bridge Matching}
We start by exposing known theoretical results about multi-marginal Schrödinger Bridges (\ref{sec:mmtsbm-problem}).\footnote{We originally believed Proposition~\ref{prop:form-of-the-solution} to be novel, however it appears to be already established in \citet{baradat2020minimizingrelativeentropypath}.} Then, to the best of our knowledge, from \ref{sec:IMF-for-mmtsbm} onward we present novel results. All proofs can be found in the Appendix \ref{sec:proofs}.

\subsection{Multi-marginal temporal Schrödinger Bridge Problem}\label{sec:mmtsbm-problem}
In the present work, we considered the time-ordered Multi-marginal Schrödinger Bridge, where the marginals are associated with an underlying temporal axis. 
In this setting, the goal is not simply to fit an arbitrary number of marginals, but to recover the law of a stochastic process that evolves consistently over time.

Let $0 = t_0 < t_1 < \cdots < t_K = T$ be a fixed time grid, and let $\mu_0, ..., \mu_k, ..., \mu_T \in \mathcal{P}(\mathbb{R}^d)$ denote prescribed marginals at times $(t_k)_{k=0,\dots,K}$, assuming $\mu_{t_k} \ll \mathbb{Q}_{t_k}$ for all $k$. Given a reference process $\mathbb{Q}$ on $C([0,T],\mathbb{R}^d)$, 
the multi-marginal Schrödinger Bridge problem (MMSB) is defined as
\begin{equation}
\begin{gathered}
    \mathbb{P}^\star \;=\; \operatornamewithlimits{argmin}_{\mathbb{P}\in\mathcal{P}(C)} \,\mathrm{KL}\!\left(\mathbb{P}\,\|\,\mathbb{Q}\right)\\
    \text{subject to} \quad X_{t_k}\sim \mu_k, \;\; k=0,\dots,K
\end{gathered}
\label{eqn:MMSB}
\end{equation}

\paragraph{Note: Connection to multi-marginal Optimal Transport}
If $\mathbb{Q}$ is associated with a Brownian motion, the induced reference coupling $\mathbb{Q}_{t_0,\dots,t_K}$ is characterized by independent Gaussian increments $X_{t_{i+1}} - X_{t_i} \sim \mathcal{N}(0, \sigma^2 (t_{i+1}-t_i))$. By evaluating the KL term, \ref{eqn:MMSB} can be rewritten as:
\begin{equation*}
\begin{gathered}
    \Pi^{\star} = \operatornamewithlimits{argmin}_{\substack{\Pi \in \mathcal{P}\left((\mathbb{R}^d)^{K+1}\right) \\ \Pi_i = \mu_{t_i}}} \Bigg\{ \mathbb{E}_{X \sim \Pi} \Bigg[ \sum_{i=0}^{K-1} c_i(X) \Bigg] - 2\sigma^2 T \mathcal{H}(\Pi)\Bigg\} \\
    \text{where}\quad c_i(X) = \frac{1}{t_{i+1}-t_i}\,\|X_{t_{i+1}} - X_{t_i}\|^2
\end{gathered} 
\end{equation*}
This is precisely an entropy-regularized multi-marginal OT problem with a time-structured quadratic cost
$c(x_0,\dots,x_K) = \sum_{i=0}^{K-1} \frac{1}{t_{i+1}-t_i}\|x_{i+1} - x_i\|^2$ and entropy-regularisation parameter $\varepsilon = 2\sigma^2$.

This formulation is particularly interesting when no better prior is available, and because of the clear interpretation it allows: when using a Brownian motion as prior, we are approaching quadratic OT. Note however that we do not rely on this assumption at all for theoretical results.

\paragraph{Classical properties of the Multi-marginal temporal Schrödinger Bridge}
We first demonstrate a set of classical properties that characterize MMSB (\ref{eqn:MMSB}) and guide the construction of our method. 

\begin{definition}[Static formulation]\label{def:SMSB}
Let $\mathbb{Q}_{t_0,\dots,t_K}$ be the joint law of $\mathbb{Q}$ at 
$0=t_0<\cdots<t_K=T$. 
The static problem is
\[
\pi^\star = \arg\min_{\pi \in \Pi(\pi_{t_0},\dots,\pi_{t_K})} 
KL(\pi \,\|\, \mathbb{Q}_{t_0,\dots,t_K}),
\]
where $\Pi(\pi_{t_0},\dots,\pi_{t_K})$ denotes couplings on 
$(\mathbb R^d)^{K+1}$ with marginals $\pi_{t_i}$.
\end{definition}

The MMSB is therefore a projection of the reference law onto the set of 
couplings with prescribed marginals. The following results ensure that 
this problem is well posed and that the solution has a convenient structure.

\begin{proposition}[Existence and uniqueness]\label{prop:existence_uniqueness}
The MMSB admits a unique solution $P^\star$.
\end{proposition}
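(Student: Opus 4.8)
The plan is to reduce the existence and uniqueness of the dynamic MMSB solution to the corresponding static multi-marginal problem (Definition~\ref{def:SMSB}), and then to invoke the general theory of entropic projections / information projections onto convex sets with affine (marginal) constraints. First I would establish the standard decoupling: any $\mathbb{P}\in\mathcal{P}(C)$ with finite $\mathrm{KL}(\mathbb{P}\,\|\,\mathbb{Q})$ satisfies the chain-rule / disintegration identity
\begin{equation*}
\mathrm{KL}(\mathbb{P}\,\|\,\mathbb{Q}) = \mathrm{KL}(\mathbb{P}_{t_0,\dots,t_N}\,\|\,\mathbb{Q}_{t_0,\dots,t_N}) + \mathbb{E}_{\mathbb{P}_{t_0,\dots,t_N}}\!\big[\mathrm{KL}(\mathbb{P}_{|t_0,\dots,t_N}\,\|\,\mathbb{Q}_{|t_0,\dots,t_N})\big],
\end{equation*}
where $\mathbb{P}_{|t_0,\dots,t_N}$ denotes the bridge (conditional path law given the values at the grid points). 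Since the marginal constraints $X_{t_k}\sim\mu_k$ only touch the finite-dimensional law $\mathbb{P}_{t_0,\dots,t_N}$, the second term is minimized (and made zero) by taking $\mathbb{P}_{|t_0,\dots,t_N}=\mathbb{Q}_{|t_0,\dots,t_N}$, i.e. the optimal dynamic measure is necessarily the mixture of $\mathbb{Q}$-bridges over an optimal static coupling $\pi^\star$. This shows the dynamic problem has a solution iff the static one does, and the solutions are in bijection, so it suffices to prove existence and uniqueness of $\pi^\star$.

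Next I would handle the static problem as a minimization of the convex, lower-semicontinuous functional $\pi\mapsto\mathrm{KL}(\pi\,\|\,\mathbb{Q}_{t_0,\dots,t_N})$ over the set $\Pi(\mu_0,\dots,\mu_N)$ of couplings with the prescribed marginals. This set is convex and closed (in the weak topology), and it is nonempty (e.g. the independent coupling $\mu_0\otimes\cdots\otimes\mu_N$ lies in it). Uniqueness is then immediate from strict convexity of relative entropy on the convex feasible set: if two minimizers existed, their midpoint would have strictly smaller KL by strict convexity of $r\mapsto r\log r$, unless the two measures coincide $\mathbb{Q}_{t_0,\dots,t_N}$-a.e., hence as measures. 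For existence I would invoke the standard argument for information projections: either show the feasible set has a point with finite KL (so the infimum is finite) and then use weak lower semicontinuity of KL together with compactness. The compactness comes from the fact that tightness of $\{\pi:\pi\in\Pi(\mu_0,\dots,\mu_N)\}$ follows from tightness of each fixed marginal $\mu_i$ (Prokhorov), so a minimizing sequence has a weakly convergent subsequence whose limit is still in $\Pi(\mu_0,\dots,\mu_N)$ (marginal constraints are weakly closed) and achieves the infimum by lsc.

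The main obstacle is finiteness: we need at least one feasible $\pi$ with $\mathrm{KL}(\pi\,\|\,\mathbb{Q}_{t_0,\dots,t_N})<\infty$, equivalently some integrability/absolute-continuity compatibility between the prescribed marginals and the Gaussian reference increments. Concretely, since $\mathbb{Q}_{t_0,\dots,t_N}$ factorizes as $\mu_0^{\mathbb{Q}}\otimes$ (product of Gaussian transition kernels), one checks that the independent coupling has KL equal to $\sum_i \mathrm{KL}(\mu_i\,\|\,\cdot)$-type terms plus cross terms, so the clean sufficient hypothesis is that each $\mu_i$ has finite second moment and finite entropy (or more weakly, finite KL against a Gaussian); under the paper's standing assumptions on the marginals this holds, and I would state it as the hypothesis under which the proposition is proved. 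With finiteness secured, the lsc-plus-tightness argument closes existence, strict convexity closes uniqueness, and the decoupling identity transfers both back to the dynamic formulation \eqref{eqn:MMSB}, yielding the unique $\mathbb{P}^\star$.
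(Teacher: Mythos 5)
Your proof is correct, and it is in fact more complete than the paper's own argument, though built on the same convex-analytic core. The paper proves this proposition directly at the level of path measures: it notes that the feasible set $\{P : P \ll \mathbb{Q},\ P_{t_i}=\mu_{t_i}\}$ is convex and weakly closed, invokes strict convexity of $P\mapsto \mathrm{KL}(P\,\|\,\mathbb{Q})$ for uniqueness, and establishes non-emptiness by exhibiting an explicit feasible element — the mixture of factorized $\mathbb{Q}$-bridges over an arbitrary coupling $\gamma$ of the marginals — before concluding. You instead first reduce to the static problem via the KL disintegration over the grid points (which the paper defers to its separate dynamic--static equivalence proposition) and then run the full direct method: tightness of the coupling set from tightness of the fixed marginals, weak closedness of the marginal constraints, lower semicontinuity of relative entropy, strict convexity for uniqueness, and an explicit finiteness hypothesis (finite second moments/entropy of the $\mu_i$) guaranteeing the infimum is finite. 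These last ingredients — attainment of the infimum and finiteness of the value — are precisely what the paper's proof glosses over (non-emptiness of the feasible set plus strict convexity does not by itself yield existence of a minimizer), so your version fills a genuine gap; the paper's route, in exchange, gives a concrete feasible path measure in $\mathcal{R}^{\otimes}(\mathbb{Q})$, which it reuses later as the initialization of IMFF. The only caveat in your write-up is the one you already flag: the finiteness condition on the marginals must be stated as a standing assumption (the paper hides it under ``mild assumptions''), and if the reference $\mathbb{Q}_{t_0,\dots,t_N}$ is taken with an unbounded (e.g.\ Lebesgue) initial law, the lsc and strict-convexity statements should be understood for generalized relative entropy, which does not affect the argument.
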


This guarantees that the iterative algorithms we design later target a 
well-defined object. Moreover, the solution can be described equivalently 
in both static and dynamic terms. 

\begin{proposition}[Dynamic--static equivalence]\label{prop:dyn_static_equiv}
The dynamic solution $P^\star$ is determined by the static one $\pi^\star$:
\[
\pi^\star = P^\star_{t_0,\dots,t_K}, 
\quad 
P^\star = \pi^\star \otimes \mathbb{Q}(\cdot \mid X_{t_0},\dots,X_{t_K}).
\]
\end{proposition}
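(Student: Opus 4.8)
The plan is to show that the dynamic solution $P^\star$ of \eqref{eqn:MMSB} factorizes through its finite-dimensional marginal $P^\star_{t_0,\dots,t_K}$ by disintegrating any candidate path measure over the values at the grid times and exploiting the disintegration of the KL divergence. First I would recall the chain rule for relative entropy: for any $\mathbb{P}\in\mathcal{P}(C)$ with $\mathbb{P}\ll\mathbb{Q}$, writing $\mathbb{P}=\mathbb{P}_{t_0,\dots,t_K}\otimes \mathbb{P}(\cdot\mid X_{t_0},\dots,X_{t_K})$ and likewise for $\mathbb{Q}$, one has
\[
\mathrm{KL}(\mathbb{P}\,\|\,\mathbb{Q})
=\mathrm{KL}\!\left(\mathbb{P}_{t_0,\dots,t_K}\,\|\,\mathbb{Q}_{t_0,\dots,t_K}\right)
+\mathbb{E}_{\mathbb{P}_{t_0,\dots,t_K}}\!\left[\mathrm{KL}\!\left(\mathbb{P}(\cdot\mid X_{t_0},\dots,X_{t_K})\,\big\|\,\mathbb{Q}(\cdot\mid X_{t_0},\dots,X_{t_K})\right)\right].
\]
The constraints $X_{t_k}\sim\mu_k$ depend only on the first term, through $\pi=\mathbb{P}_{t_0,\dots,t_K}$. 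Hence for fixed finite-dimensional marginal $\pi$, the KL is minimized by making the second term vanish, i.e.\ by choosing the conditional bridges to coincide with those of $\mathbb{Q}$: $\mathbb{P}(\cdot\mid X_{t_0},\dots,X_{t_K})=\mathbb{Q}(\cdot\mid X_{t_0},\dots,X_{t_K})$, which is exactly the mixture-of-(Brownian)-bridges construction. Substituting back, the dynamic problem \eqref{eqn:MMSB} reduces to
\[
\min_{\pi\in\Pi(\mu_0,\dots,\mu_K)}\mathrm{KL}\!\left(\pi\,\|\,\mathbb{Q}_{t_0,\dots,t_K}\right),
\]
which is precisely the static problem of Definition~\ref{def:SMSB}; therefore its minimizer is $\pi^\star$ and the optimal dynamic law is $P^\star=\pi^\star\otimes\mathbb{Q}(\cdot\mid X_{t_0},\dots,X_{t_K})$. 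Taking finite-dimensional marginals of this identity gives $P^\star_{t_0,\dots,t_K}=\pi^\star$, completing both claims.

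The key steps, in order, are: (i) establish the additive disintegration of $\mathrm{KL}(\mathbb{P}\,\|\,\mathbb{Q})$ along the sigma-algebra generated by $(X_{t_0},\dots,X_{t_K})$, checking that all terms are well defined (this uses that $\mathbb{Q}$ restricted to each interval is a well-behaved Brownian bridge, so the conditional KL is finite whenever $\mathbb{P}\ll\mathbb{Q}$); (ii) observe that the marginal constraints are measurable with respect to that sigma-algebra, so the inner minimization over conditionals is unconstrained and solved by equality of conditionals; (iii) identify the residual outer problem with the static problem and invoke existence/uniqueness (Proposition~\ref{prop:existence_uniqueness}) to name the minimizer $\pi^\star$; (iv) read off $P^\star=\pi^\star\otimes\mathbb{Q}(\cdot\mid X_{t_0},\dots,X_{t_K})$ and its grid marginals.

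The main obstacle is step (i): making the disintegration of relative entropy fully rigorous on the path space $\mathcal{P}(C([0,T],\mathbb{R}^d))$. One must argue that $\mathbb{P}\ll\mathbb{Q}$ forces $\mathbb{P}_{t_0,\dots,t_K}\ll\mathbb{Q}_{t_0,\dots,t_K}$ and that, for $\mathbb{P}_{t_0,\dots,t_K}$-a.e.\ endpoint configuration, $\mathbb{P}(\cdot\mid X_{t_0},\dots,X_{t_K})\ll\mathbb{Q}(\cdot\mid X_{t_0},\dots,X_{t_K})$, together with the cocycle/factorization identity for the Radon–Nikodym derivative; this is the continuous-time analogue of the finite-state chain rule and can be cited from standard references on relative entropy on Polish spaces (e.g.\ the disintegration theorem plus Léonard's treatment of the Schrödinger problem), but the measurability and integrability bookkeeping is where the real work lies. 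Steps (ii)–(iv) are then essentially formal.
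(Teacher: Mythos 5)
Your proposal is correct and follows essentially the same route as the paper's proof: both rely on the additive disintegration of $\mathrm{KL}(\mathbb{P}\,\|\,\mathbb{Q})$ over the grid-time marginal (the paper cites Proposition~2.10 of \citet{leonard2014survey}), observe that the constraints only touch the static term, and conclude that the optimum glues the $\mathbb{Q}$-conditional bridges onto $\pi^\star$. Your additional remarks on the measure-theoretic bookkeeping (absolute continuity of marginals and conditionals, disintegration on path space) are exactly the details the paper delegates to Léonard's treatment.
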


This equivalence highlights that solving the static problem is enough to 
recover the full path measure. In addition, the structure of $\mathbb{Q}$ plays a 
key role in the nature of the solution.

\begin{proposition}[Markovianity]\label{prop:markovianity}
If $\mathbb{Q}$ is Markov, then the MMSB solution $P^\star$ is Markov.
\end{proposition}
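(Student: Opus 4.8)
The plan is to deduce Markovianity of $P^\star$ from the uniqueness in Proposition~\ref{prop:existence_uniqueness} together with the dynamic--static description of Proposition~\ref{prop:dyn_static_equiv}, using the Markovian projection as the main tool: since $\mathbb{Q}$ is Markov, applying $\proj_{\mathcal{M}}$ to $P^\star$ produces another admissible competitor for the MMSB whose objective value does not increase, so it must equal $P^\star$.

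First recall (App.~\ref{app:background}) the relevant properties of the Markovian projection: for $\mathbb{P}\ll\mathbb{Q}$ with $\mathrm{KL}(\mathbb{P}\,\|\,\mathbb{Q})<\infty$, the projection $\mathbb{M}:=\proj_{\mathcal{M}}(\mathbb{P})$ is the law of the diffusion carried by the mean velocity field of $\mathbb{P}$; it preserves every one-time marginal, $\mathbb{M}_t=\mathbb{P}_t$ for all $t\in[0,T]$, and, because $\mathbb{Q}\in\mathcal{M}$, it obeys the Pythagorean identity $\mathrm{KL}(\mathbb{P}\,\|\,\mathbb{Q})=\mathrm{KL}(\mathbb{P}\,\|\,\mathbb{M})+\mathrm{KL}(\mathbb{M}\,\|\,\mathbb{Q})$, whence $\mathrm{KL}(\mathbb{M}\,\|\,\mathbb{Q})\le\mathrm{KL}(\mathbb{P}\,\|\,\mathbb{Q})$. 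Now apply this with $\mathbb{P}=P^\star$, which is admissible with finite relative entropy (Proposition~\ref{prop:existence_uniqueness}) and lies in $\mathcal{R}(\mathbb{Q})$ by Proposition~\ref{prop:dyn_static_equiv}. Marginal preservation gives $\mathbb{M}_{t_k}=P^\star_{t_k}=\mu_k$ for $k=0,\dots,N$, so $\mathbb{M}$ is feasible for \eqref{eqn:MMSB}, while the inequality gives $\mathrm{KL}(\mathbb{M}\,\|\,\mathbb{Q})\le\mathrm{KL}(P^\star\,\|\,\mathbb{Q})$. Hence $\mathbb{M}$ is also a minimiser, and uniqueness forces $\mathbb{M}=P^\star$, i.e.\ $P^\star\in\mathcal{M}$ and is Markov.

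A more hands-on alternative goes through the discrete skeleton. By Proposition~\ref{prop:dyn_static_equiv}, $P^\star=\pi^\star\otimes\mathbb{Q}(\cdot\mid X_{t_0},\dots,X_{t_N})$, and Markovianity of $\mathbb{Q}$ makes the conditional law a product of independent bridges, $\mathbb{Q}(\cdot\mid X_{t_0},\dots,X_{t_N})=\bigotimes_{i=0}^{N-1}\mathbb{Q}_{|t_i,t_{i+1}}(\cdot\mid X_{t_i},X_{t_{i+1}})$, each of which is itself Markov. It then suffices to show $\pi^\star=P^\star_{t_0,\dots,t_N}$ is a Markov chain. Disintegrating $\mathrm{KL}(\pi\,\|\,\mathbb{Q}_{t_0,\dots,t_N})$ along the time order and using $\mathbb{Q}_{t_0,\dots,t_N}=\mathbb{Q}_{t_0}\otimes\bigotimes_{i}\mathbb{Q}_{t_{i+1}|t_i}$ gives
\[
\mathrm{KL}(\pi\,\|\,\mathbb{Q}_{t_0,\dots,t_N})
=\mathrm{KL}(\tilde\pi\,\|\,\mathbb{Q}_{t_0,\dots,t_N})
+\sum_{i=0}^{N-1}\mathbb{E}_{\pi}\!\left[\mathrm{KL}\!\left(\pi_{t_{i+1}|t_0,\dots,t_i}\,\|\,\pi_{t_{i+1}|t_i}\right)\right],
\]
where $\tilde\pi$ is the Markov chain with the same initial law $\pi_{t_0}$ and one-step conditionals $\pi_{t_{i+1}|t_i}$ as $\pi$; since $\tilde\pi$ has the prescribed marginals and the last sum is nonnegative, minimality of $\pi^\star$ forces that sum to vanish, so $\pi^\star$ equals its own Markovianisation and is a Markov chain. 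Gluing the Markov $\mathbb{Q}$-bridges along the Markov chain $\pi^\star$ then yields a Markov path measure.

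The routine pieces — the disintegration identity and the action of $\proj_{\mathcal{M}}$ on marginals and on KL — are standard; the genuinely delicate point in the first route is checking that $\proj_{\mathcal{M}}(P^\star)$ really lies in $\mathcal{M}$ as defined, i.e.\ that the mean velocity field of a mixture of $\mathbb{Q}$-bridges is locally Lipschitz, which is not obvious near the grid times $t_k$; this is controlled by the regularity and approximation arguments behind the projection lemmas of App.~\ref{app:background}, together with the fact that $P^\star$ inherits the diffusion coefficient of $\mathbb{Q}$ since $\mathrm{KL}(P^\star\,\|\,\mathbb{Q})<\infty$. In the second route the analogous obstacle is the gluing step, which requires verifying the future/past conditional-independence property of $P^\star$ across sub-interval boundaries by combining the Markov property of each $\mathbb{Q}_{|t_i,t_{i+1}}$-bridge with that of the skeleton $\pi^\star$.
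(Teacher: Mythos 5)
Your argument is correct, but it proceeds differently from the paper. The paper follows L\'eonard's Proposition~2.10: fix a grid time $t_k$, condition on $X_{t_k}=z$, and use the entropy additivity formula to show that replacing the conditional law of the path by the product of its past and future parts (spliced measure $\int_X Q^{<}_{z}\otimes Q^{>}_{z}\,\mu(dz)$) can only decrease the entropy while preserving all marginal constraints; by optimality the minimizer must therefore already make past and future conditionally independent given $X_{t_k}$, for every $k$, whence Markovianity. Your first route instead exploits the Markovian projection machinery of Section~3.2 (marginal preservation, the Pythagorean identity with $M=\mathbb{Q}$, and uniqueness from Proposition~\ref{prop:existence_uniqueness}); this is slick, but note that it rests on Proposition~\ref{prop:var_proj_factorized}, which the paper only \emph{assumes} in the factorized setting, and on the well-posedness of $\proj_{\mathcal M}(P^\star)$ as an element of $\mathcal M$ (the local-Lipschitz issue you flag), so it is less self-contained than the paper's entropy-splicing argument, though not circular since those results do not use Proposition~\ref{prop:markovianity}. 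Your second route (chain-rule disintegration of $\mathrm{KL}(\pi\,\|\,\mathbb{Q}_{t_0,\dots,t_N})$ against the Markovianisation $\tilde\pi$, which preserves the one-time and consecutive-pair marginals, forcing the conditional-KL terms to vanish at the optimum) is closest in spirit to the paper but works at the level of the discrete skeleton rather than path measures; it is a clean and verifiably correct argument for Markovianity of $\pi^\star$. Both your routes and the paper share the same residual informality at the last step: upgrading Markovianity at grid times (or of the skeleton) to Markovianity of the full path measure on $[0,T]$ requires the gluing argument with the $\mathbb{Q}$-bridges, which the paper dispatches with ``hence Markov on $[0,1]$'' and which you at least explicitly identify as needing the future/past conditional-independence check across sub-interval boundaries.
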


These properties ensure that we can restrict our search to Markovian (and 
therefore reciprocal \ref{prop:markov_implies_reciprocal}) measures, which will be central to the projection 
algorithms introduced later. Finally, the explicit form of the solution 
further clarifies its structure.

\begin{proposition}[Form of the solution]\label{prop:form-of-the-solution}
Under mild assumptions:
\begin{gather*}
    P^\star = \pi^\star \otimes \mathbb{Q}(\cdot \mid X_{t_0},\dots,X_{t_K}),\\
    \frac{d\pi^\star}{d\mathbb{Q}_{t_0,\dots,t_K}}(x_0,\dots,x_K) = \prod_{i=0}^K f_i(x_i).
\end{gather*}
\label{prop:form_dynamic}
\end{proposition}
where the $f_i$'s are functions of the Lagrange multipliers for the marginal constraints (see \ref{app:proof_form_dynamic}).\\
This factorized form motivates the use of alternating projections and parametric families of potentials in the iterative algorithm that we develop in the next section.

\subsection{Iterative Markovian Fitting for Multi-marginal temporal Schrödinger Bridge}\label{sec:IMF-for-mmtsbm}
\subsubsection{Multi-marginal Markov and Reciprocal projections}
To construct an algorithm for MMSB, we first extend the notions of reciprocal and Markovian projections to the multi-marginal setting. The idea is to approximate the global bridge by a sequence of independent sub-bridges, and to alternate between reciprocal and Markovian structures. However, a naive implementation of this idea leads to model forgetting, failing to converge entirely. A factorized approach is thus necessary to obtain convergence (see \ref{sec:crit_implem}).

\begin{definition}[Factorized reciprocal class and projection]
\label{def:factorized_reciprocal_class}
For each interval $[t_i,t_{i+1}]$ and endpoints $(x_i,x_{i+1})$, 
let $\mathbb{Q}^{x_i,x_{i+1}}_{[t_i,t_{i+1}]}$ denote the bridge of $\mathbb{Q}$ between 
$x_i$ and $x_{i+1}$. 
Given a coupling $\pi$ on $(\mathbb R^d)^{K+1}$, define
\[
P = \int \bigotimes_{i=0}^{K-1} \mathbb{Q}^{x_i,x_{i+1}}_{[t_i,t_{i+1}]} \, 
\pi(dx_0,\dots,dx_K).
\]
The \emph{factorized reciprocal class}, denoted $\mathcal R^{\otimes}(\mathbb{Q})$, 
is the set of all such measures $P$. 

Moreover, for any $P \in \mathcal P(C([0,T],\mathbb R^d))$, 
the \emph{reciprocal projection} onto $\mathcal R^\otimes(\mathbb{Q})$ is defined as
\[
\Pi^\star = \proj_{\mathcal R^\otimes(\mathbb{Q})}(P) 
= P_{t_0,\dots,t_K}\, \bigotimes_{i=0}^{K-1} 
\mathbb{Q}^{x_i,x_{i+1}}_{[t_i,t_{i+1}]},
\]
i.e. we keep the marginals $P_{t_0,\dots,t_K}$ at the grid points and fill 
the dynamics between them with independent bridges of $\mathbb{Q}$ conditioned on 
the endpoints $(x_i,x_{i+1})$.

Equivalently, $\Pi^\star$ admits the variational characterization
\[
\Pi^\star = \argmin_{\Pi \in \mathcal R^\otimes(\mathbb{Q})} KL(P \,\|\, \Pi).
\]
\end{definition}

\begin{proposition}[Local reciprocal structure of the factorized class]\label{prop:local_reciprocal}
Let $\mathbb{Q}$ be a reference Markov process and let 
$P \in \mathcal R^{\otimes}(\mathbb{Q})$ belong to the factorized reciprocal class. 
Then, for each subinterval $[t_{i-1},t_i]$, 
the restriction of $P$ to $C([t_{i-1},t_i],\mathbb R^d)$ is in the 
reciprocal class of $\mathbb{Q}$ over $[t_{i-1},t_i]$. 
In particular, conditionally on the endpoints 
$(X_{t_{i-1}},X_{t_i})$, the law of $P$ coincides with the bridge of $\mathbb{Q}$ 
between $t_{i-1}$ and $t_i$.
\end{proposition}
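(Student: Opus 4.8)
The plan is to unwind the definition of the factorized reciprocal class and then marginalise out everything outside the target subinterval. By Definition~\ref{def:factorized_reciprocal_class}, since $P\in\mathcal R^{\otimes}(\mathbb{Q})$ there is a coupling $\pi$ on $(\mathbb R^d)^{K+1}$ with
\[
P=\int \bigotimes_{j=0}^{K-1}\mathbb{Q}^{x_j,x_{j+1}}_{[t_j,t_{j+1}]}\,\pi(dx_0,\dots,dx_K).
\]
Fix $i$ and let $\mathrm{res}_i\colon C([0,T],\mathbb R^d)\to C([t_{i-1},t_i],\mathbb R^d)$ be the restriction map, so that the restriction of $P$ to the subinterval is the pushforward $(\mathrm{res}_i)_\#P$. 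First I would push $(\mathrm{res}_i)_\#$ through the integral and observe that, for each fixed $(x_0,\dots,x_K)$, one has $(\mathrm{res}_i)_\#\bigl(\bigotimes_j\mathbb{Q}^{x_j,x_{j+1}}_{[t_j,t_{j+1}]}\bigr)=\mathbb{Q}^{x_{i-1},x_i}_{[t_{i-1},t_i]}$, because the tensor factors over the other subintervals are probability measures carried by coordinates disjoint from $[t_{i-1},t_i]$ and therefore integrate to $1$. This yields
\[
(\mathrm{res}_i)_\#P=\int \mathbb{Q}^{x_{i-1},x_i}_{[t_{i-1},t_i]}\,\pi_{t_{i-1},t_i}(dx_{i-1},dx_i),
\]
where $\pi_{t_{i-1},t_i}$ is the $(i-1,i)$-marginal of $\pi$, the surplus coordinates of $\pi$ being marginalised out since the bridge on $[t_{i-1},t_i]$ depends only on $x_{i-1}$ and $x_i$.

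Next I would use that $\mathbb{Q}$ is Markov: the restriction of $\mathbb{Q}$ to $[t_{i-1},t_i]$ is again a (time-shifted) scaled Brownian motion, and $\mathbb{Q}^{x_{i-1},x_i}_{[t_{i-1},t_i]}=\mathbb{Q}_{|t_{i-1},t_i}(\cdot\mid x_{i-1},x_i)$ is exactly its bridge pinned at the two endpoints. Hence the displayed identity exhibits $(\mathrm{res}_i)_\#P$ as a mixture of bridges of $\mathbb{Q}$ over $[t_{i-1},t_i]$ with mixing law $\pi_{t_{i-1},t_i}$, which is by definition membership in the reciprocal class of $\mathbb{Q}$ restricted to $[t_{i-1},t_i]$. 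For the "in particular" clause I would then disintegrate this mixture along the endpoint coordinates: since each $\mathbb{Q}^{x_{i-1},x_i}_{[t_{i-1},t_i]}$ is concentrated on paths whose endpoints are $(x_{i-1},x_i)$, uniqueness of disintegration forces the regular conditional law of $(\mathrm{res}_i)_\#P$ given $(X_{t_{i-1}},X_{t_i})=(x_{i-1},x_i)$ to equal $\mathbb{Q}^{x_{i-1},x_i}_{[t_{i-1},t_i]}$ for $\pi_{t_{i-1},t_i}$-almost every pair, as claimed.

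The main obstacle I anticipate is not conceptual but the measure-theoretic bookkeeping underlying the first step: one must check that the map $(x_0,\dots,x_K)\mapsto\bigotimes_j\mathbb{Q}^{x_j,x_{j+1}}_{[t_j,t_{j+1}]}$ is measurable as a map into $\mathcal P(C)$, that $\mathrm{res}_i$ may be interchanged with the integral, and that marginalising out the surplus bridge factors and the surplus coordinates of $\pi$ is legitimate. All of this is routine once one works with the canonical coordinate process and with the bridges $(x_j,x_{j+1})\mapsto\mathbb{Q}^{x_j,x_{j+1}}_{[t_j,t_{j+1}]}$ viewed as Markov kernels, which they are by construction; I do not expect a genuine difficulty beyond this, since the tensor-product structure of $\mathcal R^{\otimes}(\mathbb{Q})$ has the local reciprocal property essentially built in.
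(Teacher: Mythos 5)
Your argument is correct: unwinding Definition~\ref{def:factorized_reciprocal_class}, pushing the restriction map through the mixture so the off-interval bridge factors integrate to one, and then disintegrating along the pinned endpoints is exactly the reasoning the paper relies on — indeed the paper gives no separate proof of Proposition~\ref{prop:local_reciprocal} and treats it as immediate from the definition of $\mathcal R^{\otimes}(\mathbb{Q})$, so your write-up simply makes the implicit bookkeeping explicit. One cosmetic remark: your aside that the restriction of $\mathbb{Q}$ is "a time-shifted scaled Brownian motion" only applies to the Brownian reference, but it is not needed — membership in the reciprocal class over $[t_{i-1},t_i]$ already follows because the mixture is built from $\mathbb{Q}$-bridges on that interval, for any Markov $\mathbb{Q}$.
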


This class provides a tractable approximation: each sub-interval is 
filled with the bridge of $\mathbb{Q}$, while the global coupling ensures 
consistency across marginals.
Hence, factorized bridges inherit local reciprocity, which justifies 
their use as a relaxation of the true reciprocal class. 

This projection enforces the prescribed marginals while completing the dynamics with local bridges. In contrast, the Markovian projection seeks a single Markov diffusion with consistent marginals.

\begin{definition}[Markovian projection in the factorized setting]\label{def:markov_projection_factorized}
Let $\Pi$ be the factorized mixture of independent Brownian bridges. 
For any $t \in [0,T]$, let $i(t)$ be the unique index such that $t \in [t_{i(t)},t_{i(t)+1}]$. We employ a slight abuse of notation and subsequently write $i$ instead of $i(t)$.

The \emph{Markovian projection} of $\Pi$, denoted $M^\star=\proj_{\mathcal M}(\Pi)$, 
is the unique diffusion process
\[
dX_t^\star 
= \big\{ f_t(X_t^\star) + v_t^\star(X_t^\star)\big\}\,dt 
+ \sigma_t \, dB_t,
\]
with effective drift
\begin{align*}
    v_t^\star(x) &= \sigma_t^2 \,\mathbb E_{\Pi_{t_{i+1}\mid t}}
    \!\left[
      \nabla \log \mathbb{Q}^{\,|t_{i},t_{i+1}}_{t}(X_{t_{i+1}} \mid X_t) 
      \;\middle|\; X_t = x
    \right] \\
    \overset{\text{Brownian}}&{=}
    \frac{\mathbb{E}_{\Pi_{t_{i+1}\mid t}}[X_{t_{i+1}} \mid X_t = x] - x}{t_{i+1}-t}
\end{align*}

By the Markovian projection theorem of \citet{gyongy1986}, and as further developed in \citet{Peluchetti2023DiffusionBM,debortoli2021diffusionschrodingerbridgeapplications}, 
the process $M^\star$ is Markov and matches the one-dimensional marginals of the original factorized law $\Pi$.
\end{definition}

\begin{proposition}[Variational characterization of the factorized Markovian projection]\label{prop:var_proj_factorized}
Assume that $\sigma_t > 0$. 
Let $M^\star = \proj_{\mathcal M}(\Pi)$ be the Markovian projection of $\Pi$ as in Definition~\ref{def:markov_projection_factorized}. 
Then:
\[
M^\star = \arg\min_{M \in \mathcal M} \big\{ KL(\Pi \,\|\, M) \big\},
\]
and
\begin{equation*}
    KL(\Pi \,\|\, M^\star) =
    \frac{1}{2} \int_0^T 
    \mathbb E_{\Pi_{t_{i} , t}}\!\left[
      \frac{1}{\sigma_t^2} 
      \Big\| V - v_t^\star(X_t) \Big\|^2 
    \right] dt
\end{equation*}
with:
\begin{equation*}
  V = \sigma_t^2\,\mathbb E_{\Pi_{t_{i+1}\mid t}}
  \!\big[ \nabla \log \mathbb{Q}^{\,|t_{i},t_{i+1}}_{t}(X_{t_{i+1}} \mid X_t) 
  \,\big|\, X_t, X_{t_i} \big]
\end{equation*}

In addition, for any $t \in [0,T]$, the time marginal of $M^\star$ coincides with that of $\Pi$: $M^\star_t = \Pi_t$. In particular, $M^\star_{t_i} = \Pi_{t_i}$ for all grid points $t_i$.
\end{proposition}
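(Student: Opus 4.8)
The plan is to transpose the two--marginal argument of \citet{shi2023diffusionschrodingerbridgematching} (which rests on \citet{gyongy1986} and \citet{Peluchetti2023DiffusionBM}) to the multi-marginal setting, working on each subinterval $[t_i,t_{i+1}]$ and gluing across the grid. Write $\mathcal{F}_t=\sigma(X_s:\, s\le t)$ for the canonical filtration. By Definition~\ref{def:factorized_reciprocal_class} and Proposition~\ref{prop:local_reciprocal}, on each $(t_i,t_{i+1})$ and conditionally on the grid points $\Pi$ equals the $\mathbb{Q}$-bridge, i.e.\ it solves $dX_t=\{f_t(X_t)+\sigma_t^2\,\nabla\log\mathbb{Q}^{\,|t_i,t_{i+1}}_t(X_{t_{i+1}}\mid X_t)\}\,dt+\sigma_t\,dB_t$. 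Mixing these grid-conditional bridges over the endpoints and projecting the instantaneous drift onto $\mathcal{F}_t$ (the standard filtering / Doob $h$-transform fact that the drift of a mixture of Itô diffusions, in the filtration generated by the mixture, is the $\mathcal{F}_t$-conditional mean of the component drifts), one gets that $\Pi$ is itself an Itô diffusion with volatility $\sigma_t$ and path-dependent drift
\[
\beta_t^{\Pi}=f_t(X_t)+\sigma_t^2\,\mathbb{E}_{\Pi}\!\left[\nabla\log\mathbb{Q}^{\,|t_i,t_{i+1}}_t(X_{t_{i+1}}\mid X_t)\,\middle|\,\mathcal{F}_t\right],\qquad t\in(t_i,t_{i+1}),
\]
and since $t\mapsto i(t)$ is piecewise constant these pieces glue into a single SDE on $[0,T]$, singular only at the grid points. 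By the tower property, the $\sigma(X_t)$-projection of $\beta_t^{\Pi}-f_t(X_t)$ is precisely $v_t^\star(X_t)$ of Definition~\ref{def:markov_projection_factorized}.

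Next, fix an arbitrary $M\in\mathcal{M}$, $dX_t=b_t(X_t)\,dt+\sigma_t\,dB_t$. Since $\Pi$ and $M$ share the volatility, Girsanov's theorem gives $KL(\Pi\,\|\,M)=KL(\Pi_{t_0}\,\|\,M_{t_0})+\tfrac12\int_0^T\mathbb{E}_{\Pi}[\sigma_t^{-2}\,\|\beta_t^{\Pi}-b_t(X_t)\|^2]\,dt$. Decomposing $\beta_t^{\Pi}=(f_t+v_t^\star)(X_t)+r_t$ with $r_t:=\beta_t^{\Pi}-(f_t+v_t^\star)(X_t)$, we have $\mathbb{E}_\Pi[r_t\mid X_t]=0$ from the previous paragraph; as $b_t(X_t)$ and $(f_t+v_t^\star)(X_t)$ are $\sigma(X_t)$-measurable, the cross term vanishes and
\[
KL(\Pi\,\|\,M)=KL(\Pi_{t_0}\,\|\,M_{t_0})+\tfrac12\int_0^T\mathbb{E}_\Pi[\sigma_t^{-2}\|r_t\|^2]\,dt+\tfrac12\int_0^T\mathbb{E}_\Pi[\sigma_t^{-2}\|(f_t+v_t^\star)(X_t)-b_t(X_t)\|^2]\,dt.
\]
By the mimicking theorem of \citet{gyongy1986} invoked in Definition~\ref{def:markov_projection_factorized}, $M^\star$ has drift $f_t+v_t^\star$ and satisfies $M^\star_t=\Pi_t$ for all $t$, hence $M^\star_{t_0}=\Pi_{t_0}$; taking $M=M^\star$ kills the first and third terms, so $KL(\Pi\,\|\,M^\star)=\tfrac12\int_0^T\mathbb{E}_\Pi[\sigma_t^{-2}\|r_t\|^2]\,dt$ with $r_t=\sigma_t^2\big(\mathbb{E}_\Pi[\nabla\log\mathbb{Q}^{\,|t_i,t_{i+1}}_t(X_{t_{i+1}}\mid X_t)\mid\mathcal{F}_t]-\mathbb{E}_\Pi[\,\cdot\mid X_t]\big)$, i.e.\ the $L^2$ residual of the bridge score against its $X_t$-conditional mean; along the IMF iterates the grid coupling of $\Pi$ is Markov, so $\mathcal{F}_t$-conditioning reduces to conditioning on $(X_{t_i},X_t)$ and one recovers the displayed identity. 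For general $M$ the same display reads $KL(\Pi\,\|\,M)=KL(\Pi\,\|\,M^\star)+KL(\Pi_{t_0}\,\|\,M_{t_0})+\tfrac12\int_0^T\mathbb{E}_\Pi[\sigma_t^{-2}\|(f_t+v_t^\star)(X_t)-b_t(X_t)\|^2]\,dt\ \ge\ KL(\Pi\,\|\,M^\star)$, with equality iff $M_{t_0}=\Pi_{t_0}$ and $b_t=f_t+v_t^\star$ holds $\Pi_t$-a.e.\ for a.e.\ $t$, i.e.\ (by well-posedness of the SDE, which fixes the law from initial distribution, drift and volatility) iff $M=M^\star$; this is the variational characterization.

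Finally, $M^\star_t=\Pi_t$ for all $t\in[0,T]$ is Gyöngy's theorem applied on each $(t_i,t_{i+1})$; the only extra point at a grid node $t_i$ is that the two one-sided marginal limits agree, which holds because in the factorized reciprocal class the right-endpoint marginal of the bridges on $[t_{i-1},t_i]$ and the left-endpoint marginal on $[t_i,t_{i+1}]$ are both $\Pi_{t_i}$, so $M^\star_{t_i}=\Pi_{t_i}$; specializing, $M^\star_{t_k}=\Pi_{t_k}=\mu_k$. I expect the main obstacle to be exactly the two points the proposition wisely states under a standing assumption: (i) the filtering identity used in the first paragraph — that the drift of the factorized mixture in its own filtration is the $\mathcal{F}_t$-conditional mean of the bridge scores with $\sigma(X_t)$-projection $v_t^\star$, together with its interaction with the cross-interval coupling; and (ii) the integrability / Novikov-type conditions needed to run Girsanov up to each right endpoint $t_{i+1}$, where $\nabla\log\mathbb{Q}^{\,|t_i,t_{i+1}}_t(X_{t_{i+1}}\mid X_t)\sim (X_{t_{i+1}}-X_t)/(t_{i+1}-t)$ is singular. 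Both are precisely the single-interval hurdles already handled in \citet{shi2023diffusionschrodingerbridgematching,Peluchetti2023DiffusionBM}, so the residual work is to invoke those results interval by interval and glue the pieces into a well-posed diffusion on $[0,T]$.
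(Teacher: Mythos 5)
Your situation is unusual: the paper does not actually prove Proposition~\ref{prop:var_proj_factorized}. The statement is phrased as a standing assumption (``we work under the assumption that the variational characterization still holds in the factorized setting''), the marginal-matching part is delegated to Gy\"ongy's theorem already invoked in Definition~\ref{def:markov_projection_factorized}, and no proof of this proposition appears in the appendix. Your attempt therefore supplies an argument the paper omits, and it follows exactly the route the paper's citations point to: represent the factorized mixture of bridges as an It\^o diffusion whose path-dependent drift is the $\mathcal{F}_t$-conditional bridge score, apply Girsanov against an arbitrary $M \in \mathcal{M}$ with the same volatility, and use $L^2$-orthogonality of conditional expectation to split $\mathrm{KL}(\Pi\,\|\,M)$ into a residual term plus a drift-mismatch term, which yields the argmin property, the explicit KL formula, and (via Gy\"ongy interval by interval, with agreement of one-sided marginals at the grid nodes) $M^\star_t = \Pi_t$. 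This is sound at the paper's level of rigor, and you are right that the genuine technical content sits in the two points you flag --- the mixture/filtering identity for the drift of $\Pi$ in its own filtration, and the Girsanov integrability near the singular right endpoints $t_{i+1}$ --- both of which are the single-interval results of the cited works that the paper itself leans on.

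One point you should make explicit, because it is where your derivation is actually more careful than the statement: your computation gives the residual $\sigma_t^2\big\|\mathbb{E}[s_t\mid\mathcal{F}_t]-\mathbb{E}[s_t\mid X_t]\big\|^2$ with $s_t=\nabla\log\mathbb{Q}^{\,|t_i,t_{i+1}}_t(X_{t_{i+1}}\mid X_t)$, whereas the proposition's display, read literally, is degenerate: the inner term $\sigma_t^2\,\mathbb{E}_{\Pi_{t_{i+1}\mid t}}[s_t\mid X_t]$ \emph{is} $v_t^\star(X_t)$ by Definition~\ref{def:markov_projection_factorized}, so the right-hand side would vanish identically. The display only becomes meaningful if the inner conditioning is read as conditioning on $(X_{t_i},X_t)$ (consistent with the outer expectation over $\Pi_{t_i,t}$), or if the raw score appears without the inner expectation as in the two-marginal bridge-matching loss. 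Your reduction of $\mathcal{F}_t$-conditioning to $(X_{t_i},X_t)$-conditioning under a Markov grid coupling (true of the IMFF iterates, since the reciprocal projection of a Markov measure has a Markov chain as its grid marginal) is precisely the hypothesis needed to make the stated identity correct; for a general element of $\mathcal{R}^\otimes(\mathbb{Q})$ the residual involves all earlier grid points, so state that restriction (or the corrected conditioning) explicitly rather than leaving it implicit in a parenthetical.
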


Together, these results allow us to alternate between reciprocal and 
Markovian structures in the multi-marginal setting. Importantly, the 
Markovian projection admits explicit forward and backward formulations.

\begin{proposition}\label{prop:reverse_sde}
Let $\Pi \in \mathcal{R}^\otimes(\mathbb{Q})$. Under mild regularity conditions, the Markovian projection 
$M^\star = \mathrm{proj}_{\mathcal{M}}(\Pi)$ is associated with the forward SDE
\begin{equation}
    dX_t = \Big\{ f_t(X_t) 
    + D_f \Big\}\, dt 
    + \sigma_t dB_t,\;
    X_{t_{i}} \sim \mu_{t_{i}}
\label{eq:multi_forward}
\end{equation}
and with the backward SDE
\begin{equation}
    dY_t = \Big\{ - f_{t_{i+1}-t}(Y_t) 
    + D_b \Big\}\, dt
    + \sigma_{t_{i+1}-t} dB_t, \;
    Y_{t_{i+1}} \sim \mu_{t_{i+1}}
\label{eq:multi_backward}
\end{equation}
where:
\begin{align*}
    D_f &= \sigma_t^2 \,\mathbb{E}_{\Pi_{t_{i+1}}|t}\big[ \nabla \log \mathbb{Q}^{[t_{i},t_{i+1}]}_{t}(X_{t_{i+1}} \mid X_t) \,\big|\, X_t \big] \\
    D_b &= \sigma_{t_{i+1}-t}^2 \,\mathbb{E}_{\Pi_{t_{i}}|t}\big[ \nabla \log \mathbb{Q}^{[t_{i},t_{i+1}]}_{t}(Y_{t_{i}} \mid Y_t) \,\big|\, Y_t \big]
\end{align*}
\end{proposition}
This key result highlights that the Markovian projection can be expressed both in the forward and in the backward direction, allowing us to design an algorithm that jointly leverages both dynamics.

\begin{conjecture}[Analogue of \citet{leonard2014survey} Theorem 2.12]\label{conj:markov_uniqueness_mmsb}
Let $\mathbb{Q}$ be a Markov reference process. 
Suppose that $P$ is a Markov path measure such that 
\[
P \in \mathcal R(\mathbb{Q}),
\qquad 
P_{t_i} = \mu_{t_i}, \quad i=0,\dots,K.
\]
Then $P$ coincides with the unique solution $P^\star$ of the 
multi-marginal Schrödinger Bridge problem (MMSB) with reference $\mathbb{Q}$.
\end{conjecture}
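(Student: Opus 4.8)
The plan is to reduce the statement to the static coupling on the grid and then run a multi‑marginal Schrödinger‑system uniqueness argument, mirroring the proof of \citet{leonard2014survey} Theorem~2.12 with the grid points $t_0,\dots,t_K$ playing the role of the two endpoints. Concretely, I would (i) use membership in the factorized reciprocal class to show $P$ is the bridge‑mixture of its grid marginal $\pi:=P_{t_0,\dots,t_K}$, so that $P=P^\star$ follows from $\pi=\pi^\star$; (ii) use the Markov property of $P$ to force $\pi$ to have the product‑potential density $d\pi/d\mathbb Q_{t_0,\dots,t_K}=\prod_i f_i(x_i)$; and (iii) conclude by uniqueness of the information projection that $\pi=\pi^\star$.

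\textbf{Steps 1 and 2 (reduction to the grid and product form).} Since $P\in\mathcal R^{\otimes}(\mathbb Q)$, Definition~\ref{def:factorized_reciprocal_class} and Proposition~\ref{prop:local_reciprocal} give $P=\pi\otimes\bigotimes_{i=0}^{K-1}\mathbb Q^{x_i,x_{i+1}}_{[t_i,t_{i+1}]}$ with $\pi=P_{t_0,\dots,t_K}$; when $\mathbb Q$ is Markov this conditional law is exactly $\mathbb Q(\cdot\mid X_{t_0},\dots,X_{t_K})$, so $P$ is determined by $\pi$ just as $P^\star$ is determined by $\pi^\star$ in Proposition~\ref{prop:dyn_static_equiv}, and it suffices to prove $\pi=\pi^\star$. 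Restricting the Markov measure $P$ to the grid times already shows $\pi$ is a Markov chain, hence $h:=d\pi/d\mathbb Q_{t_0,\dots,t_K}$ factors into pairwise terms $g_0(x_0)\prod_i r_i(x_i,x_{i+1})$. To upgrade this to $h(x_0,\dots,x_K)=\prod_i f_i(x_i)$ I would, for each edge $[t_i,t_{i+1}]$, adjoin an intermediate time $u\in(t_i,t_{i+1})$: by the factorized‑reciprocal structure the density of $P$ on the enlarged set of times does not depend on $x_u$, whereas the Markov property of $P$ at $u$, together with the Markovianity of $\mathbb Q$, forces that density to take the form $a(x_0,\dots,x_i,x_u)\,b(x_u,x_{i+1},\dots,x_K)/c(x_u)$; evaluating at a fixed reference value of $x_u$ then produces a splitting of $h$ across the cut between $t_i$ and $t_{i+1}$. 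Performing this at every edge and chaining the splittings gives $h=\prod_i f_i(x_i)$ with $f_i\ge 0$.

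\textbf{Step 3 (Schrödinger‑system uniqueness).} By Proposition~\ref{prop:form_dynamic}, the static bridge $\pi^\star$ has the same product‑potential density $d\pi^\star/d\mathbb Q_{t_0,\dots,t_K}=\prod_i f_i^\star(x_i)$, and by Proposition~\ref{prop:existence_uniqueness} it is the unique minimiser of $\mathrm{KL}(\,\cdot\,\|\mathbb Q_{t_0,\dots,t_K})$ among couplings with marginals $\mu_{t_i}$. Because $\log(d\pi^\star/d\mathbb Q_{t_0,\dots,t_K})=\sum_i\log f_i^\star(x_i)$ is a sum of single‑coordinate functions and $\pi,\pi^\star$ share all grid marginals, one has $\int\log(d\pi^\star/d\mathbb Q_{t_0,\dots,t_K})\,d\pi=\int\log(d\pi^\star/d\mathbb Q_{t_0,\dots,t_K})\,d\pi^\star$, which rearranges into the Pythagorean identity $\mathrm{KL}(\pi\|\mathbb Q_{t_0,\dots,t_K})=\mathrm{KL}(\pi\|\pi^\star)+\mathrm{KL}(\pi^\star\|\mathbb Q_{t_0,\dots,t_K})$; running the same computation with $\pi$ and $\pi^\star$ interchanged is legitimate since $\pi$ also has product‑potential form, and summing the two identities yields $\mathrm{KL}(\pi\|\pi^\star)+\mathrm{KL}(\pi^\star\|\pi)=0$, hence $\pi=\pi^\star$ and, with Step~1, $P=P^\star$.

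\textbf{Main obstacle.} The hard part is turning Steps~2 and~3 from formal manipulations into rigorous ones. Step~2 juggles Radon--Nikodym derivatives, disintegrations, and "evaluate at a fixed point" operations that require positivity and regularity of the transition densities of $\mathbb Q$ and a.e.-defined potentials (this is precisely where \citet{leonard2014survey} and the cited literature invest their technical work), and one must also ensure $\pi\ll\mathbb Q_{t_0,\dots,t_K}$ in the first place. Step~3 needs the log‑potentials $\log f_i,\log f_i^\star$ to be $\mu_{t_i}$-integrable and all KL terms finite, so that the Pythagorean identity is genuine and not merely $\infty=\infty$; these are the "mild assumptions" under which the statement should be upgraded from conjecture to theorem. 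A secondary point worth checking is that a Markov $P$ in the ordinary reciprocal class $\mathcal R(\mathbb Q)$ over $[0,T]$ automatically lies in the factorized class $\mathcal R^{\otimes}(\mathbb Q)$ (further conditioning a Markov bridge on intermediate grid values decouples the sub‑bridges), so that the hypothesis $P\in\mathcal R(\mathbb Q)$ as stated is exactly the one used in Step~1.
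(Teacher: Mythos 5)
The paper never proves this statement: it is deliberately left as Conjecture~\ref{conj:markov_uniqueness_mmsb} (the discussion even lists ``definitively proving our theoretical conjectures'' as future work), so there is no in-paper argument to compare yours against. Judged on its own, your strategy is the natural multi-marginal transcription of L\'eonard's Theorem~2.12 and checks out at the formal level: (i) the reduction to the grid coupling is fine, since for Markov $\mathbb Q$ conditioning the $\mathbb Q$-bridge on intermediate grid values does factor it into independent sub-bridges, so a Markov $P\in\mathcal R(\mathbb Q)$ with the prescribed grid marginals is indeed $\pi\otimes\bigotimes_i \mathbb Q^{x_i,x_{i+1}}_{[t_i,t_{i+1}]}$ with $\pi=P_{t_0,\dots,t_K}$; (ii) your interior-time splitting does force $d\pi/d\mathbb Q_{t_0,\dots,t_K}=\prod_i f_i(x_i)$ (writing out the joint density of $(X_0,X_u,X_1,X_2)$ in the $K=2$ Brownian case confirms the ``cut at $u$ and fix $x_u$'' manipulation, and chaining over edges generalizes it); and (iii) the symmetrized Pythagorean identity, which is legitimate precisely because both $\pi$ and $\pi^\star$ have product-potential densities and share all grid marginals, yields $\mathrm{KL}(\pi\|\pi^\star)+\mathrm{KL}(\pi^\star\|\pi)=0$, hence $\pi=\pi^\star$ and $P=P^\star$.

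As a proof of the statement as literally stated, however, there is a genuine gap — the one you flag but do not close. Nothing in the hypotheses guarantees $\pi\ll\mathbb Q_{t_0,\dots,t_K}$, the a.e.\ positivity of the transition/bridge densities needed to divide by them and to evaluate at a fixed reference value of $x_u$, or the $\mu_{t_i}$-integrability of $\log f_i$ and $\log f_i^\star$ that keeps the two Pythagorean identities from degenerating to $\infty=\infty$; these are exactly the technical conditions under which L\'eonard's two-marginal theorem is established, and exactly why the paper keeps the statement as a conjecture rather than a theorem ``under mild assumptions'' to be made precise. Note also that Step~3 silently inherits the duality hypotheses (i)--(iv) invoked in the proof of Proposition~\ref{prop:form_dynamic} to obtain the product form of $\pi^\star$. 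So your proposal is a correct and well-chosen blueprint — arguably the proof the authors have in mind — but it becomes a proof only once these regularity, positivity and integrability assumptions are stated and verified.
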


\begin{algorithm*}[t!]
\caption{Iterative Markovian Factorized Fitting (IMFF)}
\begin{algorithmic}[1]
  \State \textbf{Input:} time grid $0=t_0<\dots<t_K=T$, marginals $(\mu_{t_i})_{i=0}^K$, reference process $\mathbb{Q}$, number of iterations $N$
  \State \textbf{Init:} choose $\mathbb{P}^0 \in \mathcal{R}^\otimes(Q)$ with $\mathbb{P}^0_{t_i}=\mu_{t_i}$ for all $i$
  \For{$n = 0,\ldots,N-1$}
    \State \textbf{Backward Markovian step:} learn drift $v_\phi$ via SDE~\eqref{eq:multi_backward}, 
           yielding $\mathbb{P}^{2n+1}$ with $t_i$ updated and $t_{i+1}$ from $\mu_{t_{i+1}}$.
    \State \textbf{Forward reciprocal projection:}
           $\mathbb{P}^{2n+1} \gets \proj_{\mathcal{R}^\otimes(Q)}(\mathbb{P}^{2n+1})$ 
           (cf. Def.~\ref{def:factorized_reciprocal_class}), 
           filling bridges with $\mathbb{Q}$ using $t_i$ from $\mathbb{P}^{2n+1}$ and $t_{i+1}$ from the dataset.
    \State \textbf{Forward Markovian step:} learn drift $v_\theta$ via SDE~\eqref{eq:multi_forward}, 
           yielding $\mathbb{P}^{2n+2}$ with $t_{i+1}$ updated and $t_i$ from $\mu_{t_i}$.
    \State \textbf{Backward reciprocal projection:}
           $\mathbb{P}^{2n+2} \gets \proj_{\mathcal{R}^\otimes(Q)}(\mathbb{P}^{2n+2})$ 
           (cf. Def.~\ref{def:factorized_reciprocal_class}), 
           filling bridges with $\mathbb{Q}$ using $t_{i+1}$ from $\mathbb{P}^{2n+2}$ and $t_i$ from the dataset.
  \EndFor
  \State \textbf{Output:} learned drifts $(v_\phi, v_\theta)$
\end{algorithmic}
\label{alg:IMFF}
\end{algorithm*}

\subsubsection{Iterative Markovian Factorized Fitting}

Based on Conjecture~\ref{conj:markov_uniqueness_mmsb}, we propose a novel algorithm called \emph{Iterative Markovian Factorized Fitting} (IMFF) to solve multi-marginal Schrödinger Bridges. We consider a sequence 
$(\mathbb{P}^n)_{n\in\mathbb{N}}$ such that
\begin{equation}
    \mathbb{P}^{2n+1} = \proj_{\mathcal{M}}(\mathbb{P}^{2n}),
    \; 
    \mathbb{P}^{2n+2} = \proj_{\mathcal{R}^{\otimes}(\mathbb{Q})}(\mathbb{P}^{2n+1})
    \label{eq:IMFF}
\end{equation}
with $\mathbb{P}^0$ such that 
$\mathbb{P}^0_{t_i} = \mu_{t_i}$ for all $i=0,\dots,K$, 
and $\mathbb{P}^0 \in \mathcal{R}^{\otimes}(\mathbb{Q})$. 
These updates correspond to alternatively performing Markovian projections and factorized reciprocal projections in order to enforce all prescribed marginals.

\begin{lemma}[Pythagorean identities in the factorized setting]\label{lem:pythagoras_factorized}
Under mild assumptions, if $M \in \mathcal M$, 
$\Pi \in \mathcal R^\otimes(\mathbb{Q})$ and $KL(\Pi \| M)<+\infty$, we have
\[
KL(\Pi \| M) 
= KL(\Pi \| \proj_{\mathcal M}(\Pi)) + KL(\proj_{\mathcal M}(\Pi) \| M)
\]

Similarly, if $KL(M \| \Pi)<+\infty$, we have
\begin{multline*}
    KL(M \| \Pi) = KL(M \| \proj_{\mathcal R^\otimes(\mathbb{Q})}(M)) \\
    + KL(\proj_{\mathcal R^\otimes(\mathbb{Q})}(M) \| \Pi)
\end{multline*}
\end{lemma}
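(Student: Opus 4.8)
The plan is to prove each Pythagorean identity separately, mirroring the classical single-bridge argument but exploiting the factorized (product-over-intervals) structure of $\mathcal{R}^\otimes(\mathbb{Q})$. Throughout I will write the KL divergence as a sum of a marginal (endpoint) term and a conditional (path-given-endpoints) term via the disintegration at the grid times $t_0,\dots,t_K$. The key observation is that every measure involved disintegrates over the vector $(X_{t_0},\dots,X_{t_K})$, and that membership in $\mathcal{R}^\otimes(\mathbb{Q})$ is precisely the statement that the conditional is $\bigotimes_{i=0}^{K-1}\mathbb{Q}^{x_i,x_{i+1}}_{[t_i,t_{i+1}]}$, independently of the measure on the grid points.

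For the first identity, let $\Pi\in\mathcal{R}^\otimes(\mathbb{Q})$ and $M\in\mathcal{M}$. First I would use the chain rule for KL along the disintegration at the full grid: $\mathrm{KL}(\Pi\|M)=\mathrm{KL}(\Pi_{t_0,\dots,t_K}\|M_{t_0,\dots,t_K})+\mathbb{E}_{\Pi_{t_0,\dots,t_K}}[\mathrm{KL}(\Pi_{|t_0,\dots,t_K}\|M_{|t_0,\dots,t_K})]$. By Definition~\ref{def:markov_projection_factorized} and Proposition~\ref{prop:var_proj_factorized}, $\proj_{\mathcal{M}}(\Pi)$ has the same grid marginals as $\Pi$ (indeed the same one-time marginals at every $t$), so the first term equals $\mathrm{KL}(\Pi_{t_0,\dots,t_K}\|M_{t_0,\dots,t_K})=\mathrm{KL}(\proj_{\mathcal{M}}(\Pi)_{t_0,\dots,t_K}\|M_{t_0,\dots,t_K})$. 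It then remains to show the conditional term splits as claimed; this is where Proposition~\ref{prop:local_reciprocal} enters, since on each subinterval $\Pi$ restricted to $[t_{i-1},t_i]$ is a $\mathbb{Q}$-bridge given its endpoints, while $M$, being Markov, also disintegrates interval-by-interval — so the computation reduces, interval by interval, to the single-bridge Pythagorean identity of \citet{shi2023diffusionschrodingerbridgematching} (their Proposition~8 / App.~B), summed over $i$. Assembling the marginal term and the $K$ conditional terms reproduces $\mathrm{KL}(\Pi\|\proj_{\mathcal{M}}(\Pi))+\mathrm{KL}(\proj_{\mathcal{M}}(\Pi)\|M)$, using the explicit expression for $\mathrm{KL}(\Pi\|\proj_{\mathcal{M}}(\Pi))$ from Proposition~\ref{prop:var_proj_factorized}.

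For the second identity, let $M\in\mathcal{M}$ and $\Pi\in\mathcal{R}^\otimes(\mathbb{Q})$ with $\mathrm{KL}(M\|\Pi)<\infty$. Again disintegrate at the grid: $\mathrm{KL}(M\|\Pi)=\mathrm{KL}(M_{t_0,\dots,t_K}\|\Pi_{t_0,\dots,t_K})+\mathbb{E}_{M_{t_0,\dots,t_K}}[\mathrm{KL}(M_{|t_0,\dots,t_K}\|\Pi_{|t_0,\dots,t_K})]$. By the definition of the factorized reciprocal projection, $\proj_{\mathcal{R}^\otimes(\mathbb{Q})}(M)$ keeps $M$'s grid marginals and replaces the conditional path law by $\bigotimes_i\mathbb{Q}^{x_i,x_{i+1}}_{[t_i,t_{i+1}]}$; hence the first (grid) term is already $\mathrm{KL}(M_{t_0,\dots,t_K}\|\Pi_{t_0,\dots,t_K})=\mathrm{KL}(\proj_{\mathcal{R}^\otimes(\mathbb{Q})}(M)\|\Pi)$ minus nothing, because $\proj_{\mathcal{R}^\otimes(\mathbb{Q})}(M)$ and $\Pi$ share the same conditional (both lie in $\mathcal{R}^\otimes(\mathbb{Q})$), so $\mathrm{KL}(\proj_{\mathcal{R}^\otimes(\mathbb{Q})}(M)\|\Pi)$ collapses to exactly that grid term. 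The conditional term, in turn, is exactly $\mathrm{KL}(M_{|t_0,\dots,t_K}\|\bigotimes_i\mathbb{Q}^{x_i,x_{i+1}}_{[t_i,t_{i+1}]})$ averaged over $M_{t_0,\dots,t_K}$, which is precisely $\mathrm{KL}(M\|\proj_{\mathcal{R}^\otimes(\mathbb{Q})}(M))$ by construction. Adding the two gives the claim. Here one should also note that $M$ Markov implies $M\in\mathcal{R}(\mathbb{Q})$-type local bridge structure is \emph{not} assumed; the identity holds for general Markov $M$, and finiteness of $\mathrm{KL}(M\|\Pi)$ forces $M_{|t_0,\dots,t_K}\ll\bigotimes_i\mathbb{Q}^{x_i,x_{i+1}}_{[t_i,t_{i+1}]}$, which is what makes the conditional KL well-defined.

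The main obstacle I anticipate is rigorously justifying the interval-by-interval decomposition of the conditional KL term in the \emph{first} identity: one must check that, for a general Markov $M\in\mathcal{M}$, the path law $M_{|t_0,\dots,t_K}$ genuinely factorizes as $\bigotimes_{i}M_{|t_i,t_{i+1}}$ over the subintervals (this is the Markov property applied at the grid times, but needs care about null sets and the regularity of the bridges), and that the single-interval Pythagorean identity applies with $\mathbb{Q}$ replaced by its restriction to $[t_i,t_{i+1}]$ — which is where the "mild assumptions" (e.g.\ $\sigma_t>0$, local Lipschitzness, finiteness of the relevant entropies) are doing real work, exactly as in the ``assumption that the variational characterization still holds'' caveat of Proposition~\ref{prop:var_proj_factorized}. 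The algebraic assembly is then routine.
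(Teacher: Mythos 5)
Your second identity is handled correctly and in essentially the same way as the paper: the paper writes the Radon--Nikodym factorization $\tfrac{dM}{d\Pi}=\tfrac{dM}{d\Pi^\star}\cdot\tfrac{d\Pi^\star}{d\Pi}(X_{t_0},\dots,X_{t_K})$ with $\Pi^\star=\proj_{\mathcal R^\otimes(\mathbb{Q})}(M)$ and invokes Csisz\'ar's identity, which is exactly your disintegration-at-the-grid argument: the projection keeps $M$'s joint grid law, while $\Pi^\star$ and $\Pi$ share the same conditional product of $\mathbb{Q}$-bridges, so the marginal and conditional KL terms separate into the two claimed summands.

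The first identity is where you have a genuine gap. You claim that $\proj_{\mathcal M}(\Pi)$ ``has the same grid marginals as $\Pi$'' and use this to identify $\mathrm{KL}(\Pi_{t_0,\dots,t_K}\,\|\,M_{t_0,\dots,t_K})$ with $\mathrm{KL}\bigl(\proj_{\mathcal M}(\Pi)_{t_0,\dots,t_K}\,\|\,M_{t_0,\dots,t_K}\bigr)$. This conflates one-time marginals with the joint law at the grid: Proposition~\ref{prop:var_proj_factorized} only gives $M^\star_t=\Pi_t$ for each single $t$, whereas Markovianization in general changes the multi-time coupling $\Pi_{t_0,\dots,t_K}$ --- indeed, if it preserved it, IMF would converge in one step. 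So the grid term does not transfer, and the subsequent interval-by-interval ``assembly'' into $\mathrm{KL}(\Pi\,\|\,\proj_{\mathcal M}(\Pi))+\mathrm{KL}(\proj_{\mathcal M}(\Pi)\,\|\,M)$ does not follow from the pieces you have (note also that $\mathrm{KL}(\Pi\,\|\,\proj_{\mathcal M}(\Pi))$ itself does not split as a grid term plus conditional bridge terms, since $M^\star$ is not in the factorized reciprocal class). The paper proves the Markovian half the way \citet{shi2023diffusionschrodingerbridgematching} do: a Girsanov/drift computation rather than endpoint disintegration. One writes $\mathrm{KL}(\Pi\,\|\,M)$ as an initial-marginal term plus $\tfrac12\int_0^T\mathbb{E}_{\Pi}\bigl[\sigma_t^{-2}\,\|\hat v_t(X_t,X_{t_{i+1}})-v_t(X_t)\|^2\bigr]\,dt$, where $\hat v$ is the interval-wise bridge drift and $v$ the drift of $M$, expands the square around $v^\star_t(X_t)=\mathbb{E}_\Pi[\hat v_t\mid X_t]$, and uses the tower property so the cross term vanishes; the two surviving pieces are exactly $\mathrm{KL}(\Pi\,\|\,\proj_{\mathcal M}(\Pi))$ (the expression in Proposition~\ref{prop:var_proj_factorized}) and $\mathrm{KL}(\proj_{\mathcal M}(\Pi)\,\|\,M)$, and the only marginal fact needed is the one-time equality $M^\star_t=\Pi_t$, which is what actually holds. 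Your proof of the first identity needs to be replaced by (or reduced to) this orthogonality argument; as written, the step equating grid-time joint laws is false.
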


\begin{proposition}\label{prop:convergence_projection}
Under mild assumptions, we have 
\begin{gather*}
    KL(P^{n+1}\,\|\,P^\star) \;\leq\; KL(P^n\,\|\,P^\star) < \infty \\[1mm]
    \lim_{n\to\infty} KL(P^n\,\|\,P^\star) = 0
\end{gather*}
\end{proposition}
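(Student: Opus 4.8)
The plan is to mirror the convergence argument of \citet{shi2023diffusionschrodingerbridgematching} for the two-marginal IMF, using the factorized Pythagorean identities of Lemma~\ref{lem:pythagoras_factorized} as the key structural tool, together with the characterization of $P^\star$ as the unique Markov element of $\mathcal{R}^\otimes(\mathbb{Q})$ matching all marginals (Conjecture~\ref{conj:markov_uniqueness_mmsb}, which we are licensed to assume). First I would establish monotonicity. By Proposition~\ref{prop:form_dynamic} and Proposition~\ref{prop:markovianity}, $P^\star \in \mathcal{M}$ and $P^\star \in \mathcal{R}^\otimes(\mathbb{Q})$ (its marginals at the grid points are the prescribed $\mu_{t_i}$, and conditionally on the grid values it is a product of $\mathbb{Q}$-bridges), and $P^\star_{t_i} = \mu_{t_i}$. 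Since every iterate $\mathbb{P}^n$ also has $\mathbb{P}^n_{t_i} = \mu_{t_i}$ for all $i$ (the Markovian projection preserves all one-dimensional marginals by Proposition~\ref{prop:var_proj_factorized}, and the factorized reciprocal projection preserves the grid marginals by Definition~\ref{def:factorized_reciprocal_class}), the relative entropies $KL(\mathbb{P}^n \| P^\star)$ and $KL(P^\star \| \mathbb{P}^n)$ are the natural quantities to track. For an even step $\mathbb{P}^{2n} \in \mathcal{R}^\otimes(\mathbb{Q})$ with $\mathbb{P}^{2n+1} = \proj_{\mathcal{M}}(\mathbb{P}^{2n})$, apply the first identity of Lemma~\ref{lem:pythagoras_factorized} with $\Pi = \mathbb{P}^{2n}$ and $M = P^\star \in \mathcal{M}$:
\[
KL(\mathbb{P}^{2n} \| P^\star) = KL(\mathbb{P}^{2n} \| \mathbb{P}^{2n+1}) + KL(\mathbb{P}^{2n+1} \| P^\star),
\]
so $KL(\mathbb{P}^{2n+1} \| P^\star) \le KL(\mathbb{P}^{2n} \| P^\star)$. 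For the odd step $\mathbb{P}^{2n+1} \in \mathcal{M}$ with $\mathbb{P}^{2n+2} = \proj_{\mathcal{R}^\otimes(\mathbb{Q})}(\mathbb{P}^{2n+1})$, apply the second identity with $M = \mathbb{P}^{2n+1}$ and $\Pi = P^\star$:
\[
KL(\mathbb{P}^{2n+1} \| P^\star) = KL(\mathbb{P}^{2n+1} \| \mathbb{P}^{2n+2}) + KL(\mathbb{P}^{2n+2} \| P^\star),
\]
giving $KL(\mathbb{P}^{2n+2} \| P^\star) \le KL(\mathbb{P}^{2n+1} \| P^\star)$. Chaining these yields the monotone decreasing claim, and finiteness of $KL(\mathbb{P}^0 \| P^\star)$ follows since $\mathbb{P}^0 \in \mathcal{R}^\otimes(\mathbb{Q})$ shares the grid marginals and the conditional bridge structure with $P^\star$, so the Radon--Nikodym derivative reduces to the finite-entropy static coupling ratio.

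Next I would extract convergence to zero. The monotone bounded sequence $KL(\mathbb{P}^n \| P^\star)$ has a limit $\ell \ge 0$; summing the Pythagorean decompositions telescopes to
\[
\sum_{n} \big( KL(\mathbb{P}^{2n}\|\mathbb{P}^{2n+1}) + KL(\mathbb{P}^{2n+1}\|\mathbb{P}^{2n+2}) \big) = KL(\mathbb{P}^0\|P^\star) - \ell < \infty,
\]
so the successive projection ``gaps'' vanish: $KL(\mathbb{P}^{n}\|\mathbb{P}^{n+1}) \to 0$. By Pinsker (or lower semicontinuity of $KL$ in a suitable topology on path measures) this forces any accumulation point $\mathbb{P}^\infty$ of the sequence to be a fixed point, i.e. $\proj_{\mathcal{M}}(\mathbb{P}^\infty) = \mathbb{P}^\infty = \proj_{\mathcal{R}^\otimes(\mathbb{Q})}(\mathbb{P}^\infty)$, hence $\mathbb{P}^\infty \in \mathcal{M} \cap \mathcal{R}^\otimes(\mathbb{Q})$ with the correct grid marginals. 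Conjecture~\ref{conj:markov_uniqueness_mmsb} then identifies $\mathbb{P}^\infty = P^\star$, so $\ell = \lim KL(\mathbb{P}^n \| P^\star) = KL(P^\star \| P^\star) = 0$ — with a small extra argument (the monotone limit $\ell$ is unique regardless of which subsequence we pass to) to promote subsequential convergence to full convergence of the entropy sequence.

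The main obstacle I anticipate is the compactness/identification step: establishing that the sequence $(\mathbb{P}^n)$ admits accumulation points in a topology for which both $\proj_{\mathcal{M}}$ and $\proj_{\mathcal{R}^\otimes(\mathbb{Q})}$ are suitably continuous or lower semicontinuous, so that vanishing gaps genuinely imply a fixed point. In the two-marginal case this relies on tightness arguments and the structure of the reciprocal class; here the factorized reciprocal class $\mathcal{R}^\otimes(\mathbb{Q})$ is a nonstandard object (a product of local bridges glued by a global coupling), and one must check that the argument of \citet{shi2023diffusionschrodingerbridgematching} transfers — in particular that the ``mild assumptions'' (finite entropy of $\mathbb{P}^0$, positivity of $\sigma_t$, enough regularity for the Markovian projection theorem of \citet{gyongy1986} to apply on each subinterval) are exactly what is needed to run the telescoping and the lower-semicontinuity step without circularity. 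A secondary subtlety is that we invoke Conjecture~\ref{conj:markov_uniqueness_mmsb} rather than a theorem, so the convergence statement is, strictly, conditional on it; I would flag this dependence explicitly. Everything else — the two Pythagorean applications, the telescoping, and Pinsker — is routine once the functional-analytic setup matches that of the binary IMF.
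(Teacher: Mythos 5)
Your proposal follows essentially the same route as the paper's own proof: both apply the factorized Pythagorean identities of Lemma~\ref{lem:pythagoras_factorized} with $P^\star \in \mathcal{M} \cap \mathcal{R}^\otimes(\mathbb{Q})$ to obtain the per-step, telescoping decrease of $KL(\mathbb{P}^n\,\|\,P^\star)$ together with finiteness, and then identify the limit with $P^\star$ through uniqueness of the Markov element of the factorized reciprocal class matching the prescribed marginals. If anything, your treatment of the final limit-zero step (vanishing projection gaps, accumulation points, lower semicontinuity, and the explicit conditioning on Conjecture~\ref{conj:markov_uniqueness_mmsb}) is more explicit than the paper's, which asserts the identification tersely and defers the compactness and lower-semicontinuity details to the proof of Theorem~\ref{thm:imff_convergence}.
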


Hence, for the IMFF sequence $(\mathbb{P}^n)_{n \in \mathbb{N}}$, 
the Markov path measures $(\mathbb{P}^{2n+1})_{n \in \mathbb{N}}$ are getting closer to 
the factorized reciprocal class $\mathcal{R}^{\otimes}(\mathbb{Q})$, while the reciprocal path measures 
$(\mathbb{P}^{2n+2})_{n \in \mathbb{N}}$ are getting closer to the set of Markov measures. 
This mirrors the situation in the classical IMF setting, but now in the multi-marginal framework.

\begin{theorem}
\label{thm:imff_convergence}
Under mild assumptions, the IMFF sequence $(\mathbb{P}^n)_{n \in \mathbb{N}}$ 
admits at least one fixed point $\mathbb{P}^\star$, and we have:
\[
\lim_{n \to +\infty} KL(\mathbb{P}^n \,\|\, \mathbb{P}^\star) = 0
\]
Moreover, denoting by $\mathbb{P}^{\mathrm{MMSB}}$ the solution of (\hyperref[eqn:MMSB]{MMSB}) and by $\mathbb{P}^{\mathrm{pair}}$ the gluing of pairwise Schrödinger Bridges, the limit of the IMFF sequence satisfies the inequality:
\[
KL(\mathbb{P}^{\mathrm{MMSB}} \,\|\, \mathbb{Q})
\; = \; KL(\mathbb{P}^\star \,\|\, \mathbb{Q})
\;\leq\; KL(\mathbb{P}^{\mathrm{pair}} \,\|\, \mathbb{Q})
\]
where $\mathbb{Q}$ is the chosen reference process. 
Thus, $\mathbb{P}^\star$ is the multi-marginal Schrödinger Bridge.
\end{theorem}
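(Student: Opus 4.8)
The plan is to follow the classical Iterative Markovian Fitting convergence argument of \citet{shi2023diffusionschrodingerbridgematching}, re-derived for the factorized projections, and then append a short variational comparison for the sandwich inequality.

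\textbf{Step 1 (monotone energy).} First I would apply Lemma~\ref{lem:pythagoras_factorized} with the reference $\mathbb{Q}$ itself: $\mathbb{Q}$ is Markov (so $\mathbb{Q}\in\mathcal{M}$) and, being Markov, conditioning it on the grid points factorizes it into independent bridges, so $\mathbb{Q}\in\mathcal{R}^{\otimes}(\mathbb{Q})$. The two Pythagorean identities then give, for every $n$,
\[
KL(\mathbb{P}^{2n}\|\mathbb{Q}) = KL(\mathbb{P}^{2n}\|\mathbb{P}^{2n+1}) + KL(\mathbb{P}^{2n+1}\|\mathbb{Q}),\qquad KL(\mathbb{P}^{2n+1}\|\mathbb{Q}) = KL(\mathbb{P}^{2n+1}\|\mathbb{P}^{2n+2}) + KL(\mathbb{P}^{2n+2}\|\mathbb{Q}),
\]
so $(KL(\mathbb{P}^n\|\mathbb{Q}))_n$ is non-increasing and bounded below by $0$, hence convergent; telescoping yields $\sum_n KL(\mathbb{P}^n\|\mathbb{P}^{n+1})<\infty$, and in particular $KL(\mathbb{P}^n\|\mathbb{P}^{n+1})\to 0$.

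\textbf{Step 2 (existence of a fixed point).} The grid marginals of every iterate are pinned to $\mu_{t_i}$ (Definition~\ref{def:factorized_reciprocal_class}, Proposition~\ref{prop:var_proj_factorized}), and the in-between dynamics of the reciprocal iterates are explicit $\mathbb{Q}$-bridges; this gives tightness of $(\mathbb{P}^n)_n$ in $\mathcal{P}(C)$, and I would extract a weakly convergent subsequence $\mathbb{P}^{n_k}\rightharpoonup\mathbb{P}^\star$. Using lower semicontinuity of $KL$, closedness of $\mathcal{M}$ and of $\mathcal{R}^{\otimes}(\mathbb{Q})$ under weak limits, and the coalescence of consecutive iterates from Step 1, I would conclude $\mathbb{P}^\star\in\mathcal{M}\cap\mathcal{R}^{\otimes}(\mathbb{Q})$ with $\mathbb{P}^\star_{t_i}=\mu_{t_i}$; both projections then fix $\mathbb{P}^\star$, so it is a fixed point of \eqref{eq:IMFF}. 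Next, reapplying Lemma~\ref{lem:pythagoras_factorized} with $\mathbb{P}^\star$ in the role of the Markov (resp.\ factorized-reciprocal) element shows $(KL(\mathbb{P}^n\|\mathbb{P}^\star))_n$ is non-increasing; combining this monotonicity with $\mathbb{P}^{n_k}\to\mathbb{P}^\star$ and the regularity assumptions (which, exactly as in the single-marginal IMF analysis, upgrade subsequential weak convergence to vanishing KL) forces $\lim_n KL(\mathbb{P}^n\|\mathbb{P}^\star)=0$.

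\textbf{Step 3 (sandwich inequality).} For the lower bound, $\mathbb{P}^\star$ is a path measure with the prescribed marginals, and $\mathbb{P}^{\mathrm{MMSB}}$ minimizes $KL(\cdot\|\mathbb{Q})$ over all such measures by \eqref{eqn:MMSB}, so $KL(\mathbb{P}^{\mathrm{MMSB}}\|\mathbb{Q})\le KL(\mathbb{P}^\star\|\mathbb{Q})$ is immediate. For the upper bound, I would note that $\mathbb{P}^{\mathrm{pair}}$, the concatenation of the pairwise Schrödinger bridges on the intervals $[t_i,t_{i+1}]$, is Markov (gluing Markov bridges along matching intermediate marginals) and lies in $\mathcal{R}^{\otimes}(\mathbb{Q})$ (each piece is conditionally a $\mathbb{Q}$-bridge and the pieces are mutually independent given the grid points), with the correct marginals; hence it is itself a fixed point of IMFF and an admissible starting point. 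Since $KL(\cdot\|\mathbb{Q})$ is non-increasing along the algorithm (Step 1) and its limit equals $KL(\mathbb{P}^\star\|\mathbb{Q})$ (Step 2 with lower semicontinuity), comparing with the run started at $\mathbb{P}^{\mathrm{pair}}\in\mathcal{R}^{\otimes}(\mathbb{Q})$ gives $KL(\mathbb{P}^\star\|\mathbb{Q})\le KL(\mathbb{P}^{\mathrm{pair}}\|\mathbb{Q})$.

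The main obstacle is Step 2: establishing the tightness and weak-closedness needed for a fixed point to exist, and then promoting subsequential weak convergence to convergence of the full sequence in $KL$. This is where the ``mild assumptions'' do their work and where the argument must carefully re-derive, for the factorized projections, the technical core of \citet{shi2023diffusionschrodingerbridgematching}. A secondary subtlety worth flagging is that $\mathcal{R}^{\otimes}(\mathbb{Q})$ is strictly larger than the true reciprocal class of $\mathbb{Q}$ over $[0,T]$ (by Proposition~\ref{prop:local_reciprocal} its elements are only \emph{locally} reciprocal), so a fixed point need not be the genuine multi-marginal bridge — which is precisely why only the two-sided inequality, rather than equality with $\mathbb{P}^{\mathrm{MMSB}}$, is attainable.
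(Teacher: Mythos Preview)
Your overall strategy coincides with the paper's: both invoke the Pythagorean identities of Lemma~\ref{lem:pythagoras_factorized} to obtain monotonicity, extract a weak limit by compactness, identify it as an element of $\mathcal{M}\cap\mathcal{R}^\otimes(\mathbb{Q})$ with the prescribed marginals, and conclude via lower semicontinuity of $KL$. Your Step~1 (monotonicity of $KL(\mathbb{P}^n\|\mathbb{Q})$) is an additional intermediate step absent from the paper, which instead applies Proposition~\ref{prop:convergence_projection} directly with $\mathbb{P}^\star$ in the target slot; both routes rest on the same lemma, so this is a reordering rather than a different argument. Your tightness justification and your closing remark on why only a sandwich (not equality with $\mathbb{P}^{\mathrm{MMSB}}$) is available are more explicit than the paper's treatment.

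There is, however, a gap in your upper bound. You argue that $\mathbb{P}^{\mathrm{pair}}$ is itself a fixed point of IMFF and then ``compare with the run started at $\mathbb{P}^{\mathrm{pair}}$''. But if $\mathbb{P}^{\mathrm{pair}}$ is a fixed point, that run is constant, and its limit is $\mathbb{P}^{\mathrm{pair}}$, not the $\mathbb{P}^\star$ obtained from the \emph{original} initialization; the monotonicity of $KL(\cdot\|\mathbb{Q})$ along one run says nothing about the limit of another. To close the argument you need either uniqueness of the fixed point (which would force $\mathbb{P}^\star=\mathbb{P}^{\mathrm{pair}}$ and collapse the upper inequality to equality) or the stronger statement that $\mathbb{P}^\star$ minimizes $KL(\cdot\|\mathbb{Q})$ over all of $\mathcal{M}\cap\mathcal{R}^\otimes(\mathbb{Q})$ with the prescribed marginals. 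The paper takes the latter route, asserting in one line that ``$\mathbb{P}^\star$ is the best Markovian candidate in $\mathcal{M}\cap\mathcal{R}^\otimes(\mathbb{Q})$'' and hence dominated by the pairwise construction; this is stated rather than proved there, but it is the property you need to invoke in place of the run-comparison.
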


\subsubsection{Theoretical algorithm}
The Markovian projection necessitates learning one neural drift per direction. Concretely, we solve
\begin{equation}
    \operatornamewithlimits{argmin}_{\theta}
    \mathbb{E}_b\!\left[
    \big\| v_\theta(X_t,t) - 
    \sigma_t^2 \, \mathbb{E}\big[G_t(X_{t_{i+1}}) \,\big|\,X_t\big] 
    \big\|^2 \right]
    \label{eq:forward}
\end{equation}
for the \emph{forward} drift $v_\theta$, and
\begin{equation}
    \operatornamewithlimits{argmin}_{\phi} 
    \mathbb{E}_b\!\left[
    \big\| v_\phi(Y_t,t) - 
    \sigma_{t_{i+1}-t}^2 \, \mathbb{E}\big[G_t(Y_{t_i})\,\big|\,Y_t\big] 
    \big\|^2 \right]
    \label{eq:backward}
\end{equation}
for the \emph{backward} drift $v_\phi$, where $\mathbb{E}_b$ is over the batch and:
\begin{align*}
    G_t(Z) &= \nabla \log \mathbb{Q}^{[t_{i(t)},t_{i(t)+1}]}_t(Z \mid Z_t) \\
\end{align*}

We summarize in Algorithm \ref{alg:IMFF} our method and provide a practical implementation of IMFF in \ref{subsec:concrete_algo}.

\begin{proposition}
Suppose the families of functions $\{v_\theta : \theta \in \Theta\}$ and $\{v_\phi : \phi \in \Phi\}$ are rich enough to represent the optimal forward and backward drifts. Let $(P^n, M^n)_{n \in \mathbb{N}}$ be the sequence produced by Algorithm~1. Then, as $n \to \infty$, we have convergence towards an approximate multi-marginal Schrödinger Bridge. Moreover, the Markov law $M^n$ coincides in the limit with the intermediate approximate MMSB solution lying between the true multi-marginal Schrödinger Bridge and the pairwise construction.
\end{proposition}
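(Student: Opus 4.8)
The plan is to reduce the analysis of Algorithm~\ref{alg:IMFF} to the convergence result already in hand, Theorem~\ref{thm:imff_convergence}, by showing that under the richness hypothesis the sequence $(P^n,M^n)$ produced by the algorithm is, after re-indexing, exactly the idealized IMFF sequence $(\mathbb{P}^n)_{n\in\mathbb{N}}$ of~\eqref{eq:IMFF}. I would proceed in three stages: (i) identify each practical step with the corresponding exact projection; (ii) transfer the convergence and the sandwich inequality of Theorem~\ref{thm:imff_convergence}; (iii) upgrade convergence of the reciprocal iterates $P^n$ to convergence of the Markov iterates $M^n$.

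For the Markovian steps, the key tool is the variational characterization of Proposition~\ref{prop:var_proj_factorized}: for $\Pi\in\mathcal R^\otimes(\mathbb Q)$ one has $KL(\Pi\|M)=\tfrac12\int_0^T \mathbb E_{\Pi_{t_i,t}}[\sigma_t^{-2}\|\sigma_t^2\,\mathbb E_{\Pi_{t_{i+1}|t}}[\nabla\log\mathbb{Q}^{\,|t_{i},t_{i+1}}_{t}(X_{t_{i+1}}|X_t)\mid X_t] - v^{M}_t(X_t)\|^2]\,dt$ up to a constant independent of the extra drift $v^{M}$ of $M$. The loss minimized in~\eqref{eq:forward} is precisely (the batch estimate of) this integral, so its population version is $KL(\Pi\|M_\theta)$ minus a constant; under the hypothesis that $\{v_\theta:\theta\in\Theta\}$ represents the optimal drift $v^\star$ of Definition~\ref{def:markov_projection_factorized}, the minimizer satisfies $v_{\theta^\star}=v^\star$ $\,dt\otimes\Pi_t$-a.e., hence the learned law equals $\proj_{\mathcal M}(\Pi)$; the symmetric argument with~\eqref{eq:backward} and SDE~\eqref{eq:multi_backward} yields the backward Markovian projection. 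The reciprocal half-steps involve no learning: keeping the grid marginals $P_{t_0,\dots,t_K}$ and refilling every subinterval with the bridge of $\mathbb Q$ is literally $\proj_{\mathcal R^\otimes(\mathbb Q)}$ of Definition~\ref{def:factorized_reciprocal_class}. Since $\mathbb P^0\in\mathcal R^\otimes(\mathbb Q)$ with $\mathbb P^0_{t_i}=\mu_{t_i}$, an induction over the loop index shows the Algorithm~\ref{alg:IMFF} iterates (re-indexed so that loop $n$ produces $\mathbb P^{2n+1}$ and $\mathbb P^{2n+2}$) coincide with the $\mathbb{P}^n$ of~\eqref{eq:IMFF}, with $M^n$ the Markov iterates and $P^n$ the factorized-reciprocal ones.

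With this identification, Proposition~\ref{prop:convergence_projection} gives the monotone decay $KL(\mathbb P^{n+1}\|\mathbb P^\star)\le KL(\mathbb P^n\|\mathbb P^\star)<\infty$ with $KL(\mathbb P^n\|\mathbb P^\star)\to 0$, and Theorem~\ref{thm:imff_convergence} provides a fixed point $\mathbb P^\star$ together with the chain $KL(\mathbb{P}^{\mathrm{MMSB}}\|\mathbb Q)\le KL(\mathbb P^\star\|\mathbb Q)\le KL(\mathbb{P}^{\mathrm{pair}}\|\mathbb Q)$, which is the stated ``intermediate'' characterization. To pass to the Markov law, I would invoke the Pythagorean identities of Lemma~\ref{lem:pythagoras_factorized}: since $\mathbb P^\star$ is a fixed point it is Markov, so $KL(P^n\|\mathbb P^\star)=KL(P^n\|\proj_{\mathcal M}(P^n))+KL(\proj_{\mathcal M}(P^n)\|\mathbb P^\star)=KL(P^n\|M^{n})+KL(M^{n}\|\mathbb P^\star)$, forcing both summands to $0$; the reciprocal-direction identity handles the complementary half-indices. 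Hence $M^n\to\mathbb P^\star$ in KL, which is exactly the claim.

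The step I expect to be the main obstacle is making the richness hypothesis rigorous at the level of path measures rather than drifts: one must argue that a vanishing (or infimal) regression loss genuinely implies equality of the induced laws — this needs $\sigma_t>0$, mutual absolute continuity on each subinterval, and a Girsanov-type argument that two diffusions with identical diffusion coefficient and $dt\otimes\Pi_t$-a.e. equal drifts, conditioned on the grid marginals, induce the same path measure — and one must be explicit that the object actually minimized is the population KL of Proposition~\ref{prop:var_proj_factorized} and not merely an empirical surrogate (so exact batch statistics, or a law-of-large-numbers limit, is implicit in ``rich enough''). Beyond that the conclusion is inherited modulo Conjecture~\ref{conj:markov_uniqueness_mmsb}, on which Theorem~\ref{thm:imff_convergence} rests, and modulo the ``mild assumptions'' already invoked there. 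No genuinely new difficulty should arise once these points are granted: the result is a transfer statement, reducing the behaviour of Algorithm~\ref{alg:IMFF} to the already-established Theorem~\ref{thm:imff_convergence}.
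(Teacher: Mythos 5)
Your proposal is correct and follows essentially the same route as the paper's own argument: identify the algorithm's steps with the exact Markovian and factorized-reciprocal projections, then run the Pythagorean/monotone-KL induction of Lemma~\ref{lem:pythagoras_factorized} and Proposition~\ref{prop:convergence_projection} and conclude via Theorem~\ref{thm:imff_convergence} (and the uniqueness underlying Conjecture~\ref{conj:markov_uniqueness_mmsb}) to obtain convergence and the sandwich inequality. If anything, you are more explicit than the paper on two points it glosses over: that the richness hypothesis together with the variational characterization of Proposition~\ref{prop:var_proj_factorized} turns the population regression losses \eqref{eq:forward}--\eqref{eq:backward} into exact Markovian projections, and that the Pythagorean identity forces $KL(M^n\,\|\,\mathbb{P}^\star)\to 0$ for the Markov iterates themselves.
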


\section{Experiments}
For all experiments, we employ Brownian motion $(\sigma_tB_t)_{0\le t \le T}$ for the reference measure $\mathbb{Q}$ and $T = N - 1$ where $N$ is the number of marginals. All trainings start after a warmup phase like in \citet{shi2023diffusionschrodingerbridgematching}, detailed in \ref{subsec:concrete_algo}. Videos for most experiments can be found at \href{https://mmtsbm.notion.site}{mmtsbm.notion.site}.

\subsection{MMtSBM recovers the exact OT between Gaussian mixtures}\label{sec:exact_ot}
In this 2D experiment akin to \citet{liu2022flowstraightfastlearning}, we used $N=3$ mixtures of two standard Gaussians as marginals. 
\begin{figure}[h]
    \centering
    \includegraphics[width=0.7\linewidth]{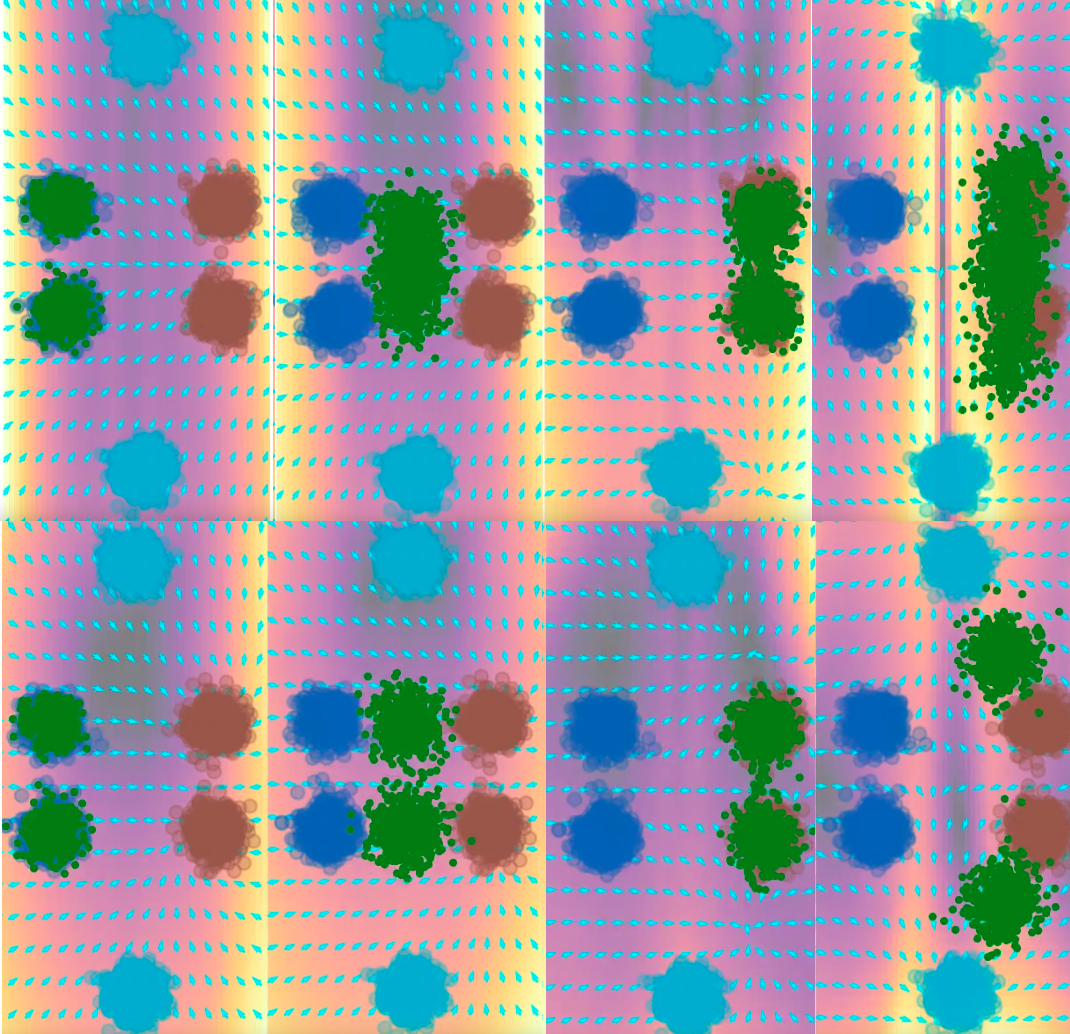}
    \caption{\small Top row: epoch $0$ (only noisy flow matching). Bottom row: epoch $5$ (after MMtSBM training). From left to right: snapshots at times $(0,0.5,1,1.3)$. True marginal times $(t_0, t_1, t_2)=(0, 1, 2)$. The order of the $3$ true marginals is: \textcolor[HTML]{0061B4}{$t_0=$ dark blue}; \textcolor[HTML]{995649}{$t_1=$ red}; \textcolor[HTML]{00ADCF}{$t_2=$ light blue}. \textcolor[HTML]{007C11}{Generated samples} are in \textcolor[HTML]{007C11}{green}. In the background is the quiver plot of the learned score network.}
    \label{fig:crossing_mixture}
\end{figure}

In this configuration the optimal transport \emph{between each pair of marginals} can be computed exactly: it is a pure translation of each Gaussian component inside the mixtures, as we verified with \href{https://pythonot.github.io/}{POT} (see Figure~\ref{fig:pot_exact_ot}). After only the warm-up phase (akin to flow matching \citep{lipman2023flowmatchinggenerativemodeling}), we can see that the learned transport maps \textit{mix} the Gaussian components of the mixtures, resulting in intersecting trajectories as can be seen in the top row of Figure~\ref{fig:crossing_mixture}. However, \textit{after} the SB learning phase of MMtSBM, we can see in the bottom row that the learned trajectories do \textit{not} intersect each other anymore and that MMtSBM yields the expected exact optimal transport map: pure translations between Gaussian components.\\
This observation is consistent with the theory: the warm-up phase preserves only the Markov property, while the final learned coupling additionally also preserves the reciprocal property, thus corresponding to the true SB. We empirically observe that the optimality emerges gradually along MMtSBM training epochs: trajectories get \textit{rectified} from epoch $1$, become optimal around epoch $5$, and consistently remain so after. We will now confirm these visual findings with quantitative metrics in \ref{sec:moons_8gauss}.

\subsection{MMtSBM achieves good usual SB metrics}\label{sec:moons_8gauss}
To quantitatively verify that MMtSBM recovers the correct multi-marginal SB in terms of both 1) static coupling and 2) energy minimization, we extended the now classical "Moons" and "8Gaussians" experiments found in \citet{tong2024improvinggeneralizingflowbasedgenerative} and \citet{shi2023diffusionschrodingerbridgematching} to our temporal multi-marginal setting in Table \ref{tab:moons_8gaussian} (see Figure \ref{fig:moons_8gaussian}). Choosing $N=4$, we considered ($\mathcal{N}$ $\to$ Moons $\to$ $\mathcal{N}$ $\to$ Moons), and ($\mathcal{N}$ $\to$ 8Gaussians $\to$ $\mathcal{N}$ $\to$ 8Gaussians). To assess 1) we report the $\mathcal{W}_2$ distance of generations vs test set data at target marginal time(s), averaging along the $N-1=3$ target times for MMtSBM and comparing this to the single bridge setting. To assess 2) we report the full path energy $\mathbb{E}\left[\int_{0}^{T} \| v(t, \mathbf{Z}_t) \|^2 \, dt \right]$ where $Z_t$ is the process simulated along the ODE drift \ref{eq:ODE_drift}.
\begin{table}[h]
    \centering
    \small
    \caption{\small Comparison in terms of static coupling ("$\mathcal{W}_2$") and energy minimization ("Path Energy"). The rows marked ``$\times 3$'' correspond to the hypothetical case where the energy of a single bridge is simply tripled, and are included as an ideal baseline for comparison with our actual multi-bridge setting. All metrics apart from ours are from \citet{shi2023diffusionschrodingerbridgematching}.}
    \begin{tabular}{clcc}
    \hline
     & Model & $\mathcal{W}_2$ & Path Energy \\
    \hline
    \multirow{3}{*}{\rotatebox{90}{Moons }} & Single bridge & $0.144 {\scriptstyle \pm 0.024}$ & $\textit{1.580} {\scriptstyle \pm \textit{0.036}}$ \\
    & Single bridge $\times 3$ & -- & $4.740$ \\
    & MMtSBM (ours) & $0.148 {\scriptstyle \pm 0.041}$ & $5.350 {\scriptstyle \pm 0.085}$ \\
    \hline\hline
    \multirow{3}{*}{\rotatebox{90}{8 $\mathcal{N}$}} & Single bridge & $0.338 {\scriptstyle \pm 0.091}$ & $\textit{14.810} {\scriptstyle \pm \textit{0.255}}$ \\
    & Single bridge $\times 3$ & -- & $44.430$ \\
    & MMtSBM (ours) & $0.352 {\scriptstyle \pm 0.084}$ & $46.920 {\scriptstyle \pm 0.285}$ \\
    \hline
    \end{tabular}
    \label{tab:moons_8gaussian}
\end{table}

We observe that despite a much more complex \emph{time-varying} true transport map to be learned, MMtSBM achieves almost as low $\mathcal{W}_2$ distances as the simple single-bridge setting (3\% to 4\%), and that our full path energy is within 13\% to 6\% of the ideal extrapolation of the single bridge result. This validates that MMtSBM manages to approach the true SB in practice.

\subsection{MMtSBM scales to $50d$ Gaussian transport}\label{sec:50d_gauss}
\begin{figure*}[t]
    \centering
        \includegraphics[trim=2mm 2mm 2mm 2mm, clip, width=0.25\linewidth]{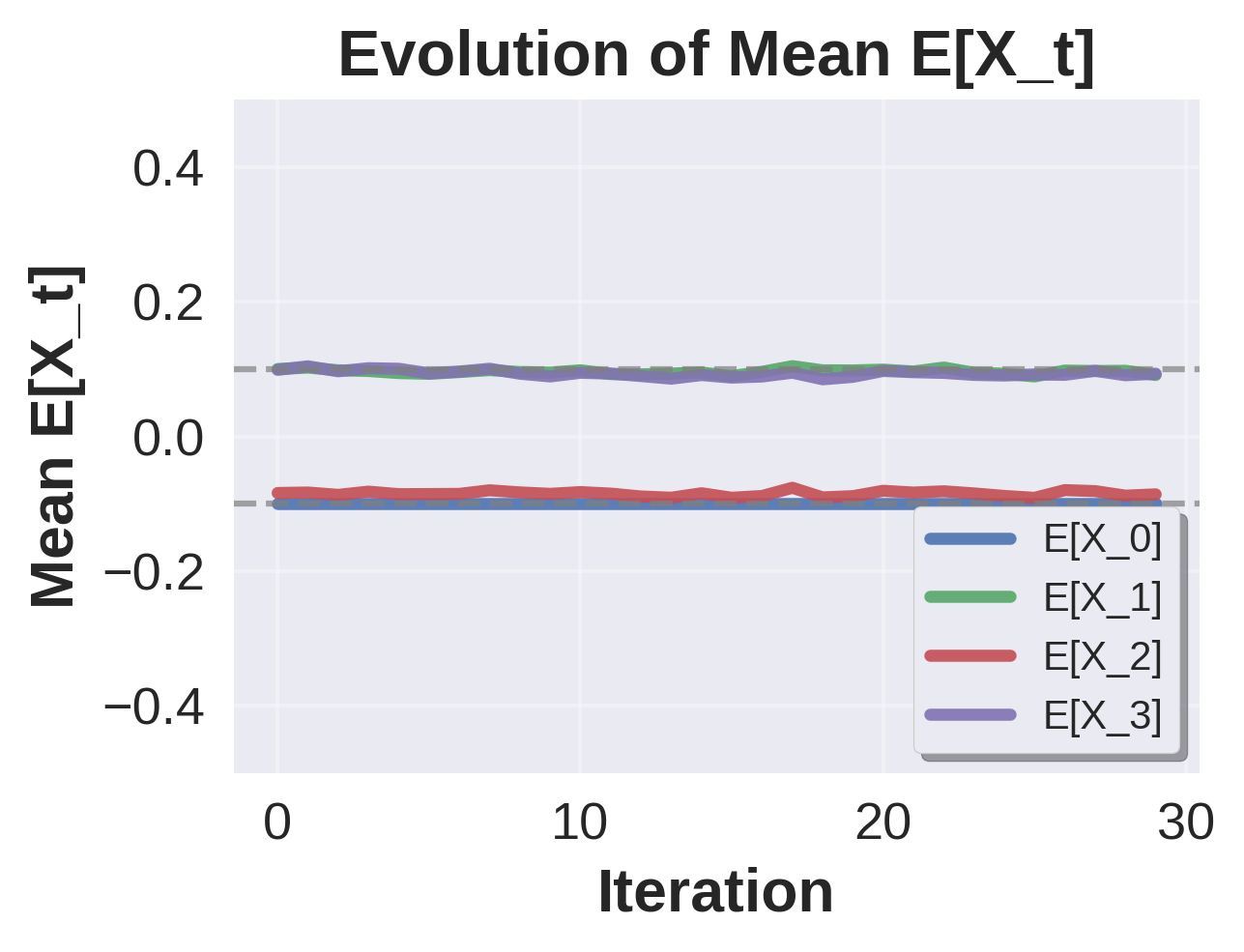}
        \includegraphics[trim=2mm 2mm 2mm 2mm, clip, width=0.25\linewidth]{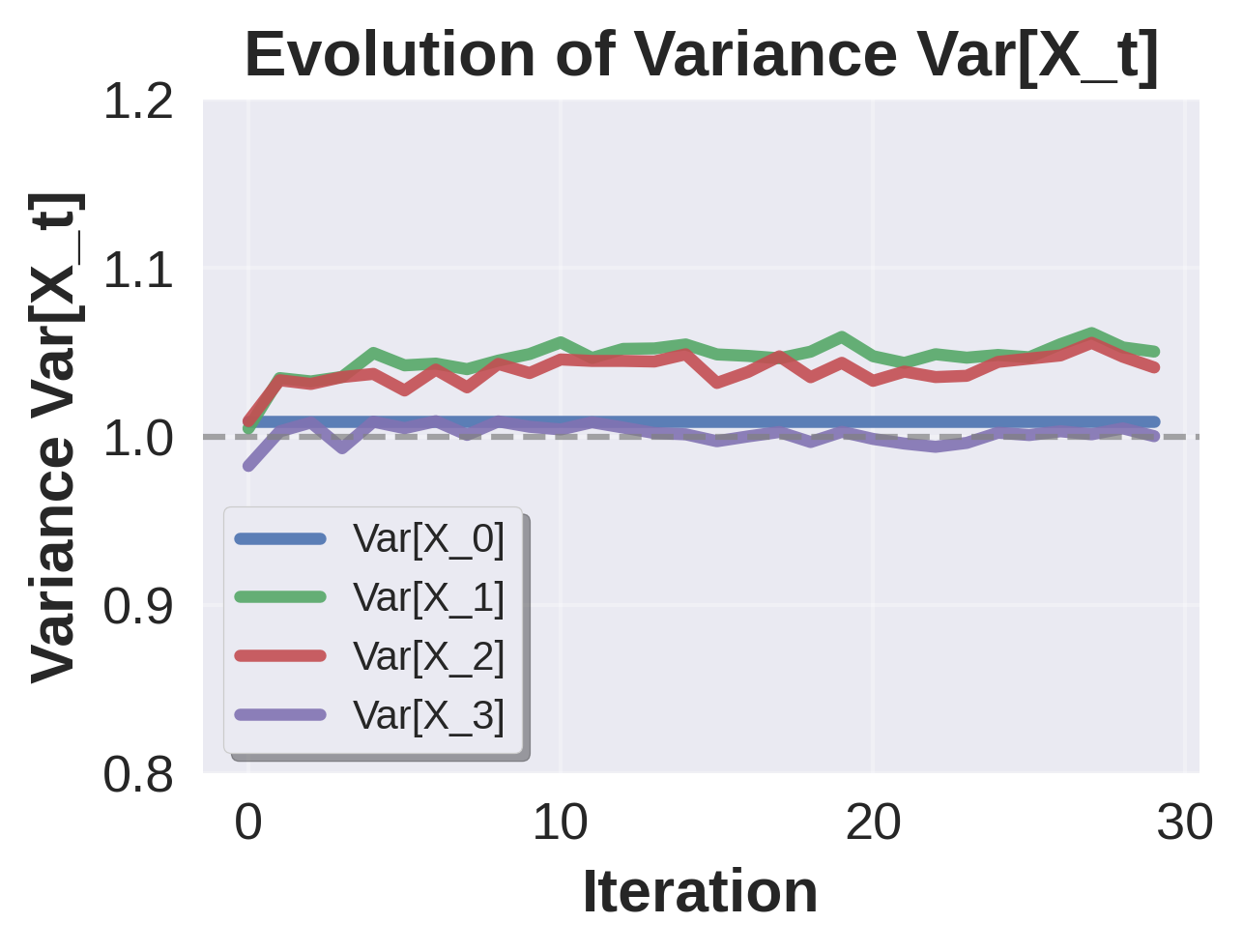}
        \includegraphics[trim=2mm 2mm 2mm 2mm, clip, width=0.25\linewidth]{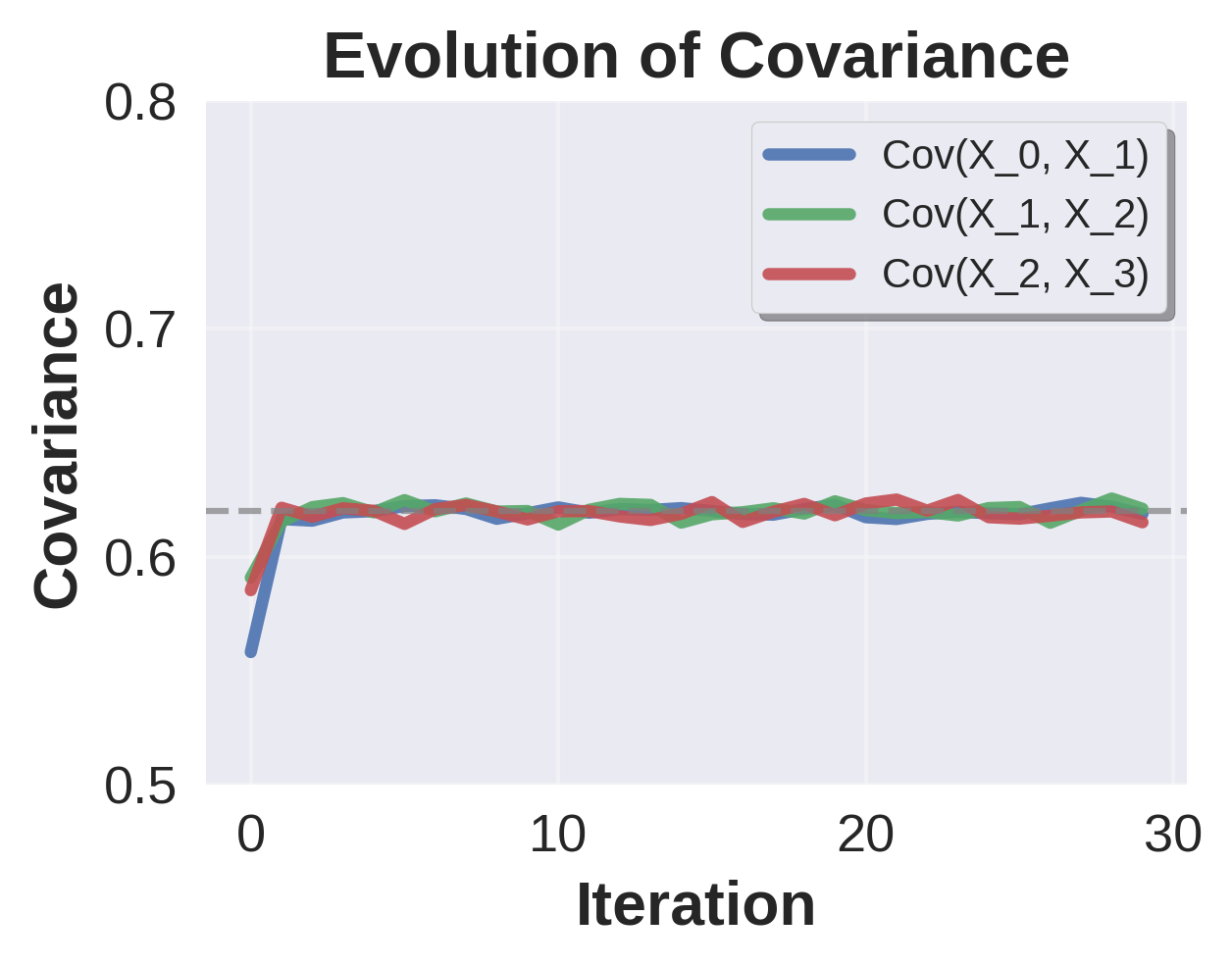}
        \caption{\small Evolution of mean, variance, and covariance in the multi-marginal $50d$ Gaussian case. Dashed lines are the theoretical values.}
        \label{fig:gaussian-evolution}
\end{figure*}
We next proceed to scaling our method to dimension $d=50$. We follow the setting of \citet{shi2023diffusionschrodingerbridgematching} and consider a Gaussian-to-Gaussian transport experiment, extended to our multi-marginal case. Specifically, we prescribe four Gaussian marginals at times $t=0,1,2,3$: $\mu_0 = \mbox{$\mathcal{N}(-0.1 \cdot \mathbf{1}_d, I_d)$}, \mu_1 = \mathcal{N}(0.1 \cdot \mathbf{1}_d, I_d), \mu_2 = \mathcal{N}(-0.1 \cdot \mathbf{1}_d, I_d), \mu_3 = \mathcal{N}(0.1 \cdot \mathbf{1}_d, I_d)$ where $\mathbf{1}_d \in \mathbb{R}^d$ denotes the vector of all ones, and $I_d$ is the $d \times d$ identity matrix.
Since no closed-form solution is available for the static multi-marginal SB, we compare our method to the sequence of theoretical results for each \emph{pairwise} SB \citep{bunne2022}. As shown in Figure~\ref{fig:gaussian-evolution}, the mean converges rapidly to the prescribed values ($0.1$ or $-0.1$) across all four marginals. The variance is slightly more difficult to match: for interior marginals the process tends to overestimate the standard deviation. In contrast, the covariance is consistently well reproduced by our method and remains stable across all three transitions. Interestingly, the covariance converges only after the warmup stage, confirming the added value of the subsequent OT phases. Overall, these results show that MMtSBM scales effectively to the multi-marginal Gaussian setting in $d=50$.

\subsection{MMtSBM achieves SOTA results on $100d$ transcriptomic benchmarks}\label{sec:100_transcriptomic}
\begin{table}[h]
    \small
    \setlength{\tabcolsep}{3pt}
    \centering
    \caption{\small MMD and SWD of generations vs test set for the $d=100$ EB benchmark. Our generations start from $\mu_{t=0}^\text{test}$. Top table: per-marginal metrics. Bottom table: average over all marginals. Others' results are from \citet{chen2023deepmomentummultimarginalschrodinger}. Our error margins are over 10 evaluations while DMSB's are over 3. Best value in \textbf{bold}.}
    \label{tab:embryoid-benchmark}
    \begin{tabular}{c|ll|ll}
        \toprule
             & \multicolumn{2}{c|}{DMSB \citep{chen2023deepmomentummultimarginalschrodinger}} & \multicolumn{2}{c}{\textbf{MMtSBM} (ours)} \\
        Time &               MMD $\downarrow$ & SWD $\downarrow$                              & MMD $\downarrow$ & SWD $\downarrow$ \\
        \midrule
        $t_1$ & 0.021 & 0.114 & \textbf{0.016} & \textbf{0.104} \\
        $t_2$ & 0.029 & 0.155 & \textbf{0.020} & \textbf{0.139} \\
        $t_3$ & 0.038 & 0.190 & \textbf{0.020} & \textbf{0.127} \\
        $t_4$ & 0.034 & 0.155 & \textbf{0.020} & \textbf{0.143} \\
        \midrule
        Average & 0.032 ${\scriptstyle \pm 3e-3}$ & 0.160 ${\scriptstyle \pm 2e-2}$ & \textbf{0.019} ${\scriptstyle \pm 4e-4}$ & \textbf{0.130} ${\scriptstyle \pm 2e-3}$ \\
        \bottomrule
    \end{tabular}
    
    \vspace{1mm}
    
    \begin{tabular}{l|ll}
        \toprule
        Algorithm & MMD $\downarrow$ & SWD $\downarrow$ \\
        \midrule
        NLSB \citep{koshizuka2023neurallagrangianschrodingerbridge}          & 0.66                                     & 0.54                                     \\
        MIOFlow \citep{huguet2022manifoldinterpolatingoptimaltransportflows} & 0.23                                     & 0.35                                     \\
        DMSB \citep{chen2023deepmomentummultimarginalschrodinger}            & 0.032 ${\scriptstyle \pm 3e-3}$          & 0.16  ${\scriptstyle \pm 2e-2}$          \\
        \textbf{MMtSBM} (ours)                                                        & \textbf{0.019} ${\scriptstyle \pm 4e-4}$ & \textbf{0.130} ${\scriptstyle \pm 2e-3}$  \\
        \bottomrule
    \end{tabular}
\end{table}

\begin{table*}
    \centering
    \fontsize{9}{10}\selectfont
    \caption{\small $\mathcal{W}_1$ of generations vs left-out set for the $d=100$ MULTI benchmark. Reported figures are the average between left-out days $3$ and $4$. Our error margin is over 3 training runs. Best value in \textbf{bold}, second best \underline{underlined}. See \ref{sec:multi_annex} for details \& comments.}
    \begin{tabular}[t]{lll}
        \toprule
        Method                  &                                                                      & $\mathcal{W}_1$ ($\downarrow$)      \\
        \cmidrule(lr){1-3}
        \multicolumn{3}{c}{Schrödinger Bridge}                                                                                               \\
        WLF-SB                  & \citep{neklyudov2024computationalframeworksolvingwasserstein}  & $55.065 {\scriptstyle \,\pm 5.499}$ \\
        {[SF]}$^2$M-Exact       & \citep{tong2024simulationfreeschrodingerbridgesscore}              & $52.888 {\scriptstyle \,\pm 1.986}$ \\
        {[SF]}$^2$M-Geo         & \citep{tong2024simulationfreeschrodingerbridgesscore}                & $52.203 {\scriptstyle \,\pm 1.957}$ \\
        \underline{MMtSBM}      & \underline{(ours)}                                                   & $\underline{44.542 {\scriptstyle \,\pm 0.637}}$ \\
        \cmidrule(lr){1-3}
        \multicolumn{3}{c}{No precomputed OT conditioning}                                                               \\
        I-CFM                   & \citep{tong2024improvinggeneralizingflowbasedgenerative} & $57.262 {\scriptstyle \,\pm 3.855}$ \\
        I-MFM$_{\textrm{RBF}}$  & \citep{kapuśniak2024metricflowmatchingsmooth}                  & $54.197 {\scriptstyle \,\pm 1.408}$ \\
        WLF-UOT                 & \citep{neklyudov2024computationalframeworksolvingwasserstein}   & $54.222 {\scriptstyle \,\pm 5.827}$ \\
        I-CDC-FM                & \citep{bamberger2026carreduchampflow}                           & $54.419 {\scriptstyle \,\pm 0.629}$ \\
        \underline{MMtSBM}      & \underline{(ours)}                                        & $\underline{44.542 {\scriptstyle \,\pm 0.637}}$ \\
    \end{tabular}%
    \begin{tabular}[t]{lll}
        \toprule
        Method                  &                                                                      & $\mathcal{W}_1$ ($\downarrow$)      \\
        \midrule
        \multicolumn{3}{c}{Wasserstein Gradient Flows (WGF)}                                                                           \\
        WLF-SB                  & \citep{neklyudov2024computationalframeworksolvingwasserstein}   & $55.065 {\scriptstyle \,\pm 5.499}$ \\
        WLF-OT                  & \citep{neklyudov2024computationalframeworksolvingwasserstein}   & $55.416 {\scriptstyle \,\pm 6.097}$ \\
        WLF-UOT                 & \citep{neklyudov2024computationalframeworksolvingwasserstein}   & $54.222 {\scriptstyle \,\pm 5.827}$ \\
        \midrule
        \multicolumn{3}{c}{WGF with knowledge of the left-out marginal}                                                \\
        WLF-{\scriptsize (OT+potential)}  & \citep{neklyudov2024computationalframeworksolvingwasserstein} & $47.365 {\scriptstyle \,\pm 0.051}$  \\
        WLF-{\scriptsize (UOT+potential)} & \citep{neklyudov2024computationalframeworksolvingwasserstein} & $45.231 {\scriptstyle \,\pm 0.010}$  \\
        \midrule
        \multicolumn{3}{c}{Flow Matching with exact OT conditioning}                                            \\
        OT-CFM                  & \citep{tong2024improvinggeneralizingflowbasedgenerative}  & $54.814 {\scriptstyle \,\pm 5.858}$ \\
        OT-MFM$_{\textrm{RBF}}$ & \citep{kapuśniak2024metricflowmatchingsmooth}                    & $50.906 {\scriptstyle \,\pm 4.627}$ \\
        OT-CDC-FM               & \citep{bamberger2026carreduchampflow}                          & $52.043 {\scriptstyle \,\pm 1.948}$ \\
        \midrule
        \multicolumn{3}{c}{Metric-aware \textit{interpolation} with exact OT conditioning}                                                     \\
        \textbf{GAGA}           & \citep{sun2025geometryawaregenerativeautoencoderswarped}             & $\mathbf{27.04 {\scriptstyle \,\pm 2.95}}$  \\
    \end{tabular}
    \label{tab:multi-benchmark}
\end{table*}

We next evaluate our method on the Embryoid Body (EB) \citep{Moon2019} and MULTI \citep{pmlr-v176-lance22a} benchmarks, two trajectory inference tasks on real single-cell RNA-seq data. We project RNA counts to their first $d=100$ principal components for each of the $N=5$ and $N=4$ marginals, respectively.\footnote{We actually reuse preprocessed data from \citet{tong2020trajectorynetdynamicoptimaltransport} and \citet{tong2024simulationfreeschrodingerbridgesscore}.} We report the Maximum Mean Discrepancy (MMD) and Sliced Wasserstein Distance (SWD) for EB in Table~\ref{tab:embryoid-benchmark} and Table~\ref{tab:embryoid-benchmark-unnorm}, and the Wasserstein-$1$ distance for MULTI in Table~\ref{tab:multi-benchmark}. For the EB benchmark we use two evaluation settings: either we train MMtSBM on all marginals (Table~\ref{tab:embryoid-benchmark}), or we leave out odd-indexed timesteps (Table~\ref{tab:embryoid-benchmark-unnorm}). For the MULTI benchmark we leave-out one of either intermediate times ($t=1$ or $t=2$) during training.

\begin{table}[H]
    \centering
    \small
    \setlength{\tabcolsep}{3pt}
    \caption{\small MMD and SWD of generations vs held-out times for the $d=100$ \textit{unnormalized} EB benchmark. Our generations start from $\mu_{t=0}^\text{test}$. Figures for SBIRR \citep{pmlr-v258-shen25b}, MMFM \citep{rohbeck2025modeling}, DMSB \citep{chen2023deepmomentummultimarginalschrodinger} and 3MSBM \citep{theodoropoulos2025momentummultimarginalschrodingerbridge} are from \citet{theodoropoulos2025momentummultimarginalschrodingerbridge}. Our standard deviations are over 3 runs.}
    \begin{tabular}{lcccc}
         & MMD $t_1$ $\downarrow$ & SWD $t_1$ $\downarrow$ & MMD $t_3$ $\downarrow$ & SWD $t_3$ $\downarrow$ \\
        \midrule
        SBIRR      & $0.71{\scriptstyle\pm0.08}$ & $0.80{\scriptstyle\pm0.06}$ & $0.73{\scriptstyle\pm0.06}$ & $0.91{\scriptstyle\pm0.05}$ \\
        MMFM       & $0.37{\scriptstyle\pm0.02}$ & $0.59{\scriptstyle\pm0.04}$ & $0.35{\scriptstyle\pm0.04}$ & $0.76{\scriptstyle\pm0.04}$ \\
        DMSB       & $0.38{\scriptstyle\pm0.04}$ & $0.58{\scriptstyle\pm0.06}$ & $0.36{\scriptstyle\pm0.07}$ & $0.54{\scriptstyle\pm0.06}$ \\
        3MSBM      & $0.18{\scriptstyle\pm0.01}$ & $0.48{\scriptstyle\pm0.04}$ & $0.14{\scriptstyle\pm0.04}$ & $0.38{\scriptstyle\pm0.03}$ \\
        \textbf{MMtSBM} {\scriptsize (ours)} & $\mathbf{0.17}{\scriptstyle\pm0.00}$ & $\mathbf{0.45}{\scriptstyle\pm0.03}$ & $\mathbf{0.06}{\scriptstyle\pm0.00}$ & $\mathbf{0.31}{\scriptstyle\pm0.01}$ \\
        \bottomrule
    \end{tabular}
    \label{tab:embryoid-benchmark-unnorm}
\end{table}

On the EB benchmark, our method consistently outperforms baselines on all settings and marginals. It significantly reduces the average MMD by \textbf{-}$\mathbf{41}$\textbf{\%} and the SWD by \textbf{-}$\mathbf{19}$\textbf{\%} compared to DMSB when training on all marginals (Table~\ref{tab:embryoid-benchmark}), demonstrating superior distribution fitting. It also achieves the best performance on fully held-out times (Table~\ref{tab:embryoid-benchmark-unnorm}), where the interpolative prior also matters.\\
On the MULTI benchmark, we again reach significantly better average $\mathcal{W}_1$ distances than the directly comparable literature\footnote{Comparable literature: mainly methods computing the Schrödinger Bridge. Also: methods performing trajectory \emph{inference}, instead of \textit{interpolation}, and those not needing a precomputed OT plan –our goal is to learn a generalizable entropic one!}, beating the previous directly comparable state-of-the-art ({[SF]}$^2$M-Geo) by \textbf{-}$\mathbf{15}$\textbf{\%} with a high statistical significance, and even outperforming methods leveraging precomputed OT plans and specialized to the transcriptomic setting. This demonstrates the applicability of MMtSBM on pure cellular trajectory inference, despite the absence of restrictive modeling such as spline-valued trajectories, explicitly precomputed OT plan, ad-hoc particularizations, or start \textit{and} end true points trajectory pinning.

\subsection{MMtSBM recovers continuous video dynamics from unpaired data}
We now evaluate our method on image-space datasets, where the goal is to recover continuous trajectories (\textit{ie} \emph{videos}) from completely unpaired temporal snapshots.

\subsubsection{MNIST digit morphing}\label{sec:mnist}
We conducted experiments on the MNIST dataset of hand-written digits, transporting digits in decreasing order: $4~\to~3 \to 2 \to 1 \to 0$. The algorithm was trained directly in image space, in dimension $28\times 28 = 784$. As shown in Figure \ref{fig:mnist}, MMtSBM exhibits clear digit morphing, sometimes reusing pixel structures (e.g., the top of the $3$ to form the top of the $2$), which is what is expected from OT in pixel space. This experiment thus demonstrates that MMtSBM manages to learn a complex temporal OT map in image space directly.

\subsubsection{"Biotine" cell culture}\label{sec:biotine}
The (in-house) "biotine" dataset consists of 3-channel $128~\times~128$ fluorescence images (GFP, membrane, nucleus) of A549 lung epithelial cultured cells, treated with biotin, and imaged over 90 minutes at 7 discrete time steps. 

Figure \ref{fig:biotine_true} shows the ground truth unpaired dynamic. We can clearly observe fluorescence loss in the cytoplasmic area, corresponding to the green channel. Interestingly, in the generated video (Figure~\ref{fig:biotine_gen}), contrary to the above MNIST experiment, a mostly static \textit{positional} evolution is observed.
\begin{figure}[h]
    \centering
    \includegraphics[trim=2cm 2cm 2cm 2.4cm, clip, width=0.19\linewidth]{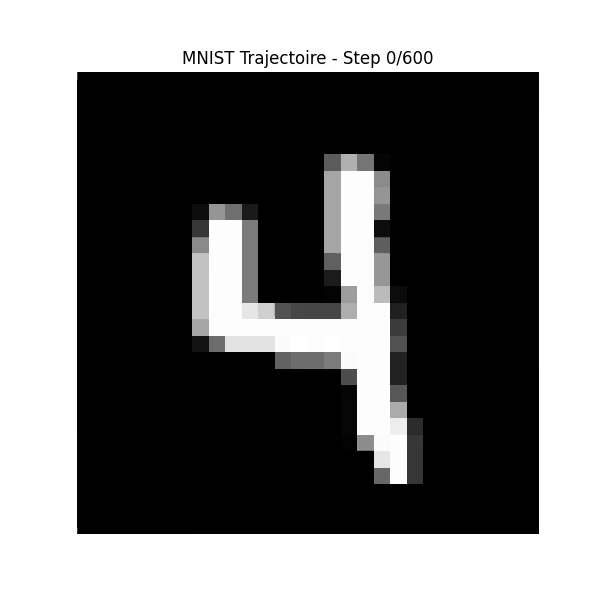}
    \includegraphics[trim=2cm 2cm 2cm 2.4cm, clip, width=0.19\linewidth]{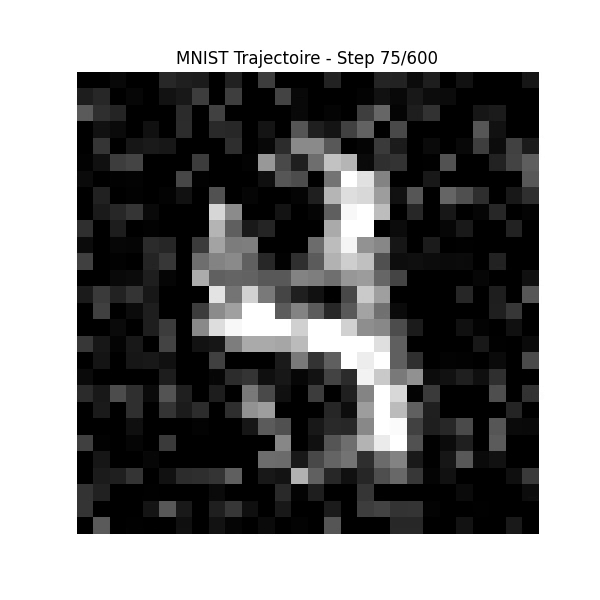}
    \includegraphics[trim=2cm 2cm 2cm 2.4cm, clip, width=0.19\linewidth]{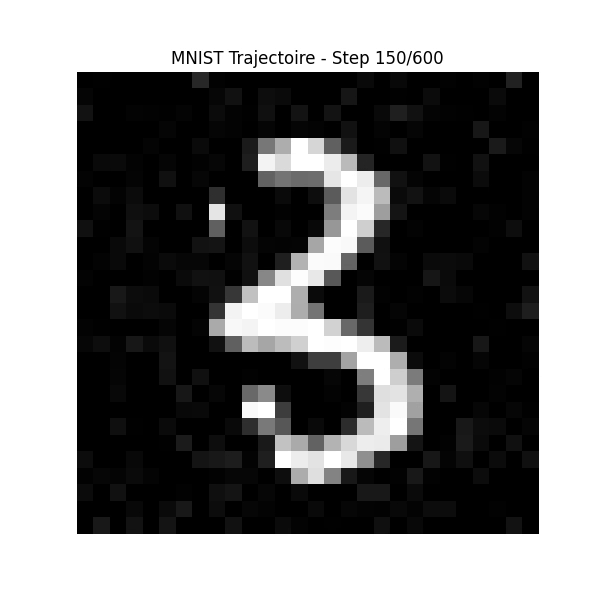}
    \includegraphics[trim=2cm 2cm 2cm 2.4cm, clip, width=0.19\linewidth]{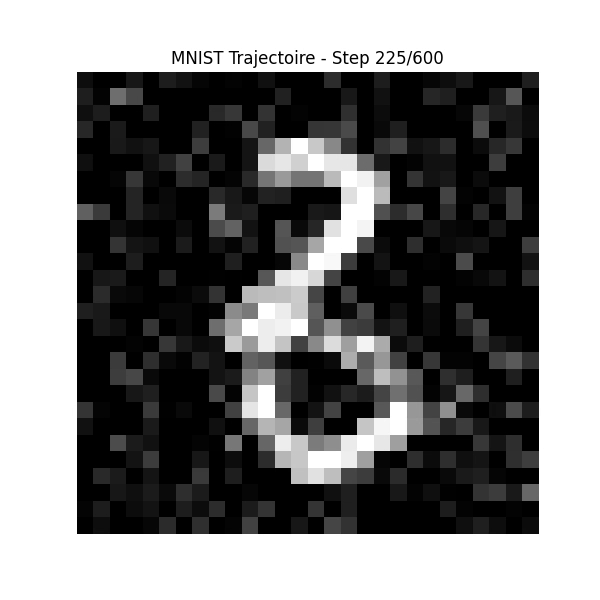}
    \includegraphics[trim=2cm 2cm 2cm 2.4cm, clip, width=0.19\linewidth]{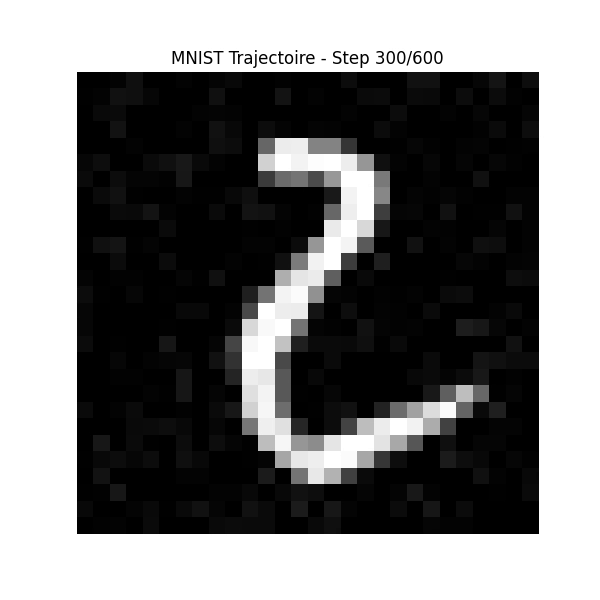}
    \includegraphics[trim=2cm 2cm 2cm 2.4cm, clip, width=0.19\linewidth]{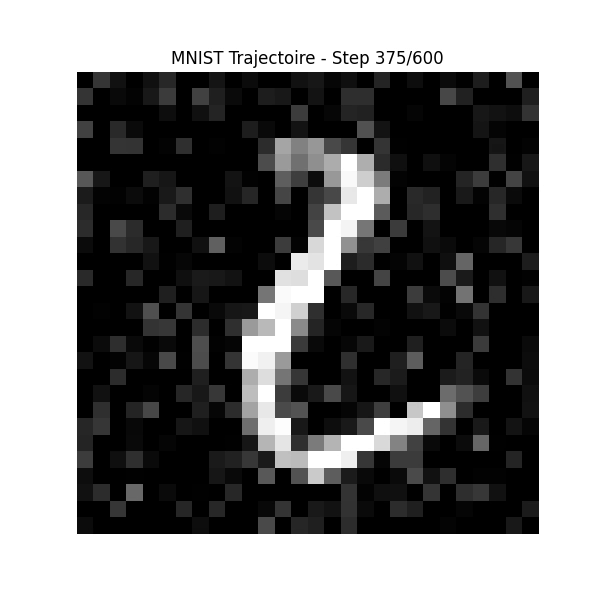}
    \includegraphics[trim=2cm 2cm 2cm 2.4cm, clip, width=0.19\linewidth]{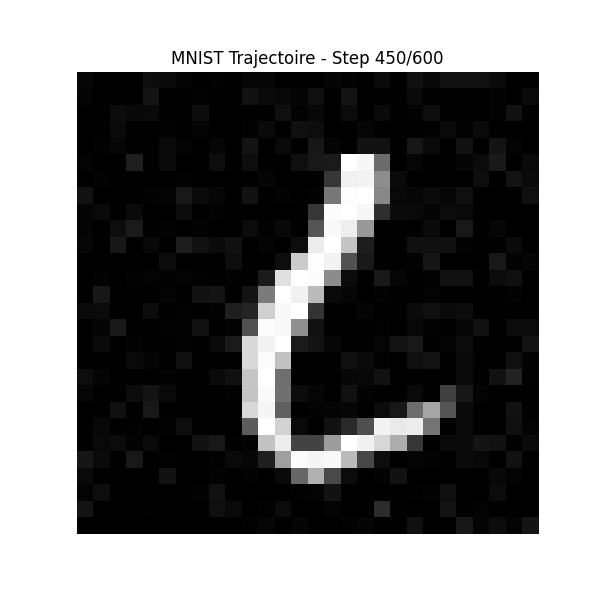}
    \includegraphics[trim=2cm 2cm 2cm 2.4cm, clip, width=0.19\linewidth]{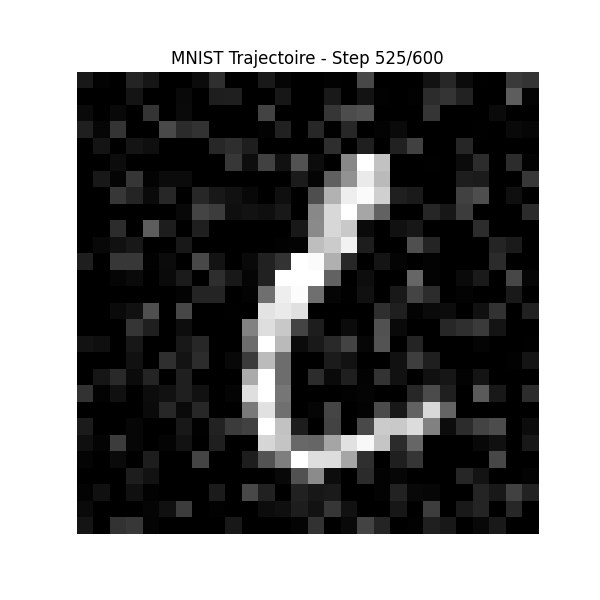}
    \includegraphics[trim=2cm 2cm 2cm 2.4cm, clip, width=0.19\linewidth]{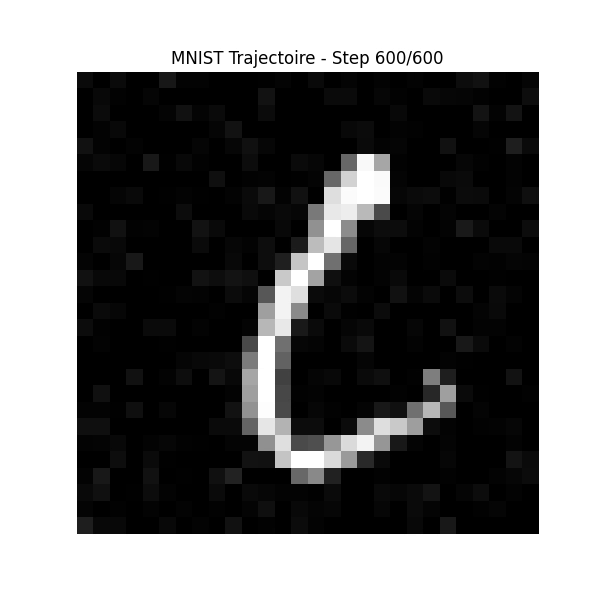}
    \caption{\small Video generated by MMtSBM on MNIST, backward direction. Starting image is from the test set. Left-to-right, top-to-bottom order: generation at time $t=4, 3.5, 3, 2.5, 2, 1.5, 1, 0.5, 0$. Integer times are marginal times.}
    \label{fig:mnist}
\end{figure}
\begin{figure}[h]
    \centering
    \includegraphics[width=0.23\linewidth]{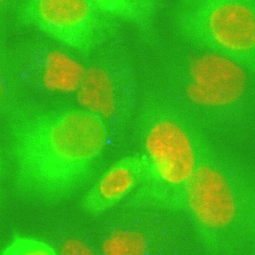}\hspace{0.5mm}
    \includegraphics[width=0.23\linewidth]{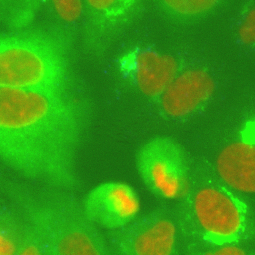}\hspace{0.5mm}
    \includegraphics[width=0.23\linewidth]{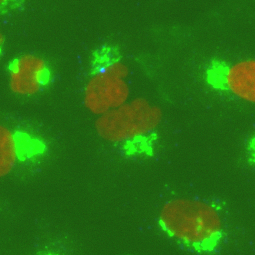}\hspace{0.5mm}
    \includegraphics[width=0.23\linewidth]{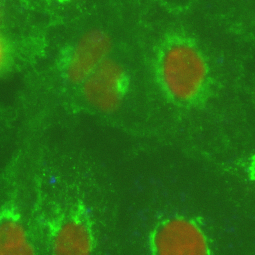}\hspace{0.5mm}
    \includegraphics[width=0.23\linewidth]{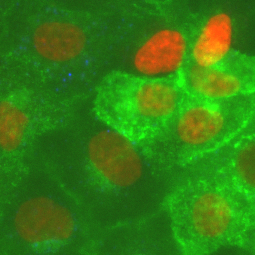}\hspace{0.5mm}
    \includegraphics[width=0.23\linewidth]{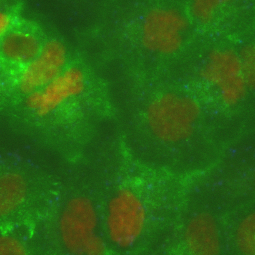}\hspace{0.5mm}
    \includegraphics[width=0.23\linewidth]{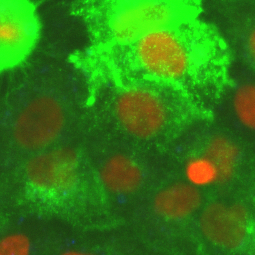}    
    \caption{\small Ground truth biotine examples at training marginal times $t=0,1,2,3,4,5,6$. Left-to-right, top-to-bottom order.}
    \label{fig:biotine_true}
\end{figure}
\begin{figure}[h]
    \centering
    \includegraphics[width=0.24\linewidth]{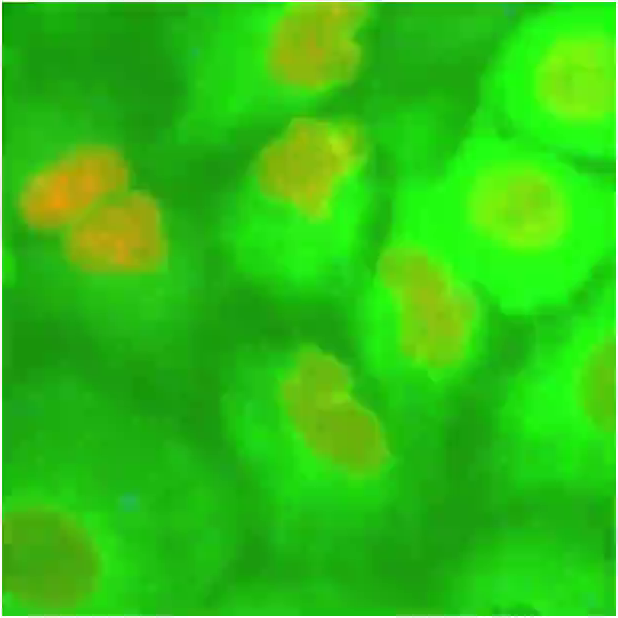}
    \includegraphics[width=0.24\linewidth]{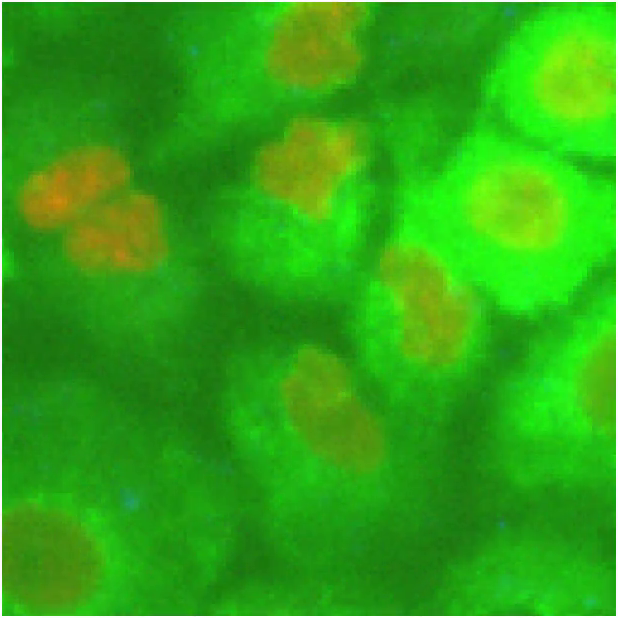}
    \includegraphics[width=0.24\linewidth]{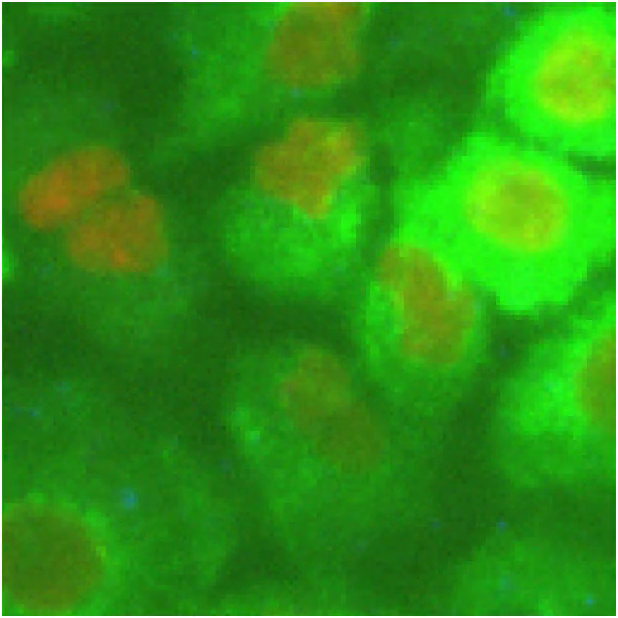}
    \includegraphics[width=0.24\linewidth]{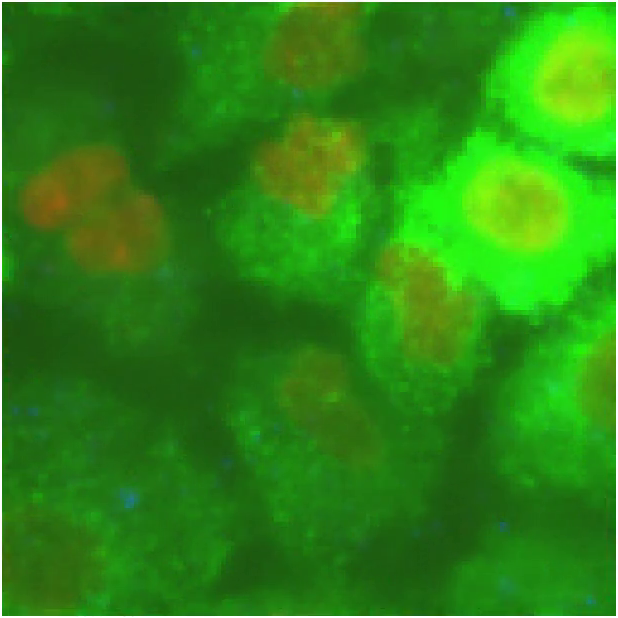}
    \includegraphics[width=0.24\linewidth]{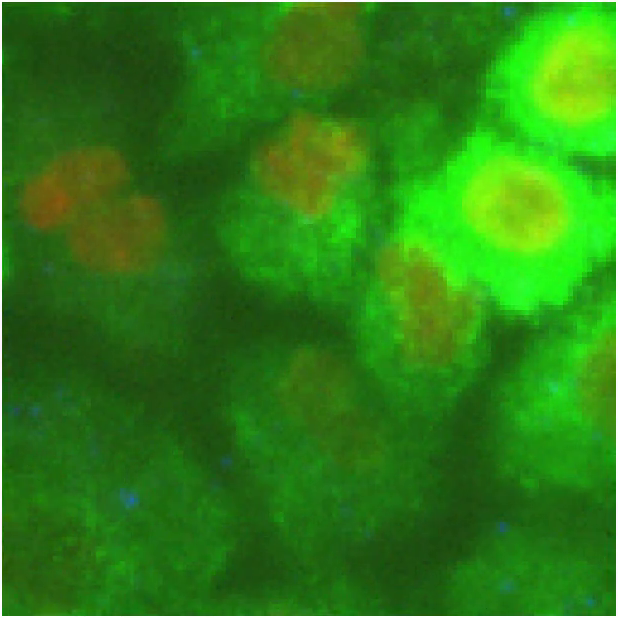}
    \includegraphics[width=0.24\linewidth]{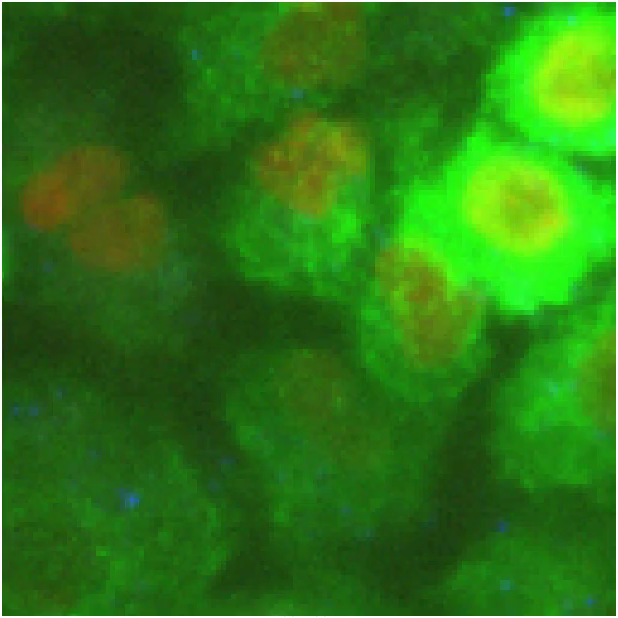}
    \includegraphics[width=0.24\linewidth]{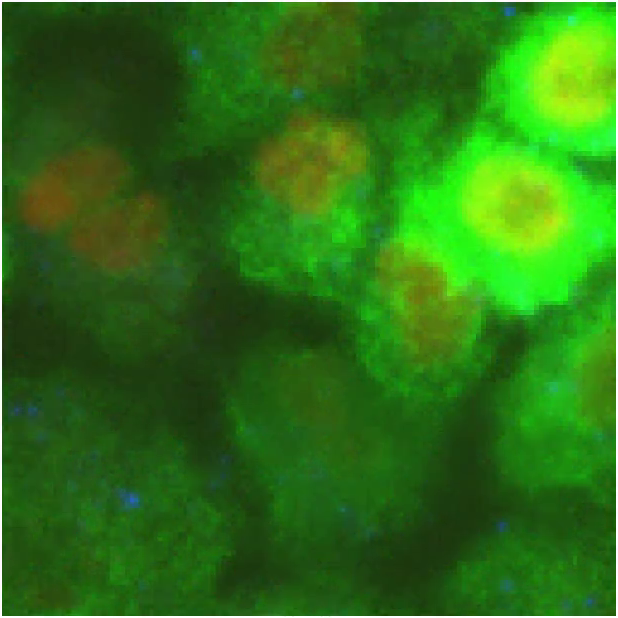}
    \caption{\small Video generated by MMtSBM on biotine, forward direction. Top-left true starting image from the test set, then left-to-right, top-to-bottom order: generations every $\Delta t=1$.}
    \label{fig:biotine_gen}
\end{figure}

This stems from the fact that cell position is not a statistically varying information on the biotine dataset, and this (nonexistent) signal is thus simply not seen by our purely unpaired method, resulting in non-moving cells. MMtSBM rather reconstructs the OT trajectory in pixel space, yielding very close cellular position while still accurately matching the time-varying phenotype (mainly: the fluorescence loss in the cytoplasm). As a baseline reference for future works, we report in Table \ref{tab:kid_metrics} the KID \citep{bińkowski2021demystifyingmmdgans} values of true vs generated samples, using DINOv2 \citep{oquab2024dinov2learningrobustvisual} as the feature extractor.

\begin{table}
    \centering
    \fontsize{8}{9}\selectfont
    \setlength{\tabcolsep}{2pt}
    \caption{\small \textrm{dinov2-vit-b-14}-KIDs for each time, and all times at once.}
    \begin{tabular}{lccccccc}
        \toprule
        Time                    & $t=1$      & $t=2$      & $t=3$      & $t=4$      & $t=5$      & $t=6$      & all times  \\
        KID mean ($\downarrow$) & $11.1 $    & $13.0$     & $20.1$     & $23.5$     & $26.0 $    & $27.7$     & $17.1$     \\
        KID std                 & $\pm 0.23$ & $\pm 0.20$ & $\pm 0.23$ & $\pm 0.25$ & $\pm 0.29$ & $\pm 0.31$ & $\pm 0.32$ \\
        \bottomrule
    \end{tabular}
    \label{tab:kid_metrics}
\end{table}

\subsubsection{KTH Actions}\label{sec:kth}
KTH Actions \citep{kth_actions} is a $160 \times 120$ small video dataset of human actions with varying subjects and conditions. We selected the "running", "jogging" and "walking" categories, and reprocessed the dataset for globally coherent right-to-left motion by flipping the originally left-to-right videos, using $N=5$ marginals.

\begin{figure}[h]
    \centering
    \includegraphics[width=0.32\linewidth]{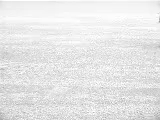}
    \includegraphics[width=0.32\linewidth]{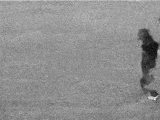}
    \includegraphics[width=0.32\linewidth]{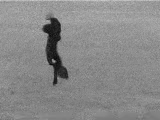}
    \includegraphics[width=0.32\linewidth]{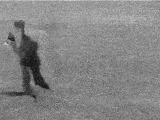}
    \includegraphics[width=0.32\linewidth]{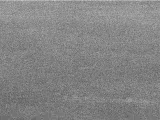}
    \caption{\small Video generated by MMtSBM on KTH Actions, forward direction. Top-left true starting image from the test set, then left-to-right, top-to-bottom order: generations every $\Delta t=1$.}
    \label{fig:kth_gen}
\end{figure}

As can be seen in Figure~\ref{fig:kth_gen}, MMtSBM clearly learns the global right-to-left motion, demonstrating the capability of our method to generate coherent non-static subject movements from purely unpaired data. Although the generation quality appears limited, we hypothesize that it is due to limited data size ($1200$ videos total) and compute budget.

To the best of our knowledge, this is the first demonstration of any method performing video generation from purely unpaired data. Together, this provides evidence for both the scalability to very high-dimensional data and for the fidelity to the underlying biological process of MMtSBM.

\section{Discussion}
In this work we introduce MMtSBM, a novel method that solves the multi-marginal temporal Schrödinger Bridge problem, adapting Bridge Matching \citep{shi2023diffusionschrodingerbridgematching} to our setting. We demonstrate the theoretical soundness of both our modeling and algorithm. We show that MMtSBM indeed produces transport maps that are close to the true OT plan in toy experiments and verify its correct behavior in low-dim experiments. We achieve state-of-the-art results in $2$ widely reported single-cell transcriptomic benchmarks, and for the first time demonstrate a method producing temporally coherent videos from purely unpaired data, hoping to lead to many future scientific applications.

Our method has a number of limitations. Compared to competing methods \citep{tong2024simulationfreeschrodingerbridgesscore, pmlr-v235-gushchin24a}, MMtSBM requires simulating full trajectories during training, which is computationally costly. IMFF is also an iterative optimization algorithm that operates on two distinct networks, resulting in potentially unstable training dynamics. It also importantly necessitates fine-tuning the noise scale hyperparameter, for which we have not designed a principled law. Finally, MMtSBM learns a rather naive dynamic induced by the Brownian interpolant. While it appears to yield excellent distribution matching, it also generates noisy trajectories directly in data space. 

In future works we would like to investigate other regularizations, such as lifting the process to acceleration space to obtain smoother interpolation trajectories, or exploring other empirical reference processes than the Brownian motion. We also intend to investigate learning the transport map in a latent space. We would also like to explore using the single network theory developed in \citet{debortoli2024schrodingerbridgeflowunpaired} for efficiency gains, as well as simulation-free methods.

\newpage 
\section*{Impact Statement}

This paper presents work whose goal is to advance the field of machine learning. There are many potential societal consequences of our work, none of which we feel must be specifically highlighted here.

\bibliography{bib}
\bibliographystyle{icml2026}

\appendix
\onecolumn
\section{Appendix}

\subsection{Additional Background}
\label{app:background}

\paragraph{Reciprocal projection.}
The reciprocal class $\mathcal{R}(\mathbb{Q})$ consists of mixtures of 
$\mathbb{Q}$-bridges. For $\mathbb{P}\in\mathcal{P}(C)$,
\[
\proj_{\mathcal{R}(\mathbb{Q})}(\mathbb{P})
= \mathbb{P}_{0,T}\,\mathbb{Q}_{|0,T}.
\]

\paragraph{Markovian projection.}
The Markov class $\mathcal{M}$ consists of diffusions 
$\mathrm{d}X_t=v(t,X_t)\,\mathrm{d}t+\sigma\,\mathrm{d}B_t$.  
The projection $\proj_\mathcal{M}(\Pi)$ has drift
\[
\mathrm{d}X_t
=\Bigg[\frac{\mathbb{E}_\Pi[X_T \mid X_t]-X_t}{T-t}\Bigg]\mathrm{d}t
+\sigma\,\mathrm{d}B_t.
\]

\paragraph{Variational formulations.}
Both projections solve KL problems:
\[
\proj_{\mathcal{R}(\mathbb{Q})}(\mathbb{P})
= \argmin_{\Pi\in\mathcal{R}(\mathbb{Q})}\mathrm{KL}(\mathbb{P}\,\|\,\Pi),
\qquad
\proj_{\mathcal{M}}(\Pi)
= \argmin_{M\in\mathcal{M}}\mathrm{KL}(\Pi\,\|\,M).
\]

\paragraph{Bridge matching.}
In practice, the Markov drift is learned by minimising
\[
\mathcal{L}(\theta)=\int_0^T 
\mathbb{E}_{(X_0,X_T)\sim \Pi_{0,T},\,
X_t\sim \mathbb{Q}(\cdot|X_0,X_T)}
\bigg[\|v_\theta(X_t,t)-\tfrac{X_T-X_t}{T-t}\|^2\bigg]\,dt.
\]

\paragraph{Iterative Proportional Fitting (IPF).}
IPF alternately enforces marginals by KL minimisation:
\[
\mathbb{P}^{2n+1}=
\argmin_{\mathbb{P}: \mathbb{P}_T=\mu_T}\mathrm{KL}(\mathbb{P}\,\|\,\mathbb{P}^{2n}),
\quad
\mathbb{P}^{2n+2}=
\argmin_{\mathbb{P}: \mathbb{P}_0=\mu_0}\mathrm{KL}(\mathbb{P}\,\|\,\mathbb{P}^{2n+1}).
\]
Unlike IMF, this requires caching full trajectories.

\subsection{Other properties on IMFF or MMSB}

\begin{proposition}[Markov implies reciprocal]\label{prop:markov_implies_reciprocal}
Any Markov measure on $C([0,T],\mathbb R^d)$ is reciprocal. 
Hence $P^\star \in \mathcal R(Q)$. 
See Proposition~2.3 in \citet{leonard2012}.
\end{proposition}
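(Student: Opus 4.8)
The plan is to work directly with the defining conditional–independence properties rather than with densities. Recall that a path measure $P$ on $C([0,T],\mathbb{R}^d)$ is \emph{Markov} when, for each $t$, the past $\mathcal{F}^-_t := \sigma(X_u : u\le t)$ and the future $\mathcal{F}^+_t := \sigma(X_u : u\ge t)$ are conditionally independent given $X_t$, and \emph{reciprocal} (Bernstein property) when, for each $0\le s\le t\le T$, the interior $\mathcal{I}_{s,t} := \sigma(X_u : s\le u\le t)$ and the exterior $\mathcal{E}_{s,t} := \sigma(X_u : u\in[0,s]\cup[t,T])$ are conditionally independent given $(X_s,X_t)$. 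Since $C([0,T],\mathbb{R}^d)$ is Polish, the regular conditional probabilities used below exist. The statement then reduces to: Markov $\Rightarrow$ Bernstein; the claim about $P^\star$ will follow from Propositions~\ref{prop:markovianity} and~\ref{prop:form_dynamic}.

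By the functional monotone class theorem it suffices to verify, on the multiplicative class of products $g_I g_- g_+$ with $g_I$ bounded $\mathcal{I}_{s,t}$-measurable, $g_-$ bounded $\sigma(X_u:u\le s)$-measurable and $g_+$ bounded $\sigma(X_u:u\ge t)$-measurable, that
\[
\mathbb{E}_P[g_I g_- g_+ \mid X_s,X_t] = \mathbb{E}_P[g_I \mid X_s,X_t]\,\mathbb{E}_P[g_- \mid X_s,X_t]\,\mathbb{E}_P[g_+ \mid X_s,X_t],
\]
since the products $g_- g_+$ generate $\mathcal{E}_{s,t}$. I would obtain this by conditioning twice. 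Conditioning on $\mathcal{F}^-_t\supseteq\sigma(X_s,X_t)$ and using the Markov property at $t$ collapses $g_+$ to the $\sigma(X_t)$-measurable factor $\mathbb{E}_P[g_+\mid X_t]$, which pulls out of the conditional expectation; conditioning on $\mathcal{F}^+_s\supseteq\sigma(X_s,X_t)$ and using the Markov property at $s$ collapses $g_-$ to the $\sigma(X_s)$-measurable factor $\mathbb{E}_P[g_-\mid X_s]$, which also pulls out. This leaves $\mathbb{E}_P[g_I g_- g_+\mid X_s,X_t]=\mathbb{E}_P[g_+\mid X_t]\,\mathbb{E}_P[g_-\mid X_s]\,\mathbb{E}_P[g_I\mid X_s,X_t]$; specializing to $g_I\equiv1$, then to $g_-\equiv1$ or $g_+\equiv1$, identifies $\mathbb{E}_P[g_+\mid X_t]=\mathbb{E}_P[g_+\mid X_s,X_t]$ and $\mathbb{E}_P[g_-\mid X_s]=\mathbb{E}_P[g_-\mid X_s,X_t]$, which is precisely the displayed identity. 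A final monotone-class step upgrades it to $\mathcal{I}_{s,t}\perp\mathcal{E}_{s,t}$ given $(X_s,X_t)$, i.e.\ reciprocity.

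For the consequence: in our setting $\mathbb{Q}$ is Markov (Brownian motion), so Proposition~\ref{prop:markovianity} gives that $P^\star$ is Markov, hence reciprocal by the above. To land in the reference-specific class, I would use the explicit form from Proposition~\ref{prop:form_dynamic}: $P^\star = \pi^\star\otimes\mathbb{Q}(\cdot\mid X_{t_0},\dots,X_{t_K})$, and Markovianity of $\mathbb{Q}$ makes the conditional kernel factorize as $\bigotimes_{i=0}^{K-1}\mathbb{Q}^{X_{t_i},X_{t_{i+1}}}_{[t_i,t_{i+1}]}$; thus $P^\star$ is a mixture of $\mathbb{Q}$-bridges and so belongs to $\mathcal{R}^\otimes(\mathbb{Q})$ (equivalently $\mathcal{R}(\mathbb{Q})$) by Definition~\ref{def:factorized_reciprocal_class}.

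\textbf{Main obstacle.} The probabilistic content is exactly the short double-conditioning computation above; the only delicate point is the measure-theoretic bookkeeping --- ensuring the regular conditional probabilities exist (automatic on the Polish path space) and carrying out the monotone-class passages from cylinder/product test functions to the full $\sigma$-algebras --- which is routine. Alternatively, one may simply invoke Proposition~2.3 of \citet{leonard2012}.
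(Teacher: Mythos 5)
Your argument is correct, and it is genuinely more self-contained than what the paper does: the paper offers no proof at all for this proposition and simply defers to Proposition~2.3 of \citet{leonard2012}, whereas you reproduce the standard double-conditioning argument behind that citation. Your computation is sound: conditioning on $\sigma(X_u:u\le t)$ and using the Markov property at $t$ collapses $g_+$ to $\mathbb{E}[g_+\mid X_t]$, conditioning on $\sigma(X_u:u\ge s)$ and using the Markov property at $s$ collapses $g_-$ to $\mathbb{E}[g_-\mid X_s]$, and the specializations $g_I\equiv 1$, $g_\pm\equiv 1$ identify these with the conditional expectations given $(X_s,X_t)$; the monotone-class passage from products $g_-g_+$ to all of $\mathcal{E}_{s,t}$ is indeed routine on the Polish path space. (Your three-factor identity is slightly stronger than the conditional independence needed for the Bernstein property, but that does no harm.) Where your write-up actually improves on the paper is the second claim: being reciprocal in the Bernstein sense does not by itself place a measure in the reference-specific class $\mathcal{R}(\mathbb{Q})$, and the paper's ``hence'' glosses over this. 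You correctly close the gap by invoking Proposition~\ref{prop:markovianity} for Markovianity of $P^\star$ and Proposition~\ref{prop:form_dynamic} together with Markovianity of $\mathbb{Q}$ to factorize $\mathbb{Q}(\cdot\mid X_{t_0},\dots,X_{t_K})$ into independent bridges, exhibiting $P^\star$ as a mixture of $\mathbb{Q}$-bridges in the sense of Definition~\ref{def:factorized_reciprocal_class}, which is exactly the sense in which the paper uses $\mathcal{R}(\mathbb{Q})$. So: same mathematical content as the cited result, but your version makes explicit both the conditional-independence proof and the step from ``reciprocal'' to ``in the reciprocal class of $\mathbb{Q}$'' that the paper leaves implicit.
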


\begin{proposition}[Sampling with ODE probability flow]
Given the forward and backward drifts of the multi-marginal Schrödinger 
bridge, one can simulate trajectories using the probability flow ODE 
(\citep{song2021}):
\[
\frac{dX_t}{dt} = f_t(X_t) - \tfrac{1}{2}\sigma_t^2 \nabla \log p_t(X_t).
\]
Although the score function $\nabla \log p_t$ is not directly available, 
\citep{debortoli2021diffusionschrodingerbridgeapplications} show that it can be equivalently recovered by 
averaging the forward and backward drifts:
\begin{equation}
    v_t(x) \;=\; \tfrac{1}{2}\Big( v^{\mathrm{fwd}}_t(x) + v^{\mathrm{bwd}}_t(x) \Big)
    \label{eq:ODE_drift}
\end{equation}
Simulating the ODE with drift $v_t$ thus yields a deterministic sampling 
procedure that preserves the marginals of the stochastic bridge, providing 
an efficient and numerically stable alternative to direct SDE simulation.
\end{proposition}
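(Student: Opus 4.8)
The plan is to reduce the proposition to two classical facts about diffusions — the time-reversal formula and the equivalence between the Fokker--Planck equation and the probability-flow continuity equation — applied to the Markov diffusion $M^\star = \proj_{\mathcal{M}}(\Pi)$. By the Markovian projection theorem invoked in Definition~\ref{def:markov_projection_factorized}, $M^\star$ is a genuine Markov process whose one-dimensional marginals $p_t := M^\star_t = \Pi_t$ are well defined and match those of the factorized bridge (Proposition~\ref{prop:var_proj_factorized}). I would work throughout with this single marginal flow $p_t$, treating the forward and backward drifts of Proposition~\ref{prop:reverse_sde} as the drift and reverse-time drift of one and the same diffusion.

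First I would write the forward SDE of $M^\star$ as $dX_t = v^{\mathrm{fwd}}_t(X_t)\,dt + \sigma_t\,dB_t$ and record its Fokker--Planck equation $\partial_t p_t = -\nabla\cdot(v^{\mathrm{fwd}}_t p_t) + \tfrac{1}{2}\sigma_t^2\Delta p_t$. Using the elementary identity $\Delta p_t = \nabla\cdot(p_t\,\nabla\log p_t)$ I would recast this as the pure continuity equation $\partial_t p_t + \nabla\cdot(v_t\,p_t) = 0$ with velocity $v_t = v^{\mathrm{fwd}}_t - \tfrac{1}{2}\sigma_t^2\nabla\log p_t$, which is exactly the probability-flow drift of \citet{song2021} appearing in the proposition. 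I would then invoke the classical time-reversal formula for diffusions (the averaging identity of \citet{debortoli2021diffusionschrodingerbridgeapplications} and references therein), which gives the backward drift as $v^{\mathrm{bwd}}_t = v^{\mathrm{fwd}}_t - \sigma_t^2\nabla\log p_t$; this is precisely the drift obtained after undoing the reparametrization $t\mapsto t_{i+1}-t$ in the backward SDE \eqref{eq:multi_backward}.

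Combining these two expressions delivers both halves of the claim. Subtracting yields the score-recovery identity $\nabla\log p_t = \sigma_t^{-2}\big(v^{\mathrm{fwd}}_t - v^{\mathrm{bwd}}_t\big)$, so the score is available from the two learned drifts; averaging yields $\tfrac{1}{2}\big(v^{\mathrm{fwd}}_t + v^{\mathrm{bwd}}_t\big) = v^{\mathrm{fwd}}_t - \tfrac{1}{2}\sigma_t^2\nabla\log p_t = v_t$, establishing \eqref{eq:ODE_drift} and showing that the averaged drift is exactly the probability-flow velocity. For marginal preservation I would note that $p_t$ solves the continuity equation above, while the pushforward of $p_0$ along the ODE flow $\dot X_t = v_t(X_t)$ solves the same transport equation with the same initial datum $p_0$; by uniqueness of solutions to the continuity equation (under Lipschitz/DiPerna--Lions regularity of $v_t$) the two coincide, so the deterministic flow reproduces the time marginals $p_t$ of the stochastic bridge and in particular matches each prescribed $\mu_{t_i}$ at the grid points.

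The main obstacle is handling the piecewise structure of the factorized drift. On each subinterval $[t_i,t_{i+1}]$ the diffusion $M^\star$ is a standard locally Lipschitz SDE to which the time-reversal and continuity-equation machinery applies directly, but $v^{\mathrm{fwd}}_t$ may be discontinuous across the grid points $t_i$, and the time-reversal theorem requires the integrability $\nabla\log p_t\in L^2(p_t)$. The delicate step is therefore to check that $p_t$ is continuous at each $t_i$ and that the continuity equation holds in a weak sense across these junctions, so that the ODE flow is well posed and marginal-preserving on all of $[0,T]$ rather than merely interval by interval. This regularity, together with the required score integrability, is exactly what the ``mild regularity conditions'' hypothesis is meant to encapsulate.
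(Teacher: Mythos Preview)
Your derivation is correct and is exactly the standard argument for the probability-flow ODE / drift-averaging identity. However, the paper does not actually supply a proof for this proposition: it is stated in the appendix as a known fact, with the probability-flow ODE attributed to \citet{song2021} and the drift-averaging identity attributed to \citet{debortoli2021diffusionschrodingerbridgeapplications}. So there is no ``paper's own proof'' to compare against beyond these citations.

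In that sense you have gone further than the paper. Your Fokker--Planck $\to$ continuity-equation rewrite plus the time-reversal relation $v^{\mathrm{bwd}}_t = v^{\mathrm{fwd}}_t - \sigma_t^2\nabla\log p_t$ is precisely the computation underlying the cited references, and your final paragraph, which isolates the piecewise-in-time structure of the factorized drift as the only genuinely multi-marginal wrinkle, identifies something the paper does not discuss at all. The one point worth tightening is notational: the proposition writes the probability-flow drift as $f_t - \tfrac{1}{2}\sigma_t^2\nabla\log p_t$ with $f_t$ the reference drift, whereas you absorb $f_t$ into $v^{\mathrm{fwd}}_t$; both are fine, but it would help to state explicitly that $v^{\mathrm{fwd}}_t$ in your argument denotes the \emph{total} forward drift of \eqref{eq:multi_forward}, so that the averaged expression \eqref{eq:ODE_drift} indeed matches the displayed ODE.
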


\pagebreak
\subsection{Concrete Algorithms}\label{subsec:concrete_algo}

We always start trainings with a warmup phase, akin to \citet{shi2023diffusionschrodingerbridgematching}. It allows MMtSBM to start rectifying the trajectories from a non-random state, which could be complicated because the IMFF phase uses the forward/backward network to train the backward/forward one.

\begin{algorithm}[H]
    \caption{Warmup (our algorithm)}
    \begin{algorithmic}[1]
    
    \Input{Subdivision $\{0 = t_0 < t_1 < \dots < t_n = T\}$, datasets $\{\pi_{t_i}\}$, 
    networks $v_\theta, v_\phi$, initial params $\theta,\phi$, 
    batch size $B$, warmup steps $N_{\text{warmup}}$}
    
    \State Define bridges $\mathcal{B} = \{(t_i,t_{i+1})\}$, $b \gets B/|\mathcal{B}|$
    
    \For{direction $\in \{\text{forward}, \text{backward}\}$}
      \For{$n \in \llbracket 0, N_{\text{warmup}} \rrbracket$}
        \ForAll{$(t_i,t_{i+1}) \in \mathcal{B}$ \textbf{in parallel}}
          \State Sample $(X_{t_i},X_{t_{i+1}}) \sim (\pi_{t_i}\otimes\pi_{t_{i+1}})^{\otimes b}$, 
                 $t_{(i)} \sim \text{Unif}[t_i,t_{i+1}]^{\otimes b}$
        \EndFor
        \State Aggregate $X_{\text{init}}, X_{\text{final}}, t$; Sample $Z \sim \mathcal{N}(0,I)^{\otimes B}$
        \State $X_t \gets \mathrm{Interp}_{t}(X_{\text{init}}, X_{\text{final}}, Z)$ \Comment{cf. \eqref{eq:interp}}
        \State Update $\theta$ if forward with $\ell^{\text{fwd}}$~\eqref{eq:loss_fwd}, else $\phi$ with $\ell^{\text{bwd}}$~\eqref{eq:loss_bwd}
      \EndFor
    \EndFor
    
    \Output{Warmup parameters $\theta,\phi$}
    
    \end{algorithmic}
\end{algorithm}

\begin{algorithm}[H]
    \caption{Iterative Markovian Factorized Fitting (IMFF) (our algorithm)}
    \begin{algorithmic}[1]
    
    \Input{Subdivision $\{0 = t_0 < t_1 < \dots < t_n = T\}$, datasets $\{\pi_{t_i}\}$, 
    networks $v_\theta, v_\phi$, warmup params $\theta,\phi$, 
    batch size $B$, finetune steps $N_{\text{finetune}}$, inner steps $N_{\text{inner}}$}
    
    \State Define bridges $\mathcal{B} = \{(t_i,t_{i+1})\}$, $b \gets B / |\mathcal{B}|$
    
    \For{$N \in \llbracket 0, N_{\text{finetune}} \rrbracket$}
      \ForAll{$(t_i,t_{i+1}) \in \mathcal{B}$ \textbf{in parallel}} 
        \State Sample $(X_{t_i},X_{t_{i+1}})$ from $(\pi_{t_i}\otimes \pi_{t_{i+1}})^{\otimes b}$
        \State Sample $t_{(i)} \sim \text{Unif}[t_i,t_{i+1}]^{\otimes b}$
      \EndFor
      \State Aggregate $X_{\text{init}}, X_{\text{final}}, t$
    
      \For{direction $\in \{\text{backward}, \text{forward}\}$}
        \For{$n \in \llbracket 0, N_{\text{inner}} \rrbracket$}
          \State Sample $Z \sim \mathcal{N}(0,I)^{\otimes B}$
          \If{direction = forward} 
            \State $\hat{X}_{\text{init}} \gets \mathrm{SDE}(X_{\text{final}}, v_\phi)$ \Comment{cf. \eqref{eq:sim_sde}}
            \State $X_t \gets \mathrm{Interp}_{t}(\hat{X}_{\text{init}}, X_{\text{final}}, Z)$ \Comment{cf. \eqref{eq:interp}}
            \State Update $\theta$ with $\ell^{\text{fwd}}$~\eqref{eq:loss_fwd}
          \Else
            \State $\hat{X}_{\text{final}} \gets \mathrm{SDE}(X_{\text{init}}, v_\theta)$ \Comment{cf. \eqref{eq:sim_sde}}
            \State $X_t \gets \mathrm{Interp}_{t}(X_{\text{init}}, \hat{X}_{\text{final}}, Z)$ \Comment{cf. \eqref{eq:interp}}
            \State Update $\phi$ with $\ell^{\text{bwd}}$~\eqref{eq:loss_bwd}
          \EndIf
        \EndFor
      \EndFor
    \EndFor
    
    \Output{Finetuned parameters $\theta,\phi$}
    
    \end{algorithmic}
\end{algorithm}

\newpage
\subsection{Critical Implementation Considerations}\label{sec:crit_implem}
A naive implementation of the algorithm quickly led to the \emph{forgetting} of paths between marginals as training progressed.  
To overcome this, we developed a fully vectorized implementation that ensures stable learning across all intervals.  
This design is essential for the quality of our solution. Key components are detailed below.

\subsubsection{Scalability with High Dimensions and Many Marginals}

Both Markovian and reciprocal projections are implemented in a fully vectorized manner. Instead of looping over intervals, all pairs are aggregated into global vectors and processed simultaneously on GPU.  

At iteration $n$, for interval $[t_i,t_{i+1}]$, pairs are sampled as  
\[
z_i \sim (M^n)_{t_i},\quad z_{i+1}\sim \mu_{i+1}
\quad\text{(forward)},\qquad
z_{i+1}\sim (M^n)_{t_{i+1}},\quad z_i\sim \mu_i
\quad\text{(backward)}.
\]  
Pairs from all intervals form two batched vectors $(Z_{\text{init}},Z_{\text{final}})$. Each bridge is then simulated in parallel as  
\[
X^{(b)}_t = (1-s)\,z^{(b)}_{\text{init}} + s\,z^{(b)}_{\text{final}}
+ \sigma_t \sqrt{s(1-s)}\,\xi^{(b)},\qquad \xi^{(b)}\sim\mathcal{N}(0,I).
\]  
This parallelization makes multi-marginal training feasible at scale.

\subsubsection{Masking and Time Discretization}
The horizon $[0,T]$ is discretized into $N_{\text{total}}$ steps, allocated proportionally to interval length:  
\[
N_i=\Big\lfloor N_{\text{total}}\tfrac{t_{i+1}-t_i}{T}\Big\rfloor,
\quad
dt_i=\pm\tfrac{\Delta\tau}{t_{i+1}-t_i},\;
\Delta\tau=\tfrac{T_{\max}-T_{\min}}{N_{\text{total}}}.
\]  
This ensures consistent integration with bounded cost.

Since $N_i$ varies across intervals, all trajectories are embedded into a common tensor of shape \texttt{(num\_bridges,max\_N)} with binary masks:  
\[
z^{(b)}_{k+1}=z^{(b)}_k+ v(z^{(b)}_k,t^{(b)}_k)\,dt^{(b)}
+\sigma_{t^{(b)}_k}\sqrt{dt^{(b)}}\,\xi^{(b)},
\]
updated only where \texttt{mask}=1. This allows heterogeneous bridges to evolve in a single GPU loop.

\subsubsection{Interpolation Operator and Losses}\label{subsec:interp_losses}

For each bridge $(t_i,t_{i+1})$ and batch $B$, define  
\[
\mathbf{s} = \frac{\mathbf{t}-t_{\text{init}}}{t_{\text{final}}-t_{\text{init}}} \in [0,1]^B.
\]  
Then the interpolation is  
\begin{equation}
\label{eq:interp}
\mathrm{Interp}_{\mathbf{t}}(X_{\text{init}},X_{\text{final}},Z)
= (1-\mathbf{s})\odot X_{\text{init}}
+ \mathbf{s}\odot X_{\text{final}}
+ \sqrt{\varepsilon (1-\mathbf{s})\odot \mathbf{s}} \odot Z,
\end{equation}
with $\odot$ the elementwise product.

We also define a generic simulation operator for SDEs.  
Given an initial condition $X_{\text{init}}$ and a drift $v_{\text{direction}}$ (either forward or backward), we denote
\begin{equation}
\label{eq:sim_sde}
\mathrm{SDE}(X_{\text{init}}, v_{\text{direction}}): 
\quad dX_t = v_{\text{direction}}(t,X_t)\,dt + \sigma_t\, dB_t, 
\qquad X_{t_{\text{init}}} = X_{\text{init}}.
\end{equation}
This operator returns a trajectory $(X_t)_{t\in[t_{\text{init}},t_{\text{final}}]}$.

Forward/backward losses enforce vectorized drift consistency:
\begin{equation}\label{eq:loss_fwd}
\ell^{\text{fwd}}(\theta;\mathbf{t},X_{\text{final}},X_t)
=\tfrac{1}{B}\big\|v_\theta(\mathbf{t},X_t)
-\tfrac{X_{\text{final}}-X_t}{t_{\text{final}}-\mathbf{t}}\big\|^2
\end{equation}
\begin{equation}\label{eq:loss_bwd}
\ell^{\text{bwd}}(\phi;\mathbf{t},X_{\text{init}},X_t)
=\tfrac{1}{B}\big\|v_\phi(\mathbf{t},X_t)
-\tfrac{X_{\text{init}}-X_t}{\mathbf{t}-t_{\text{init}}}\big\|^2
\end{equation}

\subsubsection{Time-Dependent Drift Networks}

The drifts $v_\theta,v_\phi$ are parameterized by networks with explicit time encodings (sinusoidal, Gaussian Fourier, FiLM). This enables (i) generalization across intervals through parallel training, and (ii) sensitivity to local temporal position, ensuring bridge consistency and global coherence.

\pagebreak
\subsection{Experiments details}
The Adam \citep{kingma2017adammethodstochasticoptimization} or AdamW \citep{loshchilov2019decoupledweightdecayregularization} optimizer is used throughout experiments. Unless stated otherwise, we employ a learning rate of $2*10^{-4}$, and SiLU activations are applied on each layer.

\subsubsection[Exact OT between Gaussian mixtures]{%
  Exact OT between Gaussian mixtures%
  \texorpdfstring{\hspace{0.4em}\protect\hyperref[sec:exact_ot]{$\upuparrows$}}{}%
}
In Figure~\ref{fig:pot_exact_ot} we can see the (\textit{exact}) "glued" OT plan empirically computed with \href{https://pythonot.github.io/}{POT}. Observe how the global trajectory transports each Gaussian component of the mixture to a single other Gaussian component of the next marginal, yielding paths without any crossing. Note that the \textit{true} \textit{multi-marginal} transport plan remains unknown even in this simple Gaussian mixture setting.
\begin{figure}[h]
    \centering
    \includegraphics[width=0.4\linewidth]{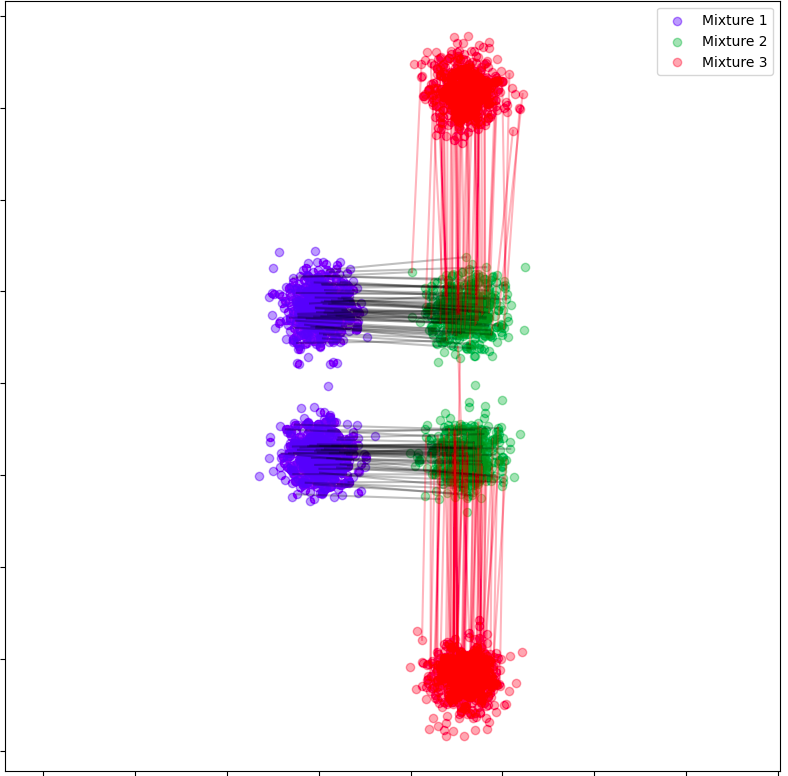}
    \caption{\small Here we computed the OT plan between each pair of adjacent marginals empirically, in red and black lines. This plan can serve as a good proxy for the true multi-marginal plan.}
    \label{fig:pot_exact_ot}
\end{figure}

\subsubsection[8Gaussians and Moons experiment]{%
  8Gaussians and Moons experiment%
  \texorpdfstring{\hspace{0.4em}\protect\hyperref[sec:moons_8gauss]{$\upuparrows$}}{}%
}
We used the same experimental setting as \citep{shi2023diffusionschrodingerbridgematching}, except that we increase the batch size proportionally to the number of intermediate bridges. The 2-Wasserstein distance are computed with \texttt{pot} and the integrated path energy are computed with $\mathbb{E}\left[\int_{0}^{T} \| v(t, \mathbf{Z}_t) \|^2 \, dt \right]$ where $Z_t$ is the process simulated along the ODE drift \ref{eq:ODE_drift}.
\begin{figure}[h]
    \centering
    \includegraphics[width=0.7\linewidth]{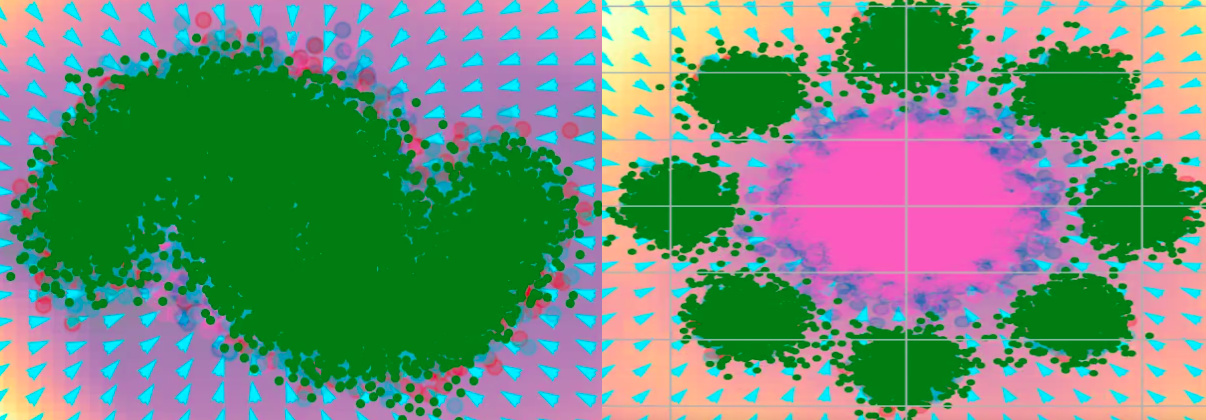}
    \caption{\small Third marginal fitting for the moons and 8-Gaussian trajectories. Blue vectors indicate the drift direction, with gradient intensity showing vector field strength; green points denote moving samples, and pink highlights the Gaussian fitted along the trajectory.}
    \label{fig:moons_8gaussian}
\end{figure}

\subsubsection[$50d$ Gaussian experiments]{%
  $50d$ Gaussian experiments%
  \texorpdfstring{\hspace{0.4em}\protect\hyperref[sec:50d_gauss]{$\upuparrows$}}{}%
}
On an NVIDIA A100 GPU, the full training took approximately 300 minutes for 30 outer iterations, each with 10,000 training steps and 20 diffusion steps per bridge.

\subsubsection[$100d$ transcriptomic experiments: Embryoid Body]{%
  $100d$ transcriptomic experiments: Embryoid Body%
  \texorpdfstring{\hspace{0.4em}\protect\hyperref[sec:100_transcriptomic]{$\upuparrows$}}{}%
}
The dataset comprises 5 timepoints (each of them being the aggregation of 2 days, from day 0 to day 24) covering the progression from a homogeneous stem-cell population toward mesoderm, endoderm, and ectoderm precursors. The Embryoid Body dataset thus constitutes a realistic and challenging testbed for Schrödinger Bridge methods, combining high dimensionality, non-Gaussian distributions, and branching lineages. We preprocessed the data following \citet{tong2020trajectorynetdynamicoptimaltransport}.

During MMtSBM training all timepoints are standardized (zero mean and unit variance). We then used two distinct settings to fairly evaluate our method: in the first one we compute metrics in the normalized space, while in the second one we denormalize the data before computing metrics in order to create a comparable setting to \citet{theodoropoulos2025momentummultimarginalschrodingerbridge}. In both cases 1000 samples were withheld from each timepoint to form a test set used for evaluating the Maximum Mean Discrepancy (MMD) and the Sliced Wasserstein Distance (SWD) between test set and generated samples. Additionally, in Table~\ref{tab:embryoid-benchmark-unnorm}, odd-indexed timepoints are fully removed from the training data.

We trained a network of about $300k$ parameters for 20 outer iterations with 20,000 inner iterations.

We show in Table \ref{tab:times} the performance advantage of our method compared to an iterative algorithm such as \citet{chen2023deepmomentummultimarginalschrodinger}.

\begin{table}[h]
    \centering
    \caption{Training and sampling times for \citet{chen2023deepmomentummultimarginalschrodinger} and MMtSBM (ours) in dimension 100.}
    \begin{tabular}{l|cc|cc}
    \hline
    & \multicolumn{2}{c|}{DSBM \citep{chen2023deepmomentummultimarginalschrodinger}} & \multicolumn{2}{c}{MMtSBM (ours)} \\
    number of marginals & 4 & 5 & 4 & 5 \\
    \hline
    Train time & 33min & 44 min & 20 min & 32 min \\
    Sampling time & 2.00s & 2.02s & 2.00 s & 2.00 s \\
    \hline
    \end{tabular}
    \label{tab:times}
\end{table}

\subsubsection[$100d$ transcriptomic experiments: MULTI]{%
  $100d$ transcriptomic experiments: MULTI%
  \texorpdfstring{\hspace{0.4em}\protect\hyperref[sec:100_transcriptomic]{$\upuparrows$}}{}%
}
\label{sec:multi_annex}
We reused the preprocessed data from \cite{tong2024simulationfreeschrodingerbridgesscore}. We do not whiten it. We conducted a minimal sweep to select the best $\sigma$ ($0.3$). The network is a simple $3$-layers MLP with around $500k$ parameters and we employ $150$ discretization time steps in total. Metrics are computed over $1k$ true test samples vs $1k$ generated samples, where these generations themselves come from the previous test marginal ($\mu_{i-1}^\text{test}$ if $i$ is the left-out time). We trained $3$ models with different seeds for each left-out time (either $t=1$ or $t=2$, corresponding to days $3$ and $4$). Our reported standard deviation is the pooled variance of the best same-hyperparameters $\sigma=0.3$ models over $2$ groups, each group corresponding to a left-out time. Other papers seem to have reported the overall variance, which we think makes less sense given the structure of the problem.
\begin{table}[h]
    \centering
    \small
    \caption{\small Per-group statistics with pooled standard deviation $s_{\text{pooled}} = \sqrt{\sum (n_i - 1) s_i^2 \;/\; \sum (n_i - 1)}$, where $n_i$ and $s_i$ are the sample size and standard deviation of each group.}
    \begin{tabular}{lccc}
        Group          & Number of runs & Mean   & Std \\
        \hline
        Leave-out \& test $t=1$ & 3     & 37.026 & 0.822 \\
        Leave-out \& test $t=2$ & 3     & 52.059 & 0.367 \\
        \hline
        Global        & 6               & 44.542 & 0.637 \\
    \end{tabular}
    \label{tab:task_stats}
\end{table}

About other methods reported in Table \ref{tab:multi-benchmark}: only I-CFM, I-MFM$_{\textrm{RBF}}$, and MMtSBM (ours) do not rely on a precomputed OT plan, be it exact or approximate. GAGA \citep{sun2025geometryawaregenerativeautoencoderswarped} performs \emph{interpolation between 2 true pinned endpoints} in the latent space of a metric-aware autoencoder trained with the true exact OT plan; we thus still claim SOTA, either within methods solving the SB, or within methods doing "pure" trajectory \emph{inference} (without a pinned true endpoint).

\subsubsection[Biotine cell culture]{%
  Biotine cell culture%
  \texorpdfstring{\hspace{0.4em}\protect\hyperref[sec:biotine]{$\upuparrows$}}{}%
}
We perform learning directly in image space at $3 \times 128 \times 128$ definition with 3M parameters UNets. We also experimented with learning in a VAE latent space but produced images were more blurry.

\begin{table}[h!]
    \centering
    \caption{Training and sampling statistics for video generation on the Biotine dataset.}
    \begin{tabular}{lr}
    \hline
    Dataset & Biotine \\
    Dimension & $128 \times 128 \times 3 = 49,152$ \\
    Number of marginals & 7 \\
    Training time & 5 h \\
    Number of epochs & 5 \\
    Sampling time & 32 s \\
    Generated frames & 602 \\
    \hline
    \end{tabular}
\end{table}

The model trains in only 5 hours and subsequently generates an entire 602-frame trajectory in just 32 seconds, demonstrating both low training cost and highly efficient sampling.

\subsubsection[KTH Actions]{%
  KTH Actions%
  \texorpdfstring{\hspace{0.4em}\protect\hyperref[sec:kth]{$\upuparrows$}}{}%
}
We perform learning directly in image space at $1\times 160 \times 120$ definition with 14M parameters purely convolutional UNets. For training we use the AdamW optimizer, $100,000$ warmup steps, a noise scale $\sigma=0.5$, a learning rate of $2e-4$, $10,000$ inner steps and $30$ outer iterations. Videos are generated with $400$ steps.

\subsection{Proofs}
\label{sec:proofs}

\subsubsection{Definition~\ref{def:factorized_reciprocal_class}}

\begin{proof}[Proof of variational proposition in Definition~\ref{def:factorized_reciprocal_class} (variational characterization)]
By the additive property of the KL divergence 
\citep{leonard2014survey}, for any $P \in \mathcal P(C([0,T],\mathbb R^d))$ 
and $\Pi \in \mathcal R^\otimes(\mathbb{Q})$, we can write
\[
KL(P \,\|\, \Pi) 
= KL(P_{t_0,\dots,t_K} \,\|\, \Pi_{t_0,\dots,t_K}) 
+ \mathbb{E}_{P_{t_0,\dots,t_K}}
  \Big[ KL\!\Big(P^{x_0,\dots,x_K}_{[0,T]} \,\|\, 
                 \bigotimes_{i=0}^{K-1} 
                 \mathbb{Q}^{x_i,x_{i+1}}_{[t_i,t_{i+1}]}\Big)\Big],
\]
where $P^{x_0,\dots,x_K}_{[0,T]}$ denotes the conditional law of $P$ given its 
values at the grid points $(t_0,\dots,t_K)$.  

Restricting to $\Pi$ such that $\Pi_{t_0,\dots,t_K} = P_{t_0,\dots,t_K}$ cancels 
the first KL term, and then the minimizer is uniquely obtained by replacing 
the conditional path law of $P$ with the tensor product of $Q$-bridges between 
each $(x_i,x_{i+1})$.  

Hence the optimal projection is
\[
\Pi^\star = P_{t_0,\dots,t_K} \;\bigotimes_{i=0}^{K-1} 
\mathbb{Q}^{x_i,x_{i+1}}_{[t_i,t_{i+1}]},
\]
which is exactly the definition of the factorized reciprocal projection. 
\end{proof}

\subsubsection{Definition~\ref{def:markov_projection_factorized}}

\begin{proof}[Proof of proposition in the Definition~\ref{def:markov_projection_factorized} in the Brownian case]
By Definition~\ref{def:markov_projection_factorized}, the effective drift is
\[
v_t^\star(x) 
= \sigma_t^2 \, \mathbb{E}_{\Pi_{t_{i+1}\mid t}}
\Big[ \nabla \log Q^{\,|t_i,t_{i+1}}_{t}(X_{t_{i+1}} \mid X_t) 
\,\Big|\, X_t = x \Big].
\]
For a Brownian reference process, the transition kernel is Gaussian,
\[
Q^{\,|t_i,t_{i+1}}_{t}(y \mid x)
= \frac{1}{(2\pi\sigma^2 (t_{i+1}-t))^{d/2}}
\exp\!\left( -\tfrac{\|y-x\|^2}{2\sigma^2 (t_{i+1}-t)} \right),
\]
so that
\[
\nabla_x \log Q^{\,|t_i,t_{i+1}}_{t}(y \mid x) 
= \frac{y-x}{\sigma^2 (t_{i+1}-t)}.
\]
Plugging this into the definition yields
\[
v_t^\star(x) 
= \sigma_t^2 \, \mathbb{E}\!\left[ \tfrac{X_{t_{i+1}} - x}{\sigma^2 (t_{i+1}-t)} 
\,\Big|\, X_t = x \right].
\]
In the Brownian case $\sigma_t^2=\sigma^2$, which simplifies to
\[
v_t^\star(x) 
= \frac{\mathbb{E}[X_{t_{i+1}} \mid X_t = x] - x}{t_{i+1}-t},
\]
as claimed.
\end{proof}

\subsubsection{Proposition~\ref{prop:existence_uniqueness}}

\begin{proof}[Proof of Proposition~\ref{prop:existence_uniqueness}]
The feasible set
\[
\mathcal A = \{ P : P \ll Q, \; P_{t_i} = \mu_{t_i}, \; i=0,\dots,K \}
\]
is convex. 
Since the functional $P \mapsto \KL(P\|Q)$ is strictly convex, there is at most one minimizer.

To show existence, observe that $\mathcal A$ is non-empty. 
Indeed, consider any coupling $\gamma$ of $(\mu_{t_0},\dots,\mu_{t_K})$. 
For each pair $(x_i,x_{i+1})$, let $Q^{x_i,x_{i+1}}_{[t_i,t_{i+1}]}$ denote the Brownian bridge of $Q$
conditioned on $X_{t_i}=x_i$ and $X_{t_{i+1}}=x_{i+1}$. 
Then the measure
\[
P = \int \bigotimes_{i=0}^{K-1} Q^{x_i,x_{i+1}}_{[t_i,t_{i+1}]} \; d\gamma(x_0,\dots,x_K)
\]
belongs to $\mathcal A$. Hence the admissible set is non-empty.

Therefore, (MMSB) admits a unique solution $P^\star$.
\end{proof}

\subsubsection{Proposition~\ref{prop:dyn_static_equiv}}

\begin{proof}[Proof of Proposition~\ref{prop:dyn_static_equiv}]\label{app:proof_dyn_static_equiv}
The argument is identical to Proposition~2.10 in \citet{leonard2014survey}, extended 
to the multi-marginal setting. For any admissible path measure $P \ll Q$, 
the additivity property of the relative entropy gives
\[
KL(P\,\|\,Q) = KL(P_{t_0,\dots,t_K}\,\|\,Q_{t_0,\dots,t_K})
+ \mathbb E_{P_{t_0,\dots,t_K}}\!\left[
KL\!\left(P(\cdot \mid X_{t_0},\dots,X_{t_K}) \,\|\, Q(\cdot \mid X_{t_0},\dots,X_{t_K})\right)
\right].
\]
Since the second term is always nonnegative, minimizing the dynamic problem 
is equivalent to minimizing the static one. Moreover, the inequality becomes 
an equality if and only if 
\[
P(\cdot \mid X_{t_0},\dots,X_{t_K}) = Q(\cdot \mid X_{t_0},\dots,X_{t_K}),
\quad P_{t_0,\dots,t_K}\text{-a.s.}
\]
Hence the optimal dynamic solution $P^\star$ is uniquely obtained from the 
optimal static solution $\pi^\star$ by gluing the conditional bridges of $Q$, 
which establishes the equivalence.
\end{proof}

\subsubsection{Proposition~\ref{prop:markovianity}}

\begin{proof}[Proof of Proposition~\ref{prop:markovianity}]\label{app:proof_markovianity}

We follow the argument of \citep[Prop.~2.10]{leonard2014survey}.  
Fix an intermediate time $t_k$ with $0<k<n$.  
For any $Q\in\mathcal P(\Omega)$ and $z\in X$, set
\[
Q^{t_k,z}_{[0,t_k]} := Q(X_{[0,t_k]}\in\cdot \mid X_{t_k}=z),\qquad 
Q^{t_k,z}_{[t_k,1]} := Q(X_{[t_k,1]}\in\cdot \mid X_{t_k}=z).
\]
Let $\mu\in\mathcal P(X)$ and for each $z\in X$ prescribe 
$Q^{<}_{z}\in\mathcal P(\Omega_{[0,t_k]}\cap\{X_{t_k}=z\})$, 
$Q^{>}_{z}\in\mathcal P(\Omega_{[t_k,1]}\cap\{X_{t_k}=z\})$.  
By the entropy additivity property (see formula (A.8) in \citet{leonard2014survey}),  
the measure
\[
P^* = \int_X Q^{<}_{z}\otimes Q^{>}_{z}\, \mu(dz)
\]
is the unique minimizer of $H(\cdot\mid R)$ under these constraints, and it satisfies
\[
P^*_{[t_k,1]}(\cdot \mid X_{[0,t_k]}) = P^*_{[t_k,1]}(\cdot \mid X_{t_k}).
\]
This is exactly the Markov property at time $t_k$.

Now apply this to $Q=\widehat P$, the solution of the multi-marginal Schrödinger problem.  
If $\widehat P$ were not Markov, one could construct a measure $P^*$ with the same 
time-marginal constraints but strictly smaller entropy, a contradiction with the definition of a minimizer.  
Since $t_k$ was arbitrary, $\widehat P$ must be Markov at all grid times 
$t_0,\dots,t_n$, hence Markov on $[0,1]$.
\end{proof}

\subsubsection{Proposition~\ref{prop:form_dynamic}}

\begin{proof}[Proof of Proposition~\ref{prop:form_dynamic}]
The argument is a direct extension of Theorem~2.8 and Proposition~2.10 
in \citet{leonard2014survey}.\label{app:proof_form_dynamic} 

Assume that the reference law $Q_{t_0,\dots,t_K}$ satisfies the usual 
regularity conditions: (i) each one-time marginal coincides with a reference 
measure $m$; (ii) there exists a nonnegative function $A$ such that 
\[
Q_{t_0,\dots,t_K}(dx_0,\dots,dx_K) 
\;\ge\; \exp\!\Big(-\sum_{i=0}^K A(x_i)\Big)\, m(dx_0)\cdots m(dx_K);
\]
(iii) there exists $B$ such that 
\[
\int_{\mathcal X^{K+1}} \exp\!\Big(-\sum_{i=0}^K B(x_i)\Big)\, 
Q_{t_0,\dots,t_K}(dx_0,\dots,dx_K) < \infty;
\]
(iv) either $m^{\otimes(K+1)} \ll Q_{t_0,\dots,t_K}$ or the converse holds. 
Suppose further that the prescribed marginals $(\pi_{t_0},\dots,\pi_{t_K})$ 
satisfy $H(\pi_{t_i}\,|\,m) < \infty$, 
\[
\sum_{i=0}^K \int (A+B)(x)\, d\pi_{t_i}(x) < \infty,
\]
and that they are internal in the sense of Proposition~2.6 of 
\citep{leonard2014survey}. 

Under these assumptions, the dual problem is well posed. Introducing 
Lagrange multipliers $(\varphi_i)_{i=0}^K$ for the marginal constraints, 
convex duality shows that the minimizer $\pi^\star$ of the static problem is 
absolutely continuous with respect to $Q_{t_0,\dots,t_K}$ with density
\[
\frac{d\pi^\star}{dQ_{t_0,\dots,t_K}}(x_0,\dots,x_K) 
= \exp\!\Big(\sum_{i=0}^K \varphi_i(x_i)\Big).
\]
Defining $f_i(x_i) := e^{\varphi_i(x_i)}$ yields the factorized form
\[
\frac{d\pi^\star}{dQ_{t_0,\dots,t_K}}(x_0,\dots,x_K) 
= \prod_{i=0}^K f_i(x_i).
\]
\end{proof}

\subsubsection{Proposition~\ref{prop:var_proj_factorized}}
\begin{proof}[Proof of Proposition~\ref{prop:var_proj_factorized}]
The argument is the same as in the two-marginal case 
(\citealp{shi2023diffusionschrodingerbridgematching}, Prop.~2), except that all
computations must now be performed interval by interval along the grid
$t_0<\dots<t_K$.
Under Assumptions~A1--A3, the Doob--$h$ transform is well-defined on each interval
$[t_i,t_{i+1}]$ and Lemma~11 of 
\cite{shi2023diffusionschrodingerbridgematching} applies verbatim.  
The only change is that the terminal conditioning in the backward equation is at 
$t_{i+1}$ instead of $T$.  
This yields the drift
\[
v_t^\Pi(x)
=
\sigma_t^2\,
\mathbb{E}_\Pi\!\left[
\nabla \log Q^{\,t_i,t_{i+1}}_{t_{i+1}\mid t}
   (X_{t_{i+1}}\mid X_t)
\,\big|\, X_{t_i}, X_t 
\right],
\qquad t\in[t_i,t_{i+1}].
\]
Hence the dynamics of~$\Pi$ is piecewise independent: its increment on
$[t_i,t_{i+1}]$ depends only on the local bridge~$Q^{t_i,t_{i+1}}$.

The same interval-wise independence holds for any Markov $M\in\mathcal M$,
whose SDE also factorizes on the grid.  
Thus both $\Pi$ and $M$ have product decompositions over the intervals, and their 
Radon--Nikodym derivative factorizes multiplicatively,
\[
\frac{d\Pi}{dM}
=
\prod_{i=0}^{K-1}
\frac{d\Pi^{(i)}}{dM^{(i)}}.
\]
Taking logarithms and integrating with respect to $\Pi$ gives
the additivity of the relative entropy,
\[
KL(\Pi\|M)
=
\sum_{i=0}^{K-1}
KL(\Pi^{(i)}\|M^{(i)}).
\]

For each interval $[t_i,t_{i+1}]$, using the conditional expectation identity
as in the proof of \cite{shi2023diffusionschrodingerbridgematching}, we have
for every $t\in[t_i,t_{i+1}]$,
\[
\mathbb{E}_{\Pi_{t_{i},t}}
\!\left[
\big\|
 \sigma_t^2 \,
 \mathbb{E}_{\Pi_{t_{i+1}\mid t_i,t}}
   [\nabla\log Q^{\,t_i,t_{i+1}}_{t_{i+1}\mid t}(X_{t_{i+1}}\mid X_t)
    \mid X_t, X_{t_i}]
 - v_t(X_t)
\big\|^2
\right]
\]
\[
\ge 
\mathbb{E}_{\Pi_{t_{i},t}}
\!\left[
\big\|
 \sigma_t^2 
 \mathbb{E}_{\Pi_{t_{i+1}\mid t}}
   [\nabla\log Q^{\,t_i,t_{i+1}}_{t_{i+1}\mid t}(X_{t_{i+1}}\mid X_t)
    \mid X_t, X_{t_i}]
 - v_t^\star(X_t)
\big\|^2
\right],
\]
where the optimal drift is defined by the orthogonal projection
\[
v_t^\star(x)
=
\sigma_t^2\,
\mathbb{E}_{\Pi_{t_{i+1}\mid t}}
\!\left[
\nabla\log Q^{\,t_i,t_{i+1}}_{t_{i+1}\mid t}(X_{t_{i+1}}\mid X_t)
\mid X_t=x_t
\right].
\]

Using \cite{leonard2012}, Theorem~2.3 on each interval and summing the contributions gives
\[
KL(\Pi\|M^\star)
=
\frac12
\sum_{i=0}^{K-1}
\int_{t_i}^{t_{i+1}}
\mathbb{E}_{\Pi_t}
\!\left[
\|v_t^\Pi(X_t)-v_t^\star(X_t)\|^2/\sigma_t^2
\right] dt.
\]
Finally, the same Fokker--Planck uniqueness argument as in
\cite{shi2023diffusionschrodingerbridgematching} ensures that 
$M^\star_t=\Pi_t$ for all $t\in[t_i,t_{i+1}]$ and all $i$.  
Since the grid points are included, this implies $M^\star=\Pi$, which concludes the
proof.
\end{proof}

\subsubsection{Lemma~\ref{lem:pythagoras_factorized}}

\begin{proof}[Proof of Lemma~\ref{lem:pythagoras_factorized}]
For the Markovian part, the equality follows analogously to the proof of \citep{shi2023diffusionschrodingerbridgematching}.

For each interval $[t_i,t_{i+1}]$, the same quadratic expansion gives
\[
2\,KL\!\left(\Pi^{(i)}\,\middle\|\,M^{(i)}\right)
=
2\,KL\!\left(\Pi^{(i)}\,\middle\|\,\proj_{\mathcal M}(\Pi)^{(i)}\right)
+
2\,KL\!\left(\proj_{\mathcal M}(\Pi)^{(i)}\,\middle\|\,M^{(i)}\right).
\]
Summing this identity over $i=0,\dots,K-1$, using the interval-wise independence, yields

\[
2\,KL(\Pi\|M)
=
2\,KL\!\left(\Pi\,\middle\|\,\proj_{\mathcal M}(\Pi)\right)
+
2\,KL\!\left(\proj_{\mathcal M}(\Pi)\,\middle\|\,M\right),
\]
which is the desired result.

For the factorized reciprocal part : 

Let $\Pi \in \mathcal{R}^\otimes(Q)$ and denote by 
\[
\Pi^\star = \proj_{\mathcal{R}^\otimes(Q)}(\mathbb{P})
= \mathbb{P}_{t_0,\dots,t_K}\,\otimes_{i=0}^{K-1} Q^{x_i,x_{i+1}}_{[t_i,t_{i+1}]}.
\]
We have the Radon--Nikodym factorization
\[
\frac{d\mathbb{P}}{d\Pi} 
= \frac{d\mathbb{P}}{d\Pi^\star} \cdot \frac{d\Pi^\star}{d\Pi}(X_{t_0},\dots,X_{t_K}).
\]
By integrating w.r.t.\ $\mathbb{P}$ and applying Csiszár's Pythagorean identity 
\citep[Eq.~2.6]{Csiszar1975}, we obtain
\[
KL(\mathbb{P}\|\Pi) 
= KL(\mathbb{P}\|\Pi^\star) 
+ \int \log \frac{d\Pi^\star}{d\Pi}(x_0,\dots,x_K)\, d\mathbb{P}_{t_0,\dots,t_K}.
\]
Since $\mathbb{P}_{t_0,\dots,t_K} = \Pi^\star_{t_0,\dots,t_K}$, the second term equals 
\[
\int \log \frac{d\Pi^\star}{d\Pi}(x_0,\dots,x_K)\, d\Pi^\star_{t_0,\dots,t_K}
= KL(\Pi^\star\|\Pi).
\]
Thus
\[
KL(\mathbb{P}\|\Pi) 
= KL(\mathbb{P}\|\Pi^\star) + KL(\Pi^\star\|\Pi),
\]
which concludes the proof.
\end{proof}

\subsubsection{Proposition~\ref{prop:reverse_sde}}

\begin{proof}[Proof of Proposition~\ref{prop:reverse_sde}]
\label{proof:proff_reverse_sde}
It follows from the fact that the time-reversal map 
$\mathcal{T}:\Omega \to \Omega$ is a bijection, and by reversibility of the 
reference process $\mathbb{Q}$ we have, for any probability measure 
$\mathbb{P}\in\mathcal{P}(C)$,
\[
KL(\mathbb{P}\,\|\,\mathbb{Q})
= KL(\mathbb{P}\circ \mathcal{T} \,\|\, \mathbb{Q}\circ \mathcal{T})
= KL(\mathbb{P}\circ \mathcal{T} \,\|\, \mathbb{Q}).
\]

To prove the direction “$\Longrightarrow$”, assume 
$\mathbb{P}\in\mathcal{R}^\otimes(\mathbb{Q})$ is the minimizer of the 
forward problem. Then, for any $\Pi\in \mathcal{R}^\otimes(\mathbb{Q})$ 
we have $\Pi\circ \mathcal{T} \in \mathcal{R}^\otimes(\mathbb{Q})$, and
\[
KL(\Pi \,\|\, \mathbb{Q}) 
= KL(\Pi\circ \mathcal{T} \,\|\, \mathbb{Q}\circ \mathcal{T})
\;\;\geq\;\; KL(\mathbb{P}\circ \mathcal{T} \,\|\, \mathbb{Q}\circ \mathcal{T})
= KL(\mathbb{P}\,\|\, \mathbb{Q}).
\]

The reverse direction follows by symmetry, replacing 
$\mathbb{P}$ with $\mathbb{P}\circ \mathcal{T}$. 
Thus, working with forward or backward processes is equivalent up to 
the bijection $\mathcal{T}$, and the KL minimization problem is unchanged. 
In particular, this justifies that alternating forward and backward 
steps in the IMFF algorithm is well-defined and analogous to IPF.
\end{proof}

\subsubsection{Proposition~\ref{prop:convergence_projection}}

\begin{proof}[Proof of Proposition~\ref{prop:convergence_projection}, first claim]
As a reminder, we follow the same argument as in 
\citep{shi2023diffusionschrodingerbridgematching} and 
\citep{debortoli2021diffusionschrodingerbridgeapplications}. 
Applying Lemma~\ref{lem:pythagoras_factorized}, for any $N \in \mathbb{N}$ we obtain
\[
KL(\mathbb{P}^0 \,\|\, \mathbb{P}^\star) 
= KL(\mathbb{P}^0 \,\|\, \mathbb{P}^1) + KL(\mathbb{P}^1 \,\|\, \mathbb{P}^2) + \dots 
+ KL(\mathbb{P}^N \,\|\, \mathbb{P}^\star).
\]
Since each term is nonnegative, we deduce the monotonicity
\[
KL(\mathbb{P}^{n+1} \,\|\, \mathbb{P}^\star) \;\leq\; KL(\mathbb{P}^n \,\|\, \mathbb{P}^\star),
\]
and boundedness 
\(
KL(\mathbb{P}^n \,\|\, \mathbb{P}^\star) \leq KL(\mathbb{P}^0 \,\|\, \mathbb{P}^\star) < \infty.
\) 
This proves the first claim.
\end{proof}

\begin{proof}[Proof of Proposition~\ref{prop:convergence_projection}, second claim]
We proceed by induction, adapting the argument of 
\citep[Appendix C.8]{debortoli2021diffusionschrodingerbridgeapplications}.  

At initialization, we choose $\mathbb{P}^0 \in \mathcal{R}^\otimes(\mathbb{Q})$ with 
$\mathbb{P}^0_{t_i} = \mu_{t_i}$ for all $i$.  
We also define $M^0 = \proj_{\mathcal{M}}(\mathbb{P}^0)$.  

By construction (Algorithm~\ref{alg:IMFF}), the IMFF sequence alternates:
\[
\mathbb{P}^{2n+1} = \proj_{\mathcal{M}}(\mathbb{P}^{2n}), 
\qquad 
\mathbb{P}^{2n+2} = \proj_{\mathcal{R}^\otimes(\mathbb{Q})}(\mathbb{P}^{2n+1}).
\]

Suppose now that $\mathbb{P}^{2n}$ satisfies the claim.  
By definition, $\mathbb{P}^{2n+1} \in \mathcal{M}$ and 
$\mathbb{P}^{2n+2} \in \mathcal{R}^\otimes(\mathbb{Q})$.  
From Lemma~\ref{lem:pythagoras_factorized}, we then have
\[
KL(\mathbb{P}^{2n+1}\,\|\,P^\star) \leq KL(\mathbb{P}^{2n}\,\|\,P^\star), 
\qquad 
KL(\mathbb{P}^{2n+2}\,\|\,P^\star) \leq KL(\mathbb{P}^{2n+1}\,\|\,P^\star).
\]

Hence, $(KL(\mathbb{P}^n \,\|\, P^\star))_{n \in \mathbb{N}}$ is a nonincreasing sequence bounded below by $0$, and is therefore convergent.  
Moreover, by induction we have $\mathbb{P}^n \in \mathcal{M} \cap \mathcal{R}^\otimes(\mathbb{Q})$ for all $n$, so the limit must coincide with $P^\star$, the unique measure in this intersection with prescribed marginals.  

Finally, note that in Algorithm~\ref{alg:IMFF} the forward and backward Markovian steps are time-reversals of each other (they follow the same law under the change of variable $t \mapsto T-t$).  
Therefore, alternating a backward step with a forward reciprocal projection, or a forward step with a backward reciprocal projection, is equivalent from the viewpoint of convergence analysis.  
All the arguments above apply symmetrically in both directions, and the resulting sequence $(\mathbb{P}^n)_{n\in\mathbb{N}}$ still converges.  

We conclude that
\[
\lim_{n \to \infty} KL(\mathbb{P}^n \,\|\, P^\star) = 0,
\]
and $P^\star$ is indeed the weak solution produced by the IMFF algorithm, proving the second claim.
\end{proof}

\subsubsection{Theorem~\ref{thm:imff_convergence}}

\begin{proof}[Proof of Theorem~\ref{thm:imff_convergence}]
As a reminder, the argument is the same as in 
\citep{shi2023diffusionschrodingerbridgematching} and 
\citep{debortoli2021diffusionschrodingerbridgeapplications}, 
but adapted to the multi-marginal setting. 

By Proposition~\ref{prop:convergence_projection}, the sequence 
$(\mathbb{P}^n)_{n \in \mathbb{N}}$ is bounded in KL divergence with respect to 
$\mathbb{P}^\star$, hence relatively compact under weak convergence. 
Thus, it admits a subsequence $(\mathbb{P}^{n_j})_j$ converging weakly to some 
limit $\mathbb{P}^\infty$. By construction, $\mathbb{P}^\infty \in \mathcal{M} \cap \mathcal{R}^\otimes(Q)$ 
and matches the marginals $(\mu_{t_i})_{i=0}^K$, so by uniqueness of the weak MMSB solution 
we must have $\mathbb{P}^\infty = \mathbb{P}^\star$. 

By lower semicontinuity of KL, this implies
\[
\lim_{n \to \infty} KL(\mathbb{P}^n \,\|\, \mathbb{P}^\star) = 0.
\]

Finally, the inequality
\[
KL(\mathbb{P}^{\mathrm{MMSB}} \,\|\, Q) 
\;\leq\; KL(\mathbb{P}^\star \,\|\, Q) 
\;\leq\; KL(\mathbb{P}^{\mathrm{pair}} \,\|\, Q)
\]
is justified because $\mathbb{P}^{\mathrm{MMSB}}$ is the global minimizer (hence gives the smallest KL), 
while $\mathbb{P}^\star$ is the best Markovian candidate in 
$\mathcal{M} \cap \mathcal{R}^\otimes(Q)$, and therefore lies below the pairwise construction 
obtained by gluing local bridges. 
\end{proof}

\end{document}